﻿﻿﻿﻿﻿

\documentclass[a4paper,fleqn]{cas-sc}

\usepackage[T1]{fontenc}
\usepackage[numbers,sort&compress]{natbib}
\usepackage{amsthm}
\usepackage{mathtools}
\usepackage[normalem]{ulem}
\usepackage{capt-of}
\usepackage{placeins}
\usepackage{tabularx}

\ExplSyntaxOn
\AddToHook{env/figure/begin}{\skip_set:Nn \l_fig_abovecap_skip { 3pt }}
\ExplSyntaxOff

\ExplSyntaxOn
\cs_set:Npn \__make_tbl_caption:nn #1#2
{
  \l_tbl_align_tl
  \skip_vertical:N \l_tbl_abovecap_skip
  \hbox_set:Nn \l_tmpa_box {\sffamily\small\textbf{#1}}
  \hbox_set:Nn \l_tmpb_box {\sffamily\small#2}
  \dim_set:Nn \l_tmpa_dim
    {
      \dim_min:nn {\l_tbl_width_dim}
        {\dim_max:nn {\box_wd:N \l_tmpa_box} {\box_wd:N \l_tmpb_box}}
    }
  {\makebox[\dim_use:N \l_tbl_width_dim][c]
    {\parbox{\dim_use:N \l_tmpa_dim}
      {\raggedright\sffamily\small\textbf{\color{scolor}#1}\par#2\par\vskip4pt}}}
  \skip_vertical:N \l_tbl_belowcap_skip
}
\ExplSyntaxOff

\ExplSyntaxOn
\RenewDocumentCommand \orcidauthor { m m }
  {
    \seq_gput_right:Nn \g_stm_orcid_seq
      { { \ttfamily \tl_to_str:n { #1 } } }
  }
\ExplSyntaxOff

\newcommand{\name}{DLDMF}
\makeatletter
\@ifundefined{theorem}{\newtheorem{theorem}{Theorem}[section]}{}
\@ifundefined{proposition}{\newtheorem{proposition}[theorem]{Proposition}}{}
\@ifundefined{corollary}{\newtheorem{corollary}[theorem]{Corollary}}{}
\@ifundefined{lemma}{}{}
\@ifundefined{assumption}{\newtheorem{assumption}[theorem]{Assumption}}{}
\@ifundefined{remark}{\newtheorem{remark}[theorem]{Remark}}{}
\makeatother
\providecommand{\R}{\mathbb{R}}

\providecommand{\calD}{\mathcal{D}}
\providecommand{\calF}{\mathcal{F}}

\providecommand{\calL}{\mathcal{L}}

\providecommand{\calP}{\mathcal{P}}
\providecommand{\calR}{\mathcal{R}}

\providecommand{\eps}{\varepsilon}
\providecommand{\epsden}{\varepsilon_{\mathrm{den}}}
\providecommand{\norm}[1]{\left\lVert #1 \right\rVert}
\providecommand{\inner}[2]{\left\langle #1,#2 \right\rangle}
\providecommand{\dd}{\mathrm{d}}

\providecommand{\dx}{\,\mathrm{d}x}

\begin{document}
\let\WriteBookmarks\relax
\renewcommand{\topfraction}{0.85}
\renewcommand{\bottomfraction}{0.65}
\renewcommand{\textfraction}{0.1}
\renewcommand{\floatpagefraction}{0.3}
\setcounter{topnumber}{4}
\setcounter{bottomnumber}{2}
\setcounter{totalnumber}{6}
\shorttitle{Disentangled Latent Dynamics Manifold Fusion}
\shortauthors{Z. Liang and H. Gao}

\title [mode = title]{Parameterized Temporal Extrapolation of PDEs via Disentangled Latent Dynamics Manifold Fusion}

\author[1]{Zhangyong Liang}[orcid=0000-0003-4463-6433]

\author[2]{Huanhuan Gao}
\cormark[1]
\ead{gao_huanhuan@jlu.edu.cn}

\affiliation[1]{organization={National Center for Applied Mathematics, Tianjin University},
                city={Tianjin},
                postcode={300072},
                country={PR China}}

\affiliation[2]{organization={School of Mechanical and Aerospace Engineering, Jilin University},
                city={Changchun},
                postcode={130025},
                country={China}}

\cortext[cor1]{Corresponding author}

\begin{abstract}
Parameterized temporal extrapolation of Partial Differential Equations (PDEs)
requires neural surrogates to capture spatial dependence, parameter variation,
physical constraints, and dynamics beyond the training interval.
However, parameter-dependent changes in solution dynamics challenge existing
methods. 
Physics-Informed Neural Networks (PINNs) treat time as a
static coordinate, whereas continuous-time latent models often require
test-time optimization for instance-specific initialization.
To address these limitations, we propose \textbf{D}isentangled \textbf{L}atent
\textbf{D}ynamics \textbf{M}anifold \textbf{F}usion (\textbf{DLDMF}), a
physics-informed framework based on space--time--parameter disentanglement and
dynamic manifold fusion.
DLDMF maps PDE parameters through a deterministic feed-forward encoder that
directly initializes and conditions a continuous latent state governed by a
parameter-conditioned Neural Ordinary Differential Equation (Neural ODE).
The dynamic manifold fusion stage combines the spatial representation,
parameter embedding, and evolving latent state within a shared nonlinear
decoder to reconstruct parameterized spatiotemporal solutions.
Temporal prediction is thereby reformulated from static coordinate regression
as parameter-conditioned latent-flow integration without test-time
auto-decoding.
Under explicit regularity assumptions, we derive conditional finite-horizon
bounds for parameter perturbations, latent-flow and reconstruction errors, and
numerical integration.
Experiments cover parameter interpolation, temporal extrapolation, parameter
extrapolation, and joint parameter--time stress tests.
Across the evaluated benchmarks, DLDMF attains lower relative $L_2$ errors than
the baselines in the tested in-parameter and temporal-extrapolation settings,
while out-of-parameter temporal extrapolation remains challenging.
\end{abstract}

\begin{highlights}
\item DLDMF unifies parameterized solution manifolds with latent ODE dynamics.
\item Parameter, space, and time representations remain disentangled.
\item Feed-forward latent initialization eliminates test-time auto-decoding.
\item Finite-horizon theory bounds errors and decoder modulation enables adaptation.
\item DLDMF lowers errors on tested in-parameter and temporal-extrapolation benchmarks.
\end{highlights}

\begin{keywords}
Parameterized PDEs \sep Physics-informed neural networks \sep Latent time integration \sep Manifold fusion \sep Temporal extrapolation
\end{keywords}

\maketitle


\section{Introduction}
\label{sec:intro}

Scientific Machine Learning (SML) increasingly provides data-assisted surrogates for families of Partial Differential Equation (PDE) solutions.
Classical finite element, finite volume, and spectral methods remain the foundation of high-fidelity simulation~\cite{Olver2014}. Repeated high-fidelity solves can nevertheless be expensive in parameter studies, inverse problems, and multi-query design, motivating learning-based surrogates that combine data with governing equations.
A prominent line of research therefore embeds physical laws directly into neural networks.
Integrating physical constraints into neural architectures yields
physics-regularized and structure-preserving models. Representative examples
include physics-informed and conservation-law-constrained residual
minimization~\cite{raissi2019physics,lee2021deep}, Hamiltonian or Lagrangian
architectures~\cite{Greydanus2019hnn,cranmer2020lag,lee2021machine}, and
symmetry-aware models~\cite{satorras2021en}.
Among these, Physics-Informed Neural Networks (PINNs)~\cite{raissi2019physics} have become a standard framework because their residual formulation is compatible with automatic differentiation~\cite{baydin2018automatic} and modern deep learning libraries.
By concurrently minimizing PDE residuals and initial and boundary condition losses, PINNs can approximate PDE solutions without paired interior solution data.
Their applications include complex time-evolving PDE systems, particularly fluid-dynamics problems in which nonlinear transport and multiscale evolution challenge optimization~\cite{shukla2021parallel,jagtap2020ext,jagtap2020conser}.

Physics-informed approximations also include variational and multiscale
formulations. Energy-based objectives encode variational
mechanics~\cite{samaniego2020energy}, while $hp$-VPINNs combine neural trial
spaces with local test functions and domain decomposition~\cite{kharazmi2021hpvpinns}.
Gradient-enhanced residuals incorporate derivative information~\cite{yu2022gradient},
and Fourier-feature analyses address high-frequency and multiscale
bias~\cite{wang2021eigenvector}.
Parameterized PDEs have also been addressed through reduced and neural
operators. Operator Inference learns low-dimensional dynamics from full-order
snapshots~\cite{peherstorfer2016operator}, whereas Reduced Operator Inference
incorporates known PDE structure~\cite{qian2022reduced}. POD-DL-ROM combines
linear compression with a nonlinear parameter--time map~\cite{fresca2022poddlrom},
and implicit Fourier neural operators address heterogeneous
mechanics~\cite{you2022ifno}. DLDMF additionally targets temporal evolution
beyond the training interval.

\begin{figure}
\centering
\centerline{\includegraphics[width=.85\linewidth]{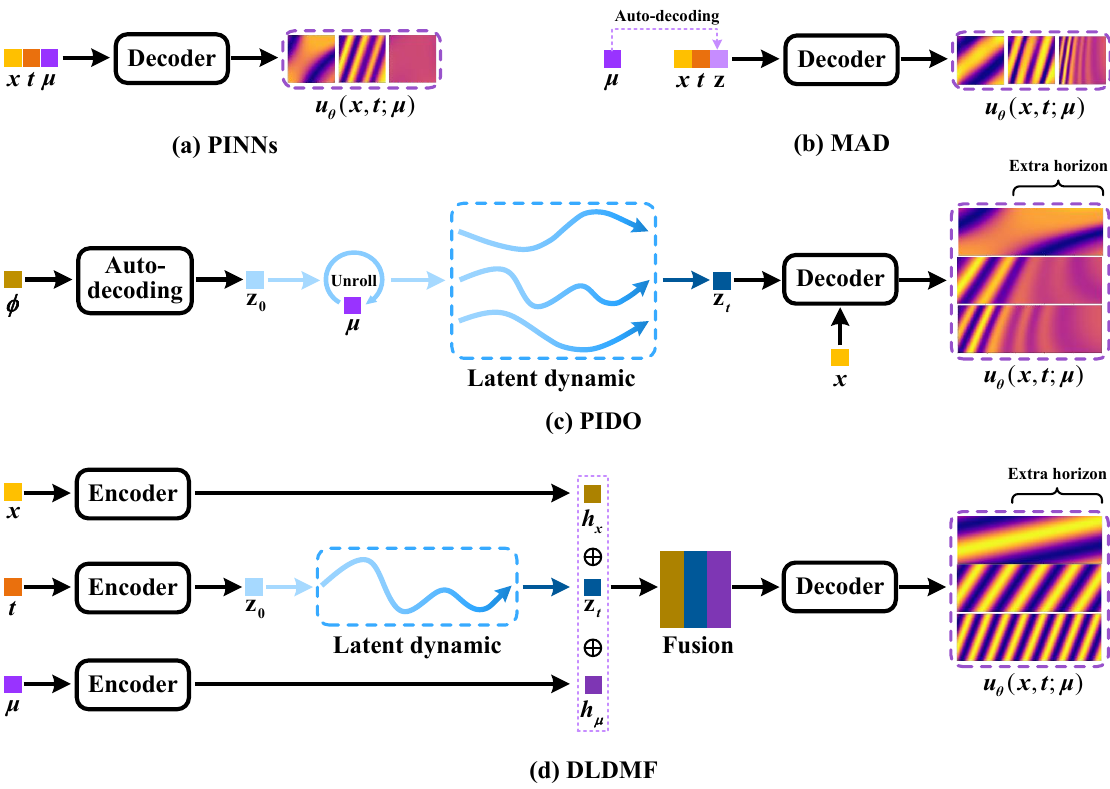}}
\caption{Architectural comparison of PINN, MAD, PIDO, and the proposed DLDMF framework.}
\label{fig:framework}
\end{figure}

In their standard form, PINNs are trained for a single configuration of PDE parameters, boundary conditions, and geometries, limiting their applicability in parametric settings.
To reduce the retraining burden of standard PINNs, recent work has developed
parameterized models for entire PDE families within a unified framework.
Representative strategies include hypernetwork-based PINNs~\cite{liu2022novel}, meta-learning approaches such as auto-decoding~\cite{park2019deepsdf} and Meta-Auto-Decoding (MAD)~\cite{huang2022meta}, along with operator-learning methods that directly map parameterized inputs to solution spaces~\cite{li2020fourier,lu2021learning}.
Within the evolving literature, Parameterized Physics-Informed Neural Networks (P$^2$INNs)~\cite{cho2024parameterized} provide a particularly relevant formulation.
By explicitly encoding PDE parameters into a latent representation, the embedding parameterizes the underlying solution network.
Rather than treating PDE parameters as additional input coordinates, P$^2$INNs construct a dedicated parameterized solution manifold.
The architectural choice allows a single trained model to query unseen parameter configurations within the evaluated parameter domain without retraining from scratch.
Prior results show competitive predictive accuracy for nonlinear PDEs over the parameter regimes considered~\cite{cho2024parameterized}.

Although P$^2$INNs address parameter generalization, their coordinate-based formulation does not explicitly model temporal dynamics.
In standard P$^2$INNs, time is treated as an additional coordinate, and the solution network is optimized over a fixed temporal window.
Consequently, predictions beyond the training horizon rely on coordinate extrapolation and can degrade as the temporal shift increases.
The absence of explicit temporal dynamics restricts finite-horizon forecasting
and dynamical analysis. Temporal representation is as important as spatial
representation for parameter generalization.
Moving structures challenge fixed linear approximations in transport-dominated
systems. Shifted POD aligns transported snapshots~\cite{reiss2018shifted},
while adaptive bases update local reduced spaces as structures
propagate~\cite{peherstorfer2020transport}.
Transformed snapshot interpolation aligns parameter-dependent jumps before
approximation~\cite{welper2017interpolation}. Lagrangian PINNs relate
convection-dominated training difficulty to the Kolmogorov $n$-width and
reformulate evolution along characteristics~\cite{mojgani2023lagrangian}.

To introduce intrinsic temporal structure, recent studies have proposed modeling PDE dynamics directly within the latent space.
Dynamics-Aware Implicit Neural Representations (DINO)~\cite{yin2023dino} and Physics-Informed Dynamics Representation Learner (PIDO)~\cite{wang2025pido} introduce Neural Ordinary Differential Equations (Neural ODEs)~\cite{chen2018neural} to characterize continuous-time evolution within a low-dimensional latent space.
By learning an underlying latent flow rather than independently regressing solution values at discrete time points, these methods provide continuous-time models that have been evaluated over finite prediction horizons.
Such latent-dynamics approaches commonly infer latent states from initial or
early observations. Their design is therefore primarily aligned with
initial-value generalization.
Thus, their dependence on initial-condition encoding makes them less directly aligned with parameterized settings in which the dominant source of variability is the PDE parameter vector rather than multiple observed initial trajectories.

The present study considers a fully amortized alternative to parameterized latent models that require instance-wise latent refinement.
We introduce \textbf{D}isentangled \textbf{L}atent \textbf{D}ynamics \textbf{M}anifold \textbf{F}usion (\textbf{DLDMF}), a physics-informed latent time-integration surrogate designed for parameter generalization and finite-horizon temporal extrapolation.
The key strategy of DLDMF is to disentangle parameter encoding, spatial representation, and latent temporal evolution before solution reconstruction, as illustrated in Figure~\ref{fig:framework}.
In the present work, manifold fusion denotes the representation-level
combination of these channels. Physical constraints enter through the training
objective.
Standard PINNs (Figure~\ref{fig:framework}a) process time and parameters essentially as static coordinates. Methods like MAD (Figure~\ref{fig:framework}b) obtain latent codes via test-time auto-decoding, whereas continuous-time models like PIDO (Figure~\ref{fig:framework}c) rely on auto-decoding initial observations to unroll latent trajectories. 
DLDMF (Figure~\ref{fig:framework}d) explicitly disentangles spatial coordinates, temporal dynamics, and PDE parameters through dedicated feed-forward encoders, thereby avoiding the iterative latent inference required by auto-decoding pipelines.
Specifically, feed-forward maps generate a spatial embedding $\pmb{h}_x$, a parameter embedding $\pmb{h}_\mu$, and an initial temporal state $\pmb{z}_0$. The same parameter embedding conditions the Neural ODE vector field~\cite{chen2018neural}, whose integration produces $\pmb{z}_t$ at requested times.
A shared decoder combines $\pmb{h}_x$, $\pmb{z}_t$, and $\pmb{h}_\mu$ to reconstruct the solution. This interface separates the three channels while allowing nonlinear interactions during reconstruction.

Our main contributions are summarized as follows:
\begin{itemize}
    \item \textbf{Dynamic manifold fusion:} We propose DLDMF, a physics-informed architecture that represents spatial coordinates, PDE parameters, and temporal evolution through $\pmb{h}_x$, $\pmb{h}_\mu$, and $\pmb{z}_t(\pmb{\mu})$, respectively, and reconstructs the solution through a shared nonlinear fusion decoder.
    \item \textbf{Amortized parameter-to-latent initialization:} DLDMF replaces instance-wise latent optimization with the explicit map $\pmb{z}_0(\pmb{\mu})=g_{\theta_0}(g_{\theta_p}(\pmb{\mu}))$. The same parameter embedding conditions the latent vector field, enabling direct finite-horizon temporal queries through latent-flow integration.
    \item \textbf{Finite-horizon theoretical analysis:} Under explicit regularity and stability assumptions, we derive continuity, parameter-perturbation, latent-flow, reconstruction-error, residual-to-solution, and numerical-integration bounds. The analysis identifies the parameter-to-latent initialization error, latent-vector-field stability, reconstruction regularity, and ODE-solver accuracy as quantities governing finite-horizon prediction.
    \item \textbf{Parameter-efficient decoder modulation:} We compare activation- and weight-space modulation strategies for adaptation to a target PDE parameter while retaining the pretrained parameter encoder and latent dynamics. The structured variants reduce the number of trainable variables relative to full decoder fine-tuning.
    \item \textbf{Parameter--time generalization assessment:} We assess DLDMF across representative benchmarks covering parameter and temporal generalization. The results highlight accurate and efficient prediction through amortized parameter-conditioned latent integration.
\end{itemize}

\section{Related work}
\label{sec:work}

\paragraph{\textbf{Neural PDE solvers.}}
Classical PDE solvers such as finite difference and finite element methods are accurate but can be costly under dense refinement~\cite{Olver2014}, motivating neural surrogates for efficient PDE approximation~\cite{karn2021phy}. 
Early Galerkin- or Ritz-type neural formulations~\cite{rudd2015const} led to Physics-Informed Neural Networks (PINNs)~\cite{raissi2019physics}, which represent solutions with coordinate-based networks and enforce PDE residuals, initial conditions, and boundary conditions as differentiable loss terms~\cite{raissi2019physics,yu2018deep}. 
The mesh-free formulation leverages automatic differentiation~\cite{baydin2018automatic}, reduces dependence on paired ground-truth data, and has been applied to fluid dynamics and transport problems~\cite{shukla2021parallel,jagtap2020ext,jagtap2020conser}. 
More broadly, physical inductive biases can be introduced through Hamiltonian or Lagrangian architectures~\cite{Greydanus2019hnn,cranmer2020lag,lee2021machine} or through constrained optimization~\cite{rudd2015const}. 
Despite these advances, PINNs remain sensitive to stiff nonlinear dynamics and challenging optimization landscapes~\cite{krishn2021char,wang2021under}, motivating work on robustness~\cite{wang2022when,yao2023multi} and faster convergence~\cite{krishn2021char}. 

\paragraph{\textbf{Learning parameterized PDEs.}}
Standard PINNs require retraining when parameters, initial conditions, or geometries change~\cite{liu2022novel}. 
Operator-learning methods learn maps between function spaces~\cite{li2020fourier,lu2021learning}; Fourier Neural Operators and DeepONets are representative examples whose accuracy and computational cost depend on the available training data, discretization protocol, and input representation~\cite{li2020fourier,lu2021learning,lanth2022non}.
Physics-informed variants such as PI-DeepONet reduce dependence on paired solution data by incorporating residual losses~\cite{wang2021learn,lanth2022non}.
Meta-learning methods such as MAD instead optimize task-specific latent variables without an encoder~\cite{huang2022meta}. 
Within the PINN family, P$^2$INNs encode continuous PDE parameters into a latent representation to form a parameterized solution manifold, enabling one model to adapt across parameter configurations~\cite{cho2024parameterized}. 
The manifold formulation improves robustness compared with treating parameters as raw coordinates in the regimes examined by Cho et al.~\cite{cho2024parameterized}. 

\paragraph{\textbf{Nonlinear-manifold and sequential reduced dynamics.}}
Nonlinear-manifold ROMs replace fixed linear spaces with learned manifolds.
Manifold Galerkin and manifold LSPG project dynamics onto an autoencoder
manifold~\cite{lee2020manifold}. Physics-informed variants add hyper-reduction
and error estimation~\cite{kim2022masked}, while Neural Galerkin evolves neural
approximations with active learning~\cite{bruna2024neural}. DLDMF instead
performs an amortized mapping from PDE parameters to the latent initial state
and vector field.

\paragraph{\textbf{Implicit neural representations.}}
Implicit Neural Representations (INRs) map coordinates to continuous signal values~\cite{sitzmann2020inr,fathony2020multi}, giving PDE solvers grid-agnostic fields queryable at arbitrary locations~\cite{sitzmann2020inr} and casting PINNs as physics-informed spatiotemporal INRs~\cite{raissi2019physics}. 
Multi-sequence INR variants condition on initial states to represent multiple trajectories~\cite{yu2022gen,skorok2022sty}, but they learn static initial-condition-to-trajectory maps over fixed horizons rather than intrinsic flows~\cite{fresca2020deep,chen2023crom}, limiting temporal extrapolation~\cite{yin2023dino}. 
Explicit dynamics modeling instead learns temporal derivatives~\cite{green2019learn}; autoregressive schemes suffer from step-size constraints and accumulated error~\cite{brands2021mes}, whereas Neural ODEs provide continuous vector fields integrable at arbitrary times~\cite{chen2018neural,quag2019snode}. 
DINO and PIDO infer latent initial states from initial or early observations~\cite{yin2023dino,wang2025pido}.
PNODE and HyperPNODE condition latent Neural ODEs on PDE parameters and couple them with INR decoders and optional physics-informed correction~\cite{wen2023reduced}.
DLDMF instead maps parameters directly to the latent initial state and vector field without instance-wise optimization, as summarized in Table~\ref{tab:closest_work_structure}.

\begin{table}[t]
\centering
\scriptsize
\caption{Structural comparison with representative parameterized PDE formulations.}
\label{tab:closest_work_structure}
\begin{tabularx}{0.98\linewidth}{@{}c*{4}{>{\centering\arraybackslash}X}@{}}
\toprule
Method & Conditioning mechanism & Temporal extrapolation & Training objective & Direct test-time inference \\
\midrule
DINO & Auto-decoding & Yes & Data fitting & No \\
PIDO & Auto-decoding & Yes & Physics-informed & No \\
MAD & Auto-decoding & No & Physics-informed & No \\
P$^2$INN & Parameter encoder & No & Physics-informed & Yes \\
DLDMF & Parameter encoder & Yes & Physics-informed & Yes \\
\bottomrule
\end{tabularx}
\par\vspace{2pt}
\begin{minipage}{0.98\linewidth}
\footnotesize\raggedright\textit{Note:} Temporal extrapolation denotes native out-of-time querying; direct inference excludes instance-wise adaptation.
\end{minipage}
\end{table}

\section{Problem formulation}
\label{sec:prob}

The section formalizes the learning setting and extrapolation regimes for
time-dependent parameterized PDEs. It then reviews the auto-decoding pipeline
and identifies the iterative inference step that motivates a feed-forward,
parameter-conditioned latent initializer.

\subsection{Parameterized PDEs}
\label{sec:ppde}

Let $\Omega\subset\mathbb{R}^{d_x}$ be a bounded spatial domain, let
$[0,T_{\mathrm{te}}]$ be a finite time interval, and let
$\mathcal{P}_{\mathrm{eval}}\subset\mathbb{R}^{d_\mu}$ be a compact PDE-parameter
evaluation domain.  For each $\pmb{\mu}\in\mathcal{P}_{\mathrm{eval}}$, we
consider an evolution problem
\begin{equation}
\mathcal{N}_{\pmb{\mu}}[u](\pmb{x},t)=0,
\qquad (\pmb{x},t)\in\Omega\times(0,T_{\mathrm{te}}],
\label{eq:pde_family_main}
\end{equation}
with initial and boundary constraints
\begin{equation}
u(\pmb{x},0;\pmb{\mu})=u_0(\pmb{x};\pmb{\mu}),
\qquad
\mathcal{B}_{\pmb{\mu}}[u](\pmb{x},t)=0,
\quad (\pmb{x},t)\in\partial\Omega\times(0,T_{\mathrm{te}}],
\label{eq:ic_bc_main}
\end{equation}
where $\mathcal{N}_{\pmb{\mu}}$ denotes the PDE operator,
$\mathcal{B}_{\pmb{\mu}}$ denotes the boundary operator, and
$u:\Omega\times[0,T_{\mathrm{te}}]\times\mathcal{P}_{\mathrm{eval}}
\rightarrow\mathbb{R}^{d_u}$ is the target solution field.
Bold symbols denote vector-valued spatial coordinates and PDE parameters; the
scalar symbol $x$ is reserved for explicitly one-dimensional formulations.
The goal is to learn a neural solution operator
\begin{equation}
\mathcal{G}_{\Theta}:(\pmb{x},t,\pmb{\mu})\mapsto
\hat u_{\Theta}(\pmb{x},t;\pmb{\mu})
\end{equation}
that approximates the solution family $u^*(\pmb{x},t;\pmb{\mu})$ on observed
samples and on held-out temporal intervals and unseen PDE parameters.

We explicitly separate parameter and time generalization regimes.  The training
parameter set is a subset of the evaluation domain,
\begin{equation}
\mathcal{P}_{\mathrm{train}}\subset\mathcal{P}_{\mathrm{eval}},
\qquad
\mathcal{P}_{\mathrm{interp}}
\subset \operatorname{conv}(\mathcal{P}_{\mathrm{train}})
\setminus\mathcal{P}_{\mathrm{train}},
\qquad
\mathcal{P}_{\mathrm{extra}}
\subset \mathcal{P}_{\mathrm{eval}}
\setminus\operatorname{conv}(\mathcal{P}_{\mathrm{train}}),
\label{eq:param_split_main}
\end{equation}
where $\mathcal{P}_{\mathrm{interp}}$ contains unseen parameters inside the
training range and $\mathcal{P}_{\mathrm{extra}}$ contains unseen parameters
outside it.  
Similarly, the time domain is split into the observed interval
$[0,T_{\mathrm{tr}}]$ and the extrapolation interval
$(T_{\mathrm{tr}},T_{\mathrm{te}}]$:
\begin{equation}
[0,T_{\mathrm{te}}]=[0,T_{\mathrm{tr}}]
\cup(T_{\mathrm{tr}},T_{\mathrm{te}}].
\label{eq:time_split_main}
\end{equation}

The notation distinguishes parameter interpolation, parameter
extrapolation, temporal extrapolation, and their joint setting.

As a general model class, consider the scalar, $d_x$-dimensional parameterized
Convection--Diffusion--Reaction (CDR) equation:
\begin{equation}
\label{eq:eq_cdr}
\frac{\partial u}{\partial t}
+ \beta\cdot\nabla u
- \nu \Delta u
- \rho u(1-u)=0,
\qquad \pmb{x}\in\Omega\subset\mathbb{R}^{d_x},\; t\in[0,T_{\mathrm{te}}],
\quad d_x\in\{1,2,3\},
\end{equation}
where $d_u=1$, $\beta\in\mathbb{R}^{d_x}$ is the convection vector,
$\nu>0$ is the diffusion parameter, and $\rho$ scales the Fisher-type
reaction term $u(1-u)$. The PDE parameter is
$\pmb{\mu}=(\beta,\nu,\rho)\in\mathcal{P}_{\mathrm{eval}}$, with
$d_\mu=d_x+2$. For $d_x=1$, Eq.~\eqref{eq:eq_cdr} reduces to the
one-dimensional CDR equation with
$\beta\cdot\nabla u=\beta \partial u/\partial x$ and
$\Delta u=\partial^2 u/\partial x^2$. 

The CDR experiments in Sec.~\ref{sec:exp} adopt this specialization.
A two-dimensional Navier--Stokes benchmark examines higher-dimensional behavior separately.
The tested parameter ranges produce distinct transport, diffusion, and reaction regimes and therefore permit separate evaluation of parameter interpolation, finite-horizon temporal extrapolation, parameter extrapolation, and their joint setting.

PINNs face well-documented limitations in optimization and generalization for nonlinear parameterized PDEs. 
A separate model is also commonly retrained for each parameter configuration. Meta-PINNs mitigate the latter cost by learning a global solver $u_{\Theta}(\pmb{x},t;\pmb{\mu})$ over the parameter domain~\cite{liu2022novel}. DLDMF instead combines an amortized parameter-to-latent initializer with parameter-conditioned latent evolution in one surrogate for parameter generalization and finite-horizon \textit{Out-t} prediction.

\subsection{Auto-decoding for latent space dynamics}
\label{sec:autodec_dyn}
A common approach to finite-horizon forecasting learns intrinsic latent dynamics on a low-dimensional manifold and decodes them back to the physical field \cite{yin2023dino,wang2025pido}. Given an observed trajectory $\{v_t\}_{t\in\mathcal{T}}$, an encoder computes latent states $\alpha_t = E_\varphi(v_t)$, and a decoder reconstructs $v_t$ via $D_\phi(\alpha_t)$. The latent evolution is then modeled continuously in time with a Neural Ordinary Differential Equation (Neural ODE):
\begin{equation}
\label{eq:latent_ode}
\frac{d\alpha_t}{dt} = f_\psi(\alpha_t), \qquad
\alpha_{t+\tau} = \alpha_t + \int_{0}^{\tau} f_\psi(\alpha_{t+s})\,ds,
\end{equation}
which permits continuous-time queries beyond the training horizon through numerical integration; predictive accuracy in that regime remains an empirical question.

A key challenge arises at deployment: only the initial condition $v_0$ is available, so the latent state $\alpha_0$ must be inferred to initialize the ODE. Since $\alpha_t$ is only implicitly defined via $D_\phi(\alpha_t)\approx v_t$, INR-based architectures replace the feed-forward encoder with auto-decoding \citep{park2019deepsdf}. With decoding loss $\ell_{\mathrm{dec}}(\phi,\alpha_t;v_t)=\|D_\phi(\alpha_t)-v_t\|_2^2$, auto-decoding defines the ``encoder'' as an iterative optimization procedure:
\begin{align}
\label{eq:autodec}
\textstyle E_\varphi(v_t)=\alpha_t^{K}, \qquad
\alpha_t^{k+1}=\alpha_t^{k}-\eta \nabla_{\alpha_t}\ell_{\mathrm{dec}}(\phi,\alpha_t^{k};v_t),\quad k=0,\dots,K-1,
\end{align}
where $\alpha_t^0$ is an initial guess and $\ell_{\mathrm{dec}}$ is evaluated on the observed grid. Compared with amortized encoding, auto-decoding can improve instance-specific reconstruction and accommodates irregular observation grids without prescribing an encoder topology.

The additional reconstruction flexibility introduces an accuracy--latency trade-off. Auto-decoding incurs iterative inference costs that grow with the number of query instances and may require storing parameter-specific latent codes and optimizer states~\citep{huang2022meta}. Representative latent-dynamics pipelines also combine the procedure with Reduced-Order Modeling (ROM) or other low-dimensional representations before Neural ODE propagation~\cite{fresca2020deep,chen2023crom,yin2023dino}. DLDMF investigates a deterministic parameter-conditioned initializer that removes this iterative deployment step while foregoing instance-specific latent refinement.

\subsection{Static parameterized solution manifolds}
\label{sec:static_manifold}
Parameterized PINN architectures address these limitations by learning a single model conditioned on PDE parameters, i.e., $u_{\Theta}(\pmb{x},t;\pmb{\mu})$~\cite{cho2024parameterized}. P$^2$INN extends the approach by encoding PDE parameters into a hidden representation $h_{\mathrm{param}}=g_{\theta_p}(\pmb{\mu})$ that modulates the coordinate-based solver~\cite{cho2024parameterized}, expressed as:
\begin{equation}
\label{eq:p2inn_form}
h_{\mathrm{coord}}=g_{\theta_c}(\pmb{x},t),\quad
h_{\mathrm{param}}=g_{\theta_p}(\pmb{\mu}),\quad
u_{\Theta}(\pmb{x},t;\pmb{\mu})=g_{\theta_g}(h_{\mathrm{coord}},h_{\mathrm{param}}),
\end{equation}
where $g_{\theta_g}$ denotes the manifold network that integrates coordinate and parameter embeddings.

The formulation differs from instance-parameterized methods such as MAD, which optimize a separate latent vector per instance jointly with shared network weights~\cite{huang2022meta}. By amortizing parameter representations globally through $g_{\theta_p}$, the parameterized representation paradigm avoids per-instance test-time optimization and shares parameter-dependent information across the training set.

These manifolds remain \emph{static} with time, treating $t$ as a fixed spatial-like coordinate confined to the training horizon \cite{krishn2021char}. The formulation lacks an intrinsic dynamical system, limiting \textit{Out-t} extrapolation beyond the training window \cite{yin2023dino,wang2025pido}. Parameter extrapolation is similarly challenging when PDE dynamics change qualitatively across parameter ranges \cite{krishn2021char,lanth2022non}. 
Explicit continuous-time latent dynamics extend the static parameterized manifold.

\subsection{Motivation}
\label{sec:motivation}

\paragraph{\textbf{Parameterized representations can replace auto-decoding.}}
Auto-decoding treats each instance as a separate optimization problem, where a latent code is iteratively updated, and meta-learning can organize instance adaptation \citep{park2019deepsdf,huang2022meta,liu2022novel}.
Instance-wise latent inference can be viewed as a multi-task learning procedure in which each task has its own latent variables \cite{kendall2018mti}.
In contrast, parameterized representation models encode collocation coordinates and PDE parameters via separate encoders and combine them in a shared decoder, producing parameter-conditioned representations directly without per-instance iterative optimization \cite{wang2021learn,krishn2021char}.

\paragraph{\textbf{Temporal limitations of static coordinate representations.}}
Although P$^2$INN~\cite{cho2024parameterized} constructs a parameterized solution manifold and generalizes well to unseen PDE parameters within the training time window, it treats time $t$ as a static coordinate input rather than modeling intrinsic temporal dynamics.
As a result, when queried at times $t > T_{\mathrm{tr}}$, the model must extrapolate in time without an explicit dynamical structure, and its predictions can degrade as the temporal shift increases.
Figure~\ref{fig:p2inn_extrapolation} illustrates the temporal extrapolation
limitation under the evaluated setting. At $t=1.0$, within the training horizon,
the prediction follows the reference solution reasonably well. At $t=5.0$ and
$t=10.0$, the predicted profile exhibits substantial errors in both shape and
magnitude.
The example shows that static parameterized decoding can be unreliable for \textit{Out-t} prediction and motivates explicit latent dynamics conditioned on PDE parameters.

\begin{center}
\centering
\includegraphics[width=.9\linewidth]{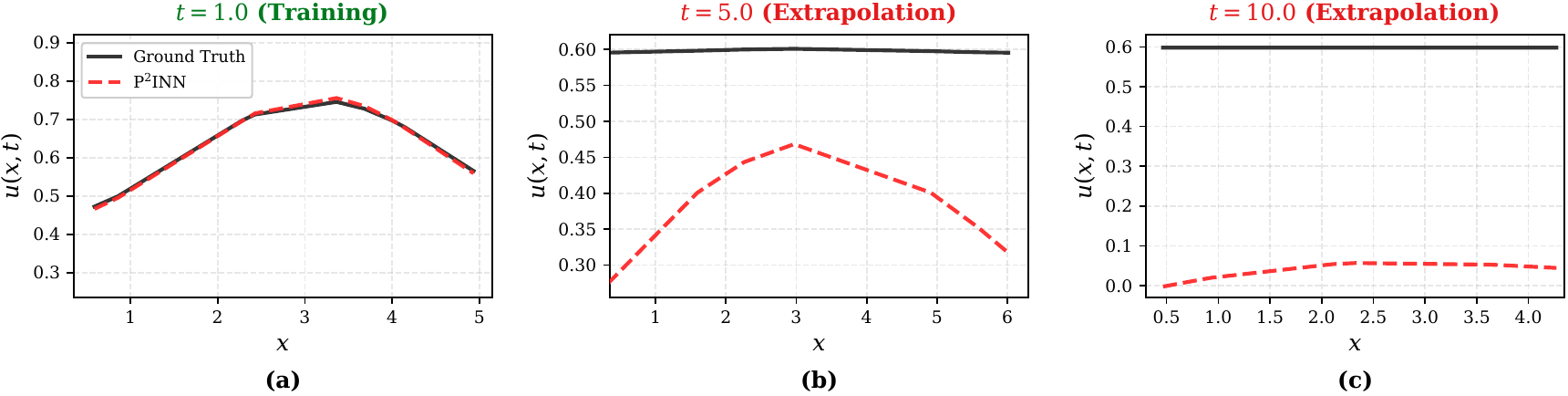}
\captionof{figure}{P$^2$INN predictions and reference solutions at representative temporal snapshots.}
\label{fig:p2inn_extrapolation}
\end{center}

\begin{center}
\centering
\includegraphics[width=0.7\linewidth]{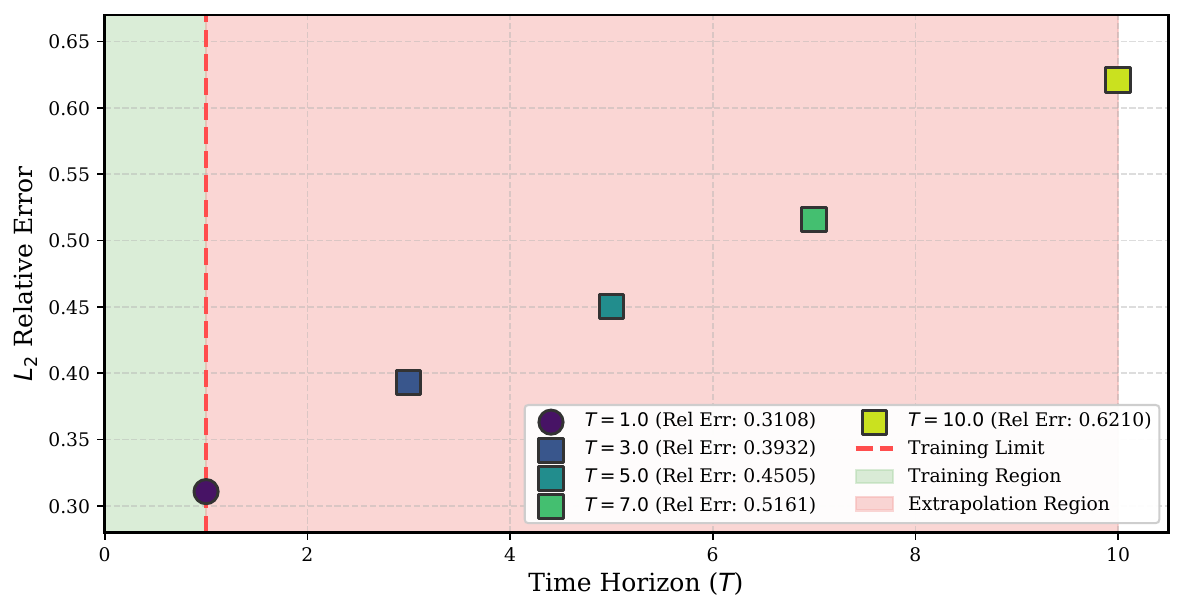}
\captionof{figure}{Temporal growth of the P$^2$INN relative prediction error over time.}
\label{fig:p2inn_performance}
\end{center}

\paragraph{\textbf{Latent dynamics on manifolds.}}
Solutions under nearby PDE parameters share common structures under smooth parameter dependence, suggesting that their evolution trajectories can lie on a low-dimensional manifold \cite{krishn2021char,huang2022meta}.
Latent dynamics models show that continuous-time evolution can be captured efficiently by learning a latent ODE and decoding latent states to spatial fields \cite{chen2018neural,chen2023crom,yin2023dino}.
The limitations of initial-condition-driven latent inference motivate a
parameter-conditioned dynamics system that can be integrated continuously in
time and reconstructed by a shared INR-style decoder
\cite{wang2025pido,yin2023dino}.

As shown in Figure~\ref{fig:p2inn_performance}, the relative $L_2$ error of P$^2$INN increases approximately linearly over the displayed extrapolation interval. The observed error growth illustrates the difficulty of querying a static coordinate representation outside its training-time domain and further motivates explicit latent dynamics for finite-horizon prediction.

\paragraph{\textbf{Disentangled dynamic modulation.}}
Space--time separation strategies in INRs, such as multiplicative modulation mechanisms, simplify sequence modeling and improve efficiency \cite{fathony2020multi}.
Fourier-feature-based parameterizations can be interpreted as modulating amplitudes and phases of wave components, which helps represent oscillatory signals \cite{fathony2020multi,rahaman2019spectral}.
Directly factorizing time and space inside a single coordinate encoder may overlook the coupled nature of PDE dynamics and can lead to optimization difficulties in stiff regimes \cite{krishn2021char,wang2021under}.
The limitations of a single coordinate encoder motivate separate
representations for space, latent time evolution, and PDE parameters. Their
combination at the decoder permits cross-channel interactions while retaining
the factorized architecture \cite{yin2023dino,wang2025pido,krishn2021char}.


\section{Method}
\label{sec:meth}

We propose Disentangled Latent Dynamics Manifold Fusion (DLDMF), a physics-informed latent time-integration surrogate for families of time-dependent parameterized PDEs.
The core idea is to separate \emph{space}, \emph{time}, and \emph{PDE parameters} into three representation channels and combine them only at a shared continuous decoder. Finite-horizon time evolution is modeled through a parameter-conditioned latent Neural ODE.
Unlike latent-space dynamics pipelines that require per-instance auto-decoding to infer latent states (i.e., test-time iterative optimization), DLDMF initializes its latent dynamics through a feed-forward mapping from PDE parameters, avoiding iterative inference at deployment.

\subsection{Model Architecture}
\label{sec:arch}

DLDMF factorizes the input variables into three representation channels: a
spatial code, a parameter code, and a parameter-conditioned temporal latent
trajectory. Let $d_h^x$, $d_h^\mu$, and $d_z$ denote their respective
dimensions. The component maps have signatures
$g_{\theta_x}:\mathbb{R}^{d_x}\to\mathbb{R}^{d_h^x}$,
$g_{\theta_p}:\mathbb{R}^{d_\mu}\to\mathbb{R}^{d_h^\mu}$,
$g_{\theta_0}:\mathbb{R}^{d_h^\mu}\to\mathbb{R}^{d_z}$, and
$f_{\theta_f}:\mathbb{R}^{d_z}\times\mathbb{R}^{d_h^\mu}
\to\mathbb{R}^{d_z}$.

The spatial and parameter encoders are
\begin{equation}
\pmb{h}_{x}=g_{\theta_x}(\pmb{x}),
\qquad
\pmb{h}_{\mu}=g_{\theta_p}(\pmb{\mu}).
\label{eq:encoders_main}
\end{equation}

The parameter embedding initializes the latent temporal state without
auto-decoding as
\begin{equation}
\pmb{z}_0(\pmb{\mu})=g_{\theta_0}(\pmb{h}_{\mu}),
\label{eq:z0_init}
\end{equation}
and the latent dynamics evolve according to a parameter-conditioned Neural ODE as follows
\begin{equation}
\frac{d\pmb{z}_t}{dt}=f_{\theta_f}(\pmb{z}_t,\pmb{h}_{\mu}),
\qquad
\pmb{z}_t(\pmb{\mu})
=\phi_t^{\theta_f}\bigl(\pmb{z}_0(\pmb{\mu}),\pmb{h}_{\mu}\bigr),
\label{eq:latent_ode_dldmf}
\end{equation}
where $\phi_t^{\theta_f}$ denotes the flow map generated by
$f_{\theta_f}$. The finite-horizon continuous-time formulation permits temporal
queries beyond the training horizon through latent-flow integration; its
\textit{Out-t} accuracy is evaluated empirically in Sec.~\ref{sec:exp}.

The implemented solution map concatenates the spatial, temporal, and parameter
representations before applying the shared decoder. With
$d_m=d_h^x+d_z+d_h^\mu$, define the fixed map
$\mathcal{F}_{\mathrm{cat}}:
\mathbb{R}^{d_h^x}\times\mathbb{R}^{d_z}\times\mathbb{R}^{d_h^\mu}
\to\mathbb{R}^{d_m}$ and
$D_{\theta_d}:\mathbb{R}^{d_m}\to\mathbb{R}^{d_u}$. The fused
representation and decoded field are
\begin{equation}
\pmb{m}_{\Theta}(\pmb{x},t,\pmb{\mu})
=\mathcal{F}_{\mathrm{cat}}
(\pmb{h}_x,\pmb{z}_t(\pmb{\mu}),\pmb{h}_{\mu}),
\qquad
\hat u_{\Theta}(\pmb{x},t;\pmb{\mu})
=D_{\theta_d}\bigl(\pmb{m}_{\Theta}(\pmb{x},t,\pmb{\mu})\bigr).
\label{eq:fusion_decoder_main}
\end{equation}

Equivalently, the full DLDMF operator is
\begin{equation}
\hat u_{\Theta}(\pmb{x},t;\pmb{\mu})=
D_{\theta_d}\!\left(
\mathcal{F}_{\mathrm{cat}}\!\left(
g_{\theta_x}(\pmb{x}),
\phi_t^{\theta_f}\!\left(
g_{\theta_0}(g_{\theta_p}(\pmb{\mu})),
g_{\theta_p}(\pmb{\mu})
\right),
g_{\theta_p}(\pmb{\mu})
\right)
\right).
\label{eq:dldmf_full_operator_main}
\end{equation}

The fixed representation-level fusion map is
\begin{equation}
\mathcal{F}_{\mathrm{cat}}(\pmb{h}_x,\pmb{z}_t,\pmb{h}_{\mu})
=[\pmb{h}_x;\pmb{z}_t;\pmb{h}_{\mu}],
\qquad
D_{\theta_d}=g_{\theta_g},
\label{eq:concat_fusion_main}
\end{equation}
so that
\begin{equation}
\hat u_{\Theta}(\pmb{x},t;\pmb{\mu})
=g_{\theta_g}\Big([\pmb{h}_{x};\pmb{z}_t(\pmb{\mu});\pmb{h}_{\mu}]\Big).
\label{eq:decoder}
\end{equation}

Although Eq.~\eqref{eq:concat_fusion_main} implements concatenation at the decoder
input, the fusion is not applied to raw coordinates; it combines three disentangled representation streams corresponding to spatial information, parameter-dependent regime information, and latent temporal dynamics.
The decoder $g_{\theta_g}$ then learns nonlinear cross-channel interactions
among $\pmb{h}_x$, $\pmb{z}_t$, and $\pmb{h}_{\mu}$. We identify
$\theta_d\equiv\theta_g$ for the implemented decoder. The full parameter set is
$\Theta=\{\theta_x,\theta_p,\theta_0,\theta_f,\theta_g\}$.
Accordingly, the fusion contribution in the present implementation is
interpreted as disentangled representation construction and parameter-aligned
temporal fusion, rather than as a new algebraic fusion operator. More expressive
operators based on gating, Feature-wise Linear Modulation (FiLM), bilinear
maps, or attention are possible learnable extensions of
$\mathcal{F}_{\mathrm{cat}}$.

\paragraph{\textbf{Time derivatives through latent dynamics.}}
Since $t$ influences $\hat u_{\Theta}$ only through $\pmb{z}_t$, the time derivative required by PDE residuals follows from the chain rule. Let
$J_zD_{\theta_d}(\pmb{m}_{\Theta})\in\mathbb{R}^{d_u\times d_z}$ denote the
decoder Jacobian with respect to the latent temporal block of its fused input.
Then
\begin{align}
\partial_t \hat u_{\Theta}(\pmb{x},t;\pmb{\mu})
&=
J_zD_{\theta_d}(\pmb{m}_{\Theta})
f_{\theta_f}(\pmb{z}_t,\pmb{h}_{\mu}).
\label{eq:time_derivative}
\end{align}

Spatial derivatives (e.g., $\nabla_{\pmb{x}} \hat u_{\Theta}$ and
$\Delta_{\pmb{x}}\hat u_{\Theta}$) are obtained by automatic differentiation
through $\pmb{h}_x=g_{\theta_x}(\pmb{x})$ and the decoder.

\subsection{Pre-training}
\label{sec:train}

DLDMF is trained with a physics-informed objective whose terms correspond to
data fitting, residual control, initial/boundary consistency, and latent-flow
stability.  Let $\mathcal{D}_u$, $\mathcal{D}_r$, $\mathcal{D}_{0}$, and
$\mathcal{D}_b$ denote supervised, residual, initial-condition, and
boundary-condition sample sets.  The supervised loss is
defined with a fixed denominator regularizer $\epsden>0$, which has the same
units as the solution and prevents division by zero when a reference value has
a small norm. Its numerical value is fixed across comparisons and recorded in
the benchmark:
\begin{equation}
\mathcal{L}_{u}(\Theta)=
\frac{1}{|\mathcal{D}_{u}|}
\sum_{(\pmb{x}_i,t_i,\pmb{\mu}_i,u_i)\in\mathcal{D}_{u}}
\frac{\|\hat u_{\Theta}(\pmb{x}_i,t_i;\pmb{\mu}_i)-u_i\|_2^2}
{\|u_i\|_2^2+\epsden^2}.
\label{eq:supervised_loss_main}
\end{equation}

The physics residual, with $\partial_t\hat u_{\Theta}$ computed by
Eq.~\eqref{eq:time_derivative}, is penalized by
\begin{equation}
\mathcal{L}_{r}(\Theta)=
\frac{1}{|\mathcal{D}_{r}|}
\sum_{(\pmb{x}_i,t_i,\pmb{\mu}_i)\in\mathcal{D}_{r}}
\left\|
\mathcal{N}_{\pmb{\mu}_i}[\hat u_{\Theta}](\pmb{x}_i,t_i)
\right\|_2^2.
\label{eq:residual_loss_main}
\end{equation}

The initial and boundary consistency are enforced through
\begin{align}
\mathcal{L}_{0}(\Theta)&=
\frac{1}{|\mathcal{D}_{0}|}
\sum_{(\pmb{x}_i,\pmb{\mu}_i)\in\mathcal{D}_{0}}
\|\hat u_{\Theta}(\pmb{x}_i,0;\pmb{\mu}_i)-u_0(\pmb{x}_i;\pmb{\mu}_i)\|_2^2,
\label{eq:ic_loss_main} \\
\mathcal{L}_{b}(\Theta)&=
\frac{1}{|\mathcal{D}_{b}|}
\sum_{(\pmb{x}_i,t_i,\pmb{\mu}_i)\in\mathcal{D}_{b}}
\|\mathcal{B}_{\pmb{\mu}_i}[\hat u_{\Theta}](\pmb{x}_i,t_i)\|_2^2.
\label{eq:bc_loss_main}
\end{align}

The theoretical analysis relates finite-horizon perturbation growth to the
one-sided stability of the latent vector field. Let $\mathcal{D}_z$ denote the
set of parameter--time samples selected for latent regularization, and let
$\|\cdot\|_F$ denote the Frobenius norm. We adopt the optional sampled surrogate
\begin{equation}
\mathcal{R}_{J}(\Theta)=
\frac{1}{|\mathcal{D}_{z}|}
\sum_{(t_i,\pmb{\mu}_i)\in\mathcal{D}_{z}}
\left\|
\frac{\partial f_{\theta_f}}{\partial \pmb{z}}
\bigl(\pmb{z}_{t_i}(\pmb{\mu}_i),\pmb{h}_{\mu}(\pmb{\mu}_i)\bigr)
\right\|_{F}^{2}.
\label{eq:jacobian_regularizer_main}
\end{equation}

The regularizer penalizes sampled local sensitivity of the learned vector field, which is an average Frobenius-norm penalty and is not an empirical estimate
of the uniform one-sided Lipschitz constant in
Eq.~\eqref{eq:main_one_sided_lipschitz}. Figure~\ref{fig:dldmf_complex_dyn}
associates the regularizer with lower latent
oscillation and fewer localized error peaks at intermediate times. The observed
behavior is consistent with smoother latent trajectories and more stable
spatiotemporal predictions over the displayed horizon.

\begin{center}
\centering
\includegraphics[width=0.8\linewidth]{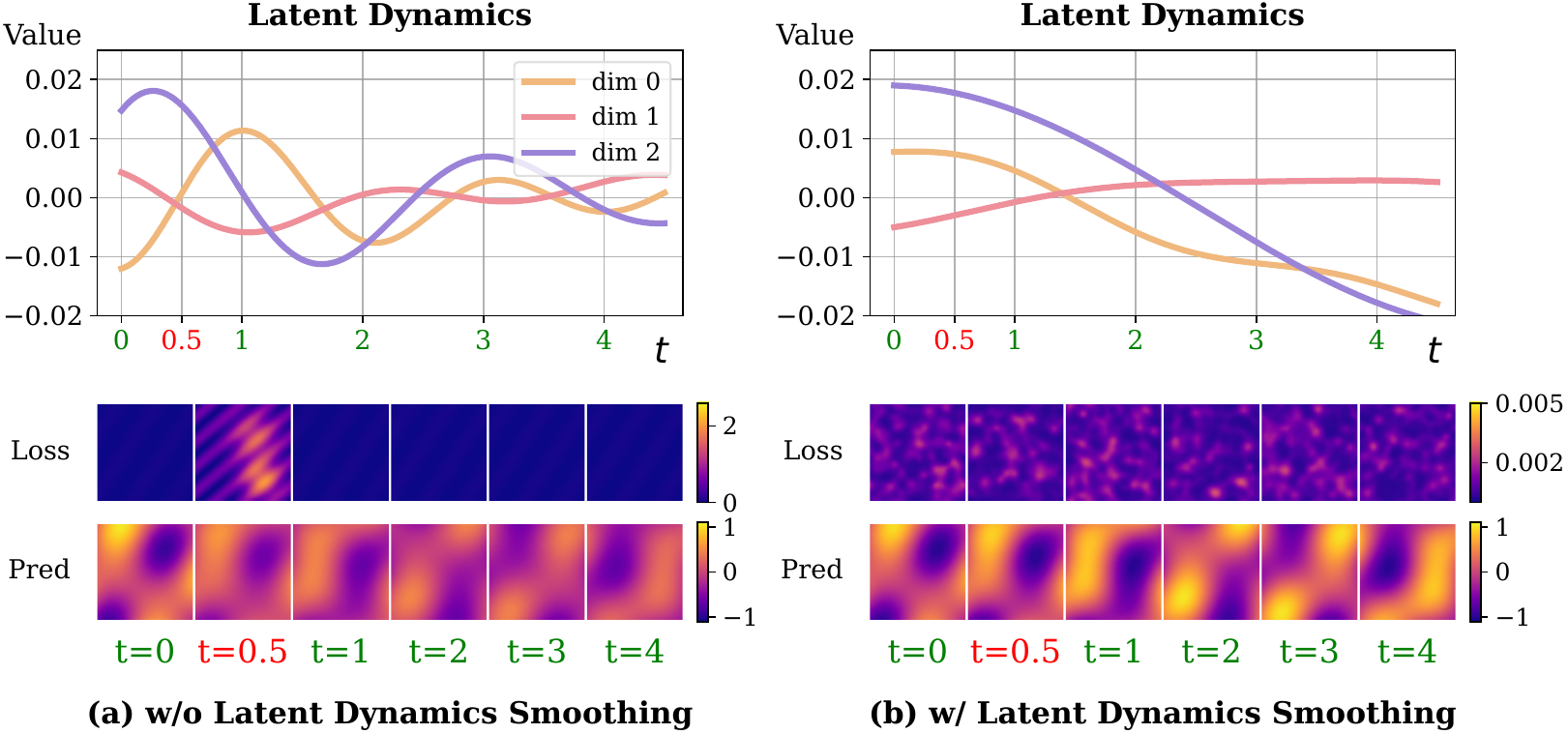}
\captionof{figure}{Latent trajectories and prediction errors with the sampled local-sensitivity regularizer.}
\label{fig:dldmf_complex_dyn}
\end{center}

The full pre-training objective is
\begin{equation}
\min_{\Theta}\;\mathcal{J}(\Theta)
=\lambda_u\mathcal{L}_{u}
+\lambda_r\mathcal{L}_{r}
+\lambda_0\mathcal{L}_{0}
+\lambda_b\mathcal{L}_{b}
+\lambda_J\mathcal{R}_{J}
+\lambda_{\mathrm{wd}}\|\Theta\|_2^2,
\label{eq:loss_dldmf}
\end{equation}
in which $\lambda_u$, $\lambda_r$, $\lambda_0$, $\lambda_b$, $\lambda_J$, and
$\lambda_{\mathrm{wd}}$ are nonnegative loss weights.
When $\mathcal{R}_J$ is disabled, we set $\lambda_J=0$ and treat the
corresponding variant as a stability ablation.
Compared with PINNs
trained for a single PDE instance, each optimization step mini-batches across
multiple parameter instances, encouraging a parameter-conditioned solution manifold.

\subsection{Decoder modulation for adaptation}
\label{sec:decoder_modulation}
Since deployment may target a small set of query PDE parameters, we include an
optional decoder-only adaptation protocol. The protocol is not part of the
core DLDMF inference pipeline; it compares how different
fine-tuning decompositions adapt a pretrained decoder while leaving the
parameter encoder, latent initializer, and latent ODE fixed unless otherwise
stated.

For a target parameter $\pmb{\mu}_{\mathrm{tgt}}$, let $\vartheta$ denote
the trainable variables of a selected modulation variant. All variants solve
the common adaptation problem
\begin{equation}
\vartheta^*=\arg\min_{\vartheta}
\mathcal{J}_{\mathrm{adapt}}(\vartheta;\pmb{\mu}_{\mathrm{tgt}}),
\label{eq:decoder_adaptation_objective}
\end{equation}
where $\mathcal{J}_{\mathrm{adapt}}$ is the restriction of the weighted
objective in Eq.~\eqref{eq:loss_dldmf} to the target-parameter adaptation
samples. The variants differ only in the definition and dimension of
$\vartheta$.

Following the modulated-INR viewpoint~\cite{functa22functa}, the adaptation
variants can be grouped by where the learnable correction is inserted.  Let
$q_l=W_l h_l+b_l\in\mathbb{R}^{n_l}$ denote the pre-activation of decoder layer $l$. Activation-space
variants modify $q_l$ directly, e.g.,
\begin{align}
\text{Shift:}\quad &q_l \mapsto q_l+a_l,\notag\\
\text{Scale:}\quad &q_l \mapsto q_l\odot a_l,\notag\\
\text{FiLM:}\quad &q_l \mapsto q_l\odot a_l^{(s)}+a_l^{(b)},
\label{eq:decoder_activation_modulation}
\end{align}
where $a_l,a_l^{(s)},a_l^{(b)}\in\mathbb{R}^{n_l}$, with elementwise
multiplication denoted by $\odot$.
These variants keep the pretrained decoder weights fixed and adapt only the
layer-wise affine response.  In contrast, weight-space variants modify the
linear map itself.  Global Fourier Modulation (GFM) changes the decoder through
a basis-weighted modulation of spectral components, whereas the Singular Value
Decomposition (SVD) variant
modulates singular directions extracted from the pretrained decoder weights.
Full fine-tuning updates all decoder parameters, and the fixed-pretrained
variant performs no adaptation.

Let the decoder contain $D_g$ layers. For the SVD variant, the thin SVD of each
decoder weight matrix, excluding the first and last layers, is
\begin{align}
W_l=U_l\operatorname{diag}(\pmb{\sigma}_l)V_l^{\mathsf{T}},
\qquad l=2,3,\ldots,D_g-1,
\label{eq:eq_svd_mod}
\end{align}
where $r_l=\operatorname{rank}(W_l)$,
$U_l\in\mathbb{R}^{n_l\times r_l}$,
$V_l\in\mathbb{R}^{n_{l-1}\times r_l}$, and
$\pmb{\sigma}_l\in\mathbb{R}^{r_l}$ contains the singular values. During
fine-tuning, the adapted weight is
\begin{equation}
W_l^{\mathrm{adapt}}
=U_l\operatorname{diag}(\pmb{\sigma}_l\odot\pmb{a}_l)V_l^{\mathsf{T}},
\qquad \pmb{a}_l\in\mathbb{R}^{r_l},
\label{eq:eq_svd_adapt}
\end{equation}
where $\pmb{a}_l$ is a vector of directional gains initialized to the all-ones
vector. The factors $U_l$ and $V_l$, the original singular values, the first and
last decoder layers, and the remaining network parameters are fixed in the
reported SVD variant.
The construction adapts the gain along pretrained singular directions rather
than changing all entries of $W_l$ independently.  
This provides a
parameter-efficient weight-space counterpart to Shift, Scale, FiLM, GFM, full
decoder fine-tuning, and fixed-pretrained evaluation.  
DLDMF gives the feed-forward parameter initializer and latent ODE reconstruction
pipeline without the optional adaptation step.

\subsection{Extrapolation}
\label{sec:extra}

For a new PDE parameter vector $\pmb{\mu}_{\mathrm{new}}$, potentially outside
the training range, DLDMF computes
\begin{align}
\pmb{h}_{\mu,\mathrm{new}} &= g_{\theta_p}(\pmb{\mu}_{\mathrm{new}}), \notag\\
\pmb{z}_{0,\mathrm{new}} &= g_{\theta_0}(\pmb{h}_{\mu,\mathrm{new}}), \notag\\
\pmb{z}_{t,\mathrm{new}} &= \mathrm{ODESolve}\!\left(
f_{\theta_f}(\,\cdot\,,\pmb{h}_{\mu,\mathrm{new}}),
\pmb{z}_{0,\mathrm{new}},0,t\right).
\end{align}

The latent state $\pmb{z}_{t,\mathrm{new}}$ is evolved by
Eq.~\eqref{eq:latent_ode_dldmf} for any $t \in [0,T_{\mathrm{te}}]$, including
$t > T_{\mathrm{tr}}$, and
$\hat u_{\Theta}(\pmb{x},t;\pmb{\mu}_{\mathrm{new}})$ is decoded by
Eq.~\eqref{eq:decoder}.
The procedure involves only feed-forward computation and ODE integration and therefore permits parameter and temporal queries without iterative latent inference. 
The extrapolation accuracy depends on the learned parameter map and latent vector field and is assessed in Sec.~\ref{sec:exp}.

\subsection{Theoretical properties}
\label{sec:theory_main}

The analysis provides a conditional finite-horizon decomposition of temporal,
reconstruction-interface, and numerical errors under the stated uniform approximation and regularity assumptions. 
The Jacobian regularizer in
Eq.~\eqref{eq:jacobian_regularizer_main} is interpreted as a sampled empirical
surrogate for local latent sensitivity.

Fix a horizon $0<T\le T_{\mathrm{te}}$ and the Euclidean norm on each latent
space, together with its induced operator norm. On product spaces, define
$\|(a,b,c)\|_{\oplus}=\|a\|+\|b\|+\|c\|$. To reduce typographic clutter,
boldface is suppressed for latent vectors throughout this subsection. Let
$h_x^*:\Omega\to\mathbb{R}^{d_h^x}$,
$h_\mu^*:\mathcal{P}_{\mathrm{eval}}\to\mathbb{R}^{d_h^\mu}$, and
$z^*:[0,T]\times\mathcal{P}_{\mathrm{eval}}\to\mathbb{R}^{d_z}$ denote ideal representation maps. 
Define a reconstruction operator
$H^*:\mathbb{R}^{d_h^x}\times\mathbb{R}^{d_z}\times
\mathbb{R}^{d_h^\mu}\to\mathbb{R}^{d_u}$ and the learned reconstruction
$H_\Theta=D_{\theta_d}\circ\mathcal{F}_{\mathrm{cat}}$. The corresponding
finite-dimensional representation is
\begin{equation}
u^*(\pmb{x},t;\pmb{\mu})
=H^*\!\left(h_x^*(\pmb{x}),z_t^*(\pmb{\mu}),
h_\mu^*(\pmb{\mu})\right)+\eta_{\mathrm{man}}(\pmb{x},t,\pmb{\mu}),
\qquad
\sup_{\pmb{x},t,\pmb{\mu}}\norm{\eta_{\mathrm{man}}(\pmb{x},t,\pmb{\mu})}
\le\eps_{\mathrm{man}}.
\label{eq:main_ideal_representation}
\end{equation}

To make the truncation term representation-dependent rather than arbitrary, let
$\mathfrak{A}_{d_h^x,d_z,d_h^\mu}$ denote the selected class of continuous
factorizations with these dimensions and define the nonlinear approximation
width
\begin{equation}
d_{\mathfrak A}(\mathcal{M}_T)=
\inf_{(h_x,h_\mu,z,H)\in\mathfrak A}
\sup_{\pmb{x},t,\pmb{\mu}}
\norm{u^*(\pmb{x},t;\pmb{\mu})
-H(h_x(\pmb{x}),z_t(\pmb{\mu}),h_\mu(\pmb{\mu}))},
\label{eq:main_nonlinear_width}
\end{equation}
where $\mathcal{M}_T$ is the finite-horizon solution family. The quantity
$\eps_{\mathrm{man}}$ is taken as an attained or approximate upper bound on
$d_{\mathfrak A}(\mathcal{M}_T)$ for the selected dimensions.
The uniform approximation defects are
\begin{align}
\norm{g_{\theta_x}(\pmb{x})-h_x^*(\pmb{x})}&\le\eps_x,\qquad
\norm{g_{\theta_p}(\pmb{\mu})-h_\mu^*(\pmb{\mu})}\le\eps_\mu,
\notag\\
\norm{z_t(\pmb{\mu})-z_t^*(\pmb{\mu})}&\le\eps_z(t),\qquad
\sup_a\norm{H_\Theta(a)-H^*(a)}\le\eps_{\mathrm{rec}}.
\label{eq:main_approx_defects}
\end{align}

The constants $L_H$, $L_z$, and $L_\mu$ denote finite-horizon Lipschitz
constants of the learned reconstruction, latent vector field in $z$, and latent
vector field in the parameter embedding, respectively. One may take
$L_H\le L_DL_F$ when separate decoder and interface constants are available.
The suprema are taken over the compact domains reached by the
encoders and latent trajectories. The combined defect $\eps_{\mathrm{rec}}$
avoids assigning non-identifiable errors to the fusion interface and
decoder, since invertible changes of intermediate coordinates can transfer
approximation error between those modules without changing their composition.

\begin{proposition}[Finite-horizon continuity of DLDMF]
\label{prop:main_continuity}
Assume that $g_{\theta_x}$, $g_{\theta_p}$, $g_{\theta_0}$,
$\mathcal{F}_{\mathrm{cat}}$, and $D_{\theta_d}$ are continuous and that
$f_{\theta_f}(z,h)$ is jointly continuous and locally Lipschitz in $z$,
uniformly over the compact parameter-embedding domain, with finite-horizon
solutions on the considered compact set. Then
$(\pmb{x},t,\pmb{\mu})\mapsto\hat u_{\Theta}(\pmb{x},t;\pmb{\mu})$ is continuous on
$\Omega\times[0,T]\times\mathcal{P}_{\mathrm{eval}}$.
\end{proposition}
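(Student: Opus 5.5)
The plan is to write $\hat u_{\Theta}$ as the composition in Eq.~\eqref{eq:dldmf_full_operator_main} and check continuity of each factor. All factors except the latent flow are continuous by hypothesis or by construction. So the only real work is to show that the map
\[
(t,\pmb{\mu})\mapsto \phi_t^{\theta_f}\bigl(g_{\theta_0}(g_{\theta_p}(\pmb{\mu})),g_{\theta_p}(\pmb{\mu})\bigr)
\]
is jointly continuous on $[0,T]\times\mathcal{P}_{\mathrm{eval}}$.

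\textbf{Reduction to the flow.} First observe that $\pmb{\mu}\mapsto h_\mu=g_{\theta_p}(\pmb{\mu})$ and $\pmb{\mu}\mapsto z_0=g_{\theta_0}(h_\mu)$ are continuous. Since $\mathcal{P}_{\mathrm{eval}}$ is compact, their images $K_h$ and $K_0$ are compact. Next, $\mathcal{F}_{\mathrm{cat}}$ is concatenation, hence continuous (even linear), and $D_{\theta_d}$ is continuous by assumption. Finally, $\pmb{x}\mapsto g_{\theta_x}(\pmb{x})$ is continuous. Since compositions and products of continuous maps are continuous, it suffices to prove that the solution map $\Phi:(t,z_0,h)\mapsto\phi_t(z_0,h)$ is continuous on $[0,T]\times K_0\times K_h$.

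\textbf{Continuous dependence of the flow.} By assumption the solutions exist on $[0,T]$ and remain in some compact set. Take $K$ to be a compact neighborhood of the union of these trajectories. On $K\times K_h$, local Lipschitz continuity in $z$, uniform in $h$, together with compactness, gives a single constant $L_z$ with
\[
\|f(z,h)-f(z',h)\|\le L_z\|z-z'\|.
\]
Joint continuity on the compact set $K\times K_h$ also gives uniform continuity and a bound $M=\sup\|f\|$.

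Now fix two data $(z_0,h)$ and $(z_0',h')$, and write $z_t,z_t'$ for the corresponding trajectories. From the integral form,
\[
\|z_t-z_t'\|\le\|z_0-z_0'\|+\int_0^t L_z\|z_s-z_s'\|\,ds+t\,\omega(\|h-h'\|),
\]
where $\omega$ is a modulus of continuity of $f$ in $h$ on $K\times K_h$. Grönwall's inequality then yields
\[
\sup_{t\le T}\|z_t-z_t'\|\le\bigl(\|z_0-z_0'\|+T\omega(\|h-h'\|)\bigr)e^{L_zT}.
\]
Continuity in $t$ follows from $\|z_t-z_{t'}\|\le M|t-t'|$. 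Combining the two estimates with the triangle inequality gives joint continuity of $\Phi$.

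\textbf{Main obstacle.} The delicate step is making sure the perturbed trajectories stay inside the compact set $K$ on which the constants $L_z$, $M$, and $\omega$ are valid. I would handle this as the hypothesis "finite-horizon solutions on the considered compact set" suggests. Either I interpret the hypothesis as saying that all trajectories with data in $K_0\times K_h$ lie in a fixed compact set, or I use a standard continuation argument. In the continuation argument, $K$ is taken as a closed $\delta$-neighborhood of the nominal trajectory. For data close enough, the Grönwall bound keeps the perturbed trajectory within distance less than $\delta$ up to its first exit time from $K$. Hence it never exits $K$ on $[0,T]$, and this closes the argument.
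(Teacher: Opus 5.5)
Your argument is correct, and it is the standard one the hypotheses are designed for: you reduce to joint continuity of the latent flow $(t,z_0,h_\mu)\mapsto\phi_t^{\theta_f}(z_0,h_\mu)$ by composition, then get that continuity from the integral equation, a uniform Lipschitz constant and modulus of continuity on a compact neighborhood, Gr\"onwall, and an exit-time argument. The paper states this proposition without a written proof. Your continuation step only needs the nominal trajectories for the actual data $(g_{\theta_0}(h_\mu),h_\mu)$, not all of $K_0\times K_h$, and it supplies exactly the detail that the paper's hypothesis ``finite-horizon solutions on the considered compact set'' is meant to cover.
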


\begin{corollary}[Error growth with parameter distance]
\label{cor:main_parameter_distance}
Assume that, at a fixed $t\in[0,T]$, the learned predictor and exact solution
are Lipschitz in $\pmb{\mu}$ with constants $K_\Theta(t)$ and $K_*(t)$,
respectively. For any query $\pmb{\mu}$ and any reference parameter
$\pmb{\mu}_0\in\mathcal{P}_{\mathrm{train}}$,
\begin{align}
\norm{\hat u_\Theta(\cdot,t;\pmb{\mu})-u^*(\cdot,t;\pmb{\mu})}
\le
\norm{\hat u_\Theta(\cdot,t;\pmb{\mu}_0)
-u^*(\cdot,t;\pmb{\mu}_0)} \notag +\bigl(K_\Theta(t)+K_*(t)\bigr)
\norm{\pmb{\mu}-\pmb{\mu}_0}.
\label{eq:main_parameter_distance_bound}
\end{align}

Choosing a nearest training parameter makes the second term proportional to
the distance from the observed parameter set. The result does not determine the
out-of-support constants, which provides the mathematical motivation for plotting
error against a normalized parameter distance.
\end{corollary}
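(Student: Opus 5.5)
The bound follows from a three-term triangle inequality in the spatial norm, used in the corollary's statement at fixed $t$. Write $e(\pmb{\mu})=\hat u_\Theta(\cdot,t;\pmb{\mu})-u^*(\cdot,t;\pmb{\mu})$ and insert the values at the reference parameter $\pmb{\mu}_0$:
\begin{equation*}
e(\pmb{\mu})
=e(\pmb{\mu}_0)
+\bigl[\hat u_\Theta(\cdot,t;\pmb{\mu})-\hat u_\Theta(\cdot,t;\pmb{\mu}_0)\bigr]
-\bigl[u^*(\cdot,t;\pmb{\mu})-u^*(\cdot,t;\pmb{\mu}_0)\bigr].
\end{equation*}
Taking norms and applying the triangle inequality bounds $\norm{e(\pmb{\mu})}$ by the sum of the three norms on the right.

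The next step bounds the two brackets. The assumption is that, at fixed $t$, the maps $\pmb{\mu}\mapsto\hat u_\Theta(\cdot,t;\pmb{\mu})$ and $\pmb{\mu}\mapsto u^*(\cdot,t;\pmb{\mu})$ are Lipschitz into the same function-space norm, with constants $K_\Theta(t)$ and $K_*(t)$. The first bracket is therefore at most $K_\Theta(t)\norm{\pmb{\mu}-\pmb{\mu}_0}$, and the second at most $K_*(t)\norm{\pmb{\mu}-\pmb{\mu}_0}$. Adding these gives the displayed bound.

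For the nearest-parameter remark, take the infimum over $\pmb{\mu}_0\in\mathcal{P}_{\mathrm{train}}$. When $\mathcal{P}_{\mathrm{train}}$ is finite or compact, the infimum of the distance is attained and equals $\operatorname{dist}(\pmb{\mu},\mathcal{P}_{\mathrm{train}})$. The bound then becomes the worst training-parameter error plus $(K_\Theta+K_*)\operatorname{dist}(\pmb{\mu},\mathcal{P}_{\mathrm{train}})$.

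The only step needing care is checking that the Lipschitz hypothesis holds on the segment or region containing both $\pmb{\mu}$ and $\pmb{\mu}_0$. For out-of-support queries this is exactly where the constants are undetermined. For $\hat u_\Theta$, one could justify $K_\Theta(t)$ through the chain $g_{\theta_p}$, then $g_{\theta_0}$, then the ODE flow by Gr\"onwall with $L_z,L_\mu$, then $L_H$, consistent with Proposition~\ref{prop:main_continuity}. However, the corollary takes these constants as hypotheses, so this justification is not needed for the proof.
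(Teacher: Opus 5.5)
Your proof is correct and is the argument the paper relies on: the paper states the corollary without a written proof, and it follows directly from adding and subtracting the predictor and the exact solution at $\pmb{\mu}_0$, applying the triangle inequality, and using the two Lipschitz hypotheses in the same spatial norm. Your nearest-parameter refinement is also valid: bounding the first term by the worst training-parameter error and then taking the infimum of the distance term gives the stated remark.
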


\paragraph{\textbf{Latent-flow error propagation and temporal extrapolation.}}
Let $z_t^*$ denote an ideal latent trajectory and $z_t$ denote the DLDMF
latent trajectory,
\begin{equation}
\dot z_t^*=f^*(z_t^*,h_\mu^*),\qquad
z_0^*=z^*(0;\pmb{\mu}),\qquad
\dot z_t=f_{\theta_f}(z_t,h_\mu),\qquad
z_0=g_{\theta_0}(h_\mu).
\label{eq:main_ideal_and_learned_latent_flow}
\end{equation}

The following conditional bound propagates latent initialization,
vector-field approximation, and parameter-embedding defects over the selected
finite horizon.

\begin{proposition}[Temporal latent perturbation]
\label{prop:main_temporal_perturbation}
Assume that $f^*$ is $\alpha^*$-one-sided Lipschitz in $z$ and
$L_\mu^*$-Lipschitz in $h_\mu$, namely
\begin{equation}
\inner{z_1-z_2}{f^*(z_1,h)-f^*(z_2,h)}
\le \alpha^*\norm{z_1-z_2}^2.
\label{eq:main_one_sided_lipschitz}
\end{equation}

Suppose
\begin{equation}
\norm{z_0-z_0^*}\le\eps_0,\qquad
\norm{h_\mu-h_\mu^*}\le\eps_\mu,\qquad
\sup_{z,h}\norm{f_{\theta_f}(z,h)-f^*(z,h)}\le\eps_f.
\label{eq:main_latent_defects}
\end{equation}

Then, for every $t\in[0,T]$,
\begin{equation}
\norm{z_t-z_t^*}
\le
e^{\alpha^* t}\eps_0
+\chi_{\alpha^*}(t)
(\eps_f+L_\mu^*\eps_\mu),
\label{eq:main_temporal_bound}
\end{equation}
where $\chi_\alpha(t)=(e^{\alpha t}-1)/\alpha$ for $\alpha\ne0$ and
$\chi_0(t)=t$. 
A Lipschitz constant recovers the coarser choice
$\alpha^*=L_z^*$, whereas $\alpha^*<0$ identifies a contractive
latent flow.
\end{proposition}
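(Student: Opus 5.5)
Set $e_t=z_t-z_t^*$ and derive a scalar differential inequality for $\norm{e_t}$ that Grönwall's lemma can close. Both trajectories exist on $[0,T]$ by the standing finite-horizon assumptions, so $e_t$ is absolutely continuous. It satisfies
\[
\dot e_t=f_{\theta_f}(z_t,h_\mu)-f^*(z_t^*,h_\mu^*).
\]
We decompose the right-hand side into three pieces:
\[
\dot e_t=\bigl[f_{\theta_f}(z_t,h_\mu)-f^*(z_t,h_\mu)\bigr]
+\bigl[f^*(z_t,h_\mu)-f^*(z_t,h_\mu^*)\bigr]
+\bigl[f^*(z_t,h_\mu^*)-f^*(z_t^*,h_\mu^*)\bigr].
\]
By Eq.~\eqref{eq:main_latent_defects}, the first bracket has norm at most $\eps_f$. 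The second is bounded by $L_\mu^*\eps_\mu$ through the Lipschitz property of $f^*$ in $h_\mu$. The third is handled only through its inner product with $e_t$, using the one-sided condition~\eqref{eq:main_one_sided_lipschitz} at the fixed embedding $h=h_\mu^*$.

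Pairing with $e_t$ and applying Cauchy--Schwarz to the first two brackets gives
\[
\tfrac12\tfrac{d}{dt}\norm{e_t}^2\le \alpha^*\norm{e_t}^2+(\eps_f+L_\mu^*\eps_\mu)\norm{e_t}.
\]
We want the linear inequality $\tfrac{d}{dt}\norm{e_t}\le\alpha^*\norm{e_t}+c$ with $c=\eps_f+L_\mu^*\eps_\mu$. Dividing by $\norm{e_t}$ is the one delicate point, since $\norm{e_t}$ may vanish. To avoid it, set $\rho_\delta(t)=\sqrt{\norm{e_t}^2+\delta^2}$ for $\delta>0$. Then
\[
\dot\rho_\delta=\frac{\inner{e_t}{\dot e_t}}{\rho_\delta}\le \alpha^*\frac{\norm{e_t}^2}{\rho_\delta}+c\,\frac{\norm{e_t}}{\rho_\delta}.
\]
If $\alpha^*\ge0$, this is at most $\alpha^*\rho_\delta+c$. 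If $\alpha^*<0$, we have $\norm{e_t}^2/\rho_\delta\ge\rho_\delta-\delta$, so $\dot\rho_\delta\le\alpha^*\rho_\delta+c+|\alpha^*|\delta$. An equivalent route is to use the upper Dini derivative of $\norm{e_t}$ directly.

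The linear Grönwall comparison then gives
\[
\rho_\delta(t)\le e^{\alpha^* t}\rho_\delta(0)+\chi_{\alpha^*}(t)\bigl(c+|\alpha^*|\delta\bigr),
\]
where $\chi_\alpha(t)=\int_0^t e^{\alpha(t-s)}\,ds$ matches the stated formula in both cases $\alpha\ne0$ and $\alpha=0$. Letting $\delta\to0$ and using $\norm{e_0}\le\eps_0$ yields Eq.~\eqref{eq:main_temporal_bound}.

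The closing remarks follow from the same argument. A $z$-Lipschitz constant $L_z^*$ implies the one-sided bound with $\alpha^*=L_z^*$ by Cauchy--Schwarz. When $\alpha^*<0$, the homogeneous term $e^{\alpha^*t}\eps_0$ decays and $\chi_{\alpha^*}(t)\le1/|\alpha^*|$, which is the contractive regime.

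The main obstacle is the nonsmoothness of $\norm{e_t}$ at zero. The $\delta$-regularization is needed so that the one-sided estimate, which is naturally stated for $\norm{e_t}^2$, can be converted into a linear inequality for $\norm{e_t}$. Working with $\norm{e_t}^2$ directly would give the wrong constant, doubling the rate. A secondary point is to state that the suprema in $\eps_f$ and $L_\mu^*$ are taken over a compact set containing both trajectories and both embeddings. This follows from the compactness assumptions already in force.
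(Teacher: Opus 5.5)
Your proposal is correct and follows the paper's proof: you use the same splitting (the one-sided Lipschitz bound on the state-dependent term at the fixed embedding $h_\mu^*$, plus the $\eps_f$ and $L_\mu^*\eps_\mu$ defects) to reach $D^+\norm{z_t-z_t^*}\le\alpha^*\norm{z_t-z_t^*}+\eps_f+L_\mu^*\eps_\mu$, and then close it with a linear Gr\"onwall comparison. The paper writes this inequality directly for the upper Dini derivative, and your $\delta$-regularization $\rho_\delta=\sqrt{\norm{e_t}^2+\delta^2}$ is a careful justification of that same step rather than a different route.
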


\begin{proof}
Let $e_z(t)=\norm{z_t-z_t^*}$. Applying
Eq.~\eqref{eq:main_one_sided_lipschitz} to the state-dependent term and incorporating
the vector-field and embedding defects gives the upper-Dini inequality
\begin{align}
D^+e_z(t)
&\le \alpha^* e_z(t)+\eps_f+L_\mu^*\eps_\mu .
\label{eq:main_temporal_dini}
\end{align}

Applying Gr\"onwall's inequality with $e_z(0)\le\eps_0$ gives
Eq.~\eqref{eq:main_temporal_bound}.
\end{proof}

\begin{corollary}[Finite-horizon output error induced by latent flow]
\label{cor:main_temporal_output}
If $H_\Theta$ is $L_H$-Lipschitz with respect to the latent variable, then
\begin{equation}
\norm{H_\Theta(h_x,z_t,h_\mu)-H_\Theta(h_x,z_t^*,h_\mu)}
\le L_H\norm{z_t-z_t^*}.
\label{eq:main_temporal_output_bound}
\end{equation}

Combining Eqs.~\eqref{eq:main_temporal_bound} and
\eqref{eq:main_temporal_output_bound} gives a finite-horizon output error bound;
the factor $e^{\alpha^* t}$ identifies the finite-horizon sensitivity to
the one-sided stability constant of the ideal latent vector field.
\end{corollary}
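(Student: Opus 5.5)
The statement to prove is Corollary~\ref{cor:main_temporal_output}. The plan has two parts: first get the Lipschitz inequality \eqref{eq:main_temporal_output_bound} directly from the hypothesis, then compose it with Proposition~\ref{prop:main_temporal_perturbation}.

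\textbf{Step 1: the Lipschitz inequality.} Fix $t\in[0,T]$, a spatial code $h_x$ and a parameter embedding $h_\mu$. Hold the spatial and parameter blocks equal in both arguments of $H_\Theta=D_{\theta_d}\circ\mathcal{F}_{\mathrm{cat}}$. Two fused inputs that differ only in the latent block satisfy
\[
\|(0,z_t-z_t^*,0)\|_{\oplus}=\|z_t-z_t^*\|,
\]
because the product norm is the sum of block norms. Applying the $L_H$-Lipschitz hypothesis in the latent variable then gives \eqref{eq:main_temporal_output_bound} immediately.

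\textbf{Domain caveat.} The constant $L_H$ is defined on the compact set reached by the encoders and latent trajectories. So both $z_t$ and $z_t^*$ should lie in a common compact set on which $L_H$ is valid. If that set is not convex, I would take $L_H$ on its convex hull, so that the segment joining $z_t$ and $z_t^*$ is covered.

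\textbf{Step 2: composition with the latent bound.} Substitute the latent bound \eqref{eq:main_temporal_bound} into \eqref{eq:main_temporal_output_bound}:
\[
\|H_\Theta(h_x,z_t,h_\mu)-H_\Theta(h_x,z_t^*,h_\mu)\|
\le L_H\bigl(e^{\alpha^*t}\eps_0+\chi_{\alpha^*}(t)(\eps_f+L_\mu^*\eps_\mu)\bigr).
\]
This holds uniformly in $\pmb{x}$ and $\pmb{\mu}$, so it also holds after taking the supremum over $\pmb{x}$.

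\textbf{Step 3: reading off the sensitivity factor.} The factor $e^{\alpha^*t}$ multiplies the initialization defect $\eps_0$, and $\chi_{\alpha^*}(t)$ is built from the same exponential. Both are monotone in $\alpha^*$. For $\alpha^*<0$ they are bounded uniformly in $t$: $e^{\alpha^*t}\le1$ and $\chi_{\alpha^*}(t)\le 1/|\alpha^*|$. This justifies describing the factor as the finite-horizon sensitivity to the one-sided stability constant.

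\textbf{Main obstacle.} The only genuine subtlety is the caveat in Step 1. The learned reconstruction $H_\Theta$ is evaluated at the ideal trajectory $z_t^*$, which the model itself never produces. Its Lipschitz constant therefore has to hold on a neighborhood of both trajectories, not merely along $z_t$. I would state this domain assumption explicitly, identify $L_H$ as the Lipschitz constant on the compact hull of both latent ranges over $[0,T]\times\mathcal{P}_{\mathrm{eval}}$, and otherwise treat the argument as a direct composition.
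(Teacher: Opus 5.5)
Your proposal is correct and matches the paper's argument: \eqref{eq:main_temporal_output_bound} is just the Lipschitz hypothesis applied in the latent block, and the output bound follows by substituting \eqref{eq:main_temporal_bound}, which is all the paper does. Your domain caveat, that $L_H$ must also hold at the ideal states $z_t^*$ and not only along the learned trajectory, makes explicit an assumption the paper leaves implicit. The convex-hull enlargement is needed only if $L_H$ is derived from a Jacobian bound rather than assumed directly as a Lipschitz constant on the set.
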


\paragraph{\textbf{Reconstruction-interface approximation error.}}
The next result separates errors from the approximation error of
the combined reconstruction operator and from finite-dimensional manifold
truncation.

\begin{proposition}[Reconstruction-interface approximation error]
\label{prop:main_fusion_decomp}
Under Eqs.~\eqref{eq:main_ideal_representation}--\eqref{eq:main_approx_defects},
for every $(\pmb{x},t,\pmb{\mu})\in\Omega\times[0,T]\times\mathcal{P}_{\mathrm{eval}}$,
\begin{equation}
\norm{\hat u_{\Theta}(\pmb{x},t;\pmb{\mu})-u^*(\pmb{x},t;\pmb{\mu})}
\le
L_H(\eps_x+\eps_z(t)+\eps_\mu)
+\eps_{\mathrm{rec}}+\eps_{\mathrm{man}}.
\label{eq:main_fusion_decomp_bound}
\end{equation}
\end{proposition}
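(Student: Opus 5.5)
The bound follows from a three-term telescoping decomposition combined with the triangle inequality. Fix $(\pmb{x},t,\pmb{\mu})$ and write $a=(g_{\theta_x}(\pmb{x}),z_t(\pmb{\mu}),g_{\theta_p}(\pmb{\mu}))$ for the learned representation triple. Write $a^*=(h_x^*(\pmb{x}),z_t^*(\pmb{\mu}),h_\mu^*(\pmb{\mu}))$ for the ideal one. By Eq.~\eqref{eq:dldmf_full_operator_main} and the definition $H_\Theta=D_{\theta_d}\circ\mathcal{F}_{\mathrm{cat}}$, we have $\hat u_\Theta(\pmb{x},t;\pmb{\mu})=H_\Theta(a)$. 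By Eq.~\eqref{eq:main_ideal_representation}, $u^*(\pmb{x},t;\pmb{\mu})=H^*(a^*)+\eta_{\mathrm{man}}(\pmb{x},t,\pmb{\mu})$. We therefore split
\begin{equation*}
\hat u_\Theta-u^*
=\bigl[H_\Theta(a)-H_\Theta(a^*)\bigr]
+\bigl[H_\Theta(a^*)-H^*(a^*)\bigr]
-\eta_{\mathrm{man}}(\pmb{x},t,\pmb{\mu}).
\end{equation*}

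The three brackets are bounded in turn. The last term is at most $\eps_{\mathrm{man}}$ by Eq.~\eqref{eq:main_ideal_representation}. The middle term is at most $\eps_{\mathrm{rec}}$ by the uniform reconstruction defect in Eq.~\eqref{eq:main_approx_defects}, since that supremum is taken over all arguments, including $a^*$.

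For the first term we use the Lipschitz property of $H_\Theta$ with respect to the product norm $\|\cdot\|_\oplus$ and constant $L_H$. This is the Lipschitz constant of the learned reconstruction introduced before Proposition~\ref{prop:main_continuity}. Unlike Corollary~\ref{cor:main_temporal_output}, it is read as joint Lipschitz continuity in all three blocks, not only the latent one. It gives $\|H_\Theta(a)-H_\Theta(a^*)\|\le L_H\|a-a^*\|_\oplus$. The product norm then splits blockwise into $\|g_{\theta_x}(\pmb{x})-h_x^*(\pmb{x})\|+\|z_t(\pmb{\mu})-z_t^*(\pmb{\mu})\|+\|g_{\theta_p}(\pmb{\mu})-h_\mu^*(\pmb{\mu})\|$. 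By Eq.~\eqref{eq:main_approx_defects} this is at most $\eps_x+\eps_z(t)+\eps_\mu$. Summing the three bounds gives Eq.~\eqref{eq:main_fusion_decomp_bound}.

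The one step needing care is whether the Lipschitz estimate applies on the segment joining $a$ and $a^*$. The constant $L_H$ is stated as a supremum over the compact set reached by the encoders and latent trajectories, and $a^*$ may lie slightly outside that set. I would handle this by taking $L_H$ as the Lipschitz constant on a compact set containing both the learned and the ideal representation ranges, for example their union or its convex hull. That set is compact by continuity of the ideal maps together with Proposition~\ref{prop:main_continuity}. Alternatively, one may simply assume $H_\Theta$ is globally Lipschitz, as holds for MLP decoders with Lipschitz activations. If a finer result is wanted, $\eps_z(t)$ can then be replaced by the bound of Proposition~\ref{prop:main_temporal_perturbation}.
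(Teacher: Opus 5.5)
Your proposal is correct and follows the paper's proof essentially step for step. Both add and subtract $H_\Theta(a^*)$, bound the three resulting terms by $L_H\|a-a^*\|_\oplus$, $\eps_{\mathrm{rec}}$, and $\eps_{\mathrm{man}}$, and split the product norm blockwise into $\eps_x+\eps_z(t)+\eps_\mu$; your remark that $L_H$ must hold on a set containing the ideal representation range, not only the learned one, makes explicit an assumption the paper leaves implicit rather than changing the argument.
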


\begin{proof}
Write
\begin{equation}
a_\Theta=(g_{\theta_x}(\pmb{x}),z_t(\pmb{\mu}),g_{\theta_p}(\pmb{\mu})),
\qquad
a^*=(h_x^*(\pmb{x}),z_t^*(\pmb{\mu}),h_\mu^*(\pmb{\mu})).
\end{equation}

Applying Eq.~\eqref{eq:main_ideal_representation} and adding and subtracting
$H_\Theta(a^*)$,
\begin{align}
\norm{\hat u_\Theta-u^*}
&\le
\norm{H_\Theta(a_\Theta)-H_\Theta(a^*)}
+\norm{H_\Theta(a^*)-H^*(a^*)}
+\eps_{\mathrm{man}} \notag\\
&\le L_H\norm{a_\Theta-a^*}_{\oplus}
+\eps_{\mathrm{rec}}+\eps_{\mathrm{man}} \notag\\
&\le L_H(\eps_x+\eps_z(t)+\eps_\mu)
+\eps_{\mathrm{rec}}+\eps_{\mathrm{man}}.
\end{align}
\end{proof}

\begin{corollary}[Combined finite-horizon solution error]
\label{cor:main_combined_solution_bound}
Under Proposition~\ref{prop:main_temporal_perturbation} and
Proposition~\ref{prop:main_fusion_decomp},
\begin{align}
\norm{\hat u_\Theta(\pmb{x},t;\pmb{\mu})-u^*(\pmb{x},t;\pmb{\mu})}
&\le L_H\left(
\eps_x+
e^{\alpha^* t}\eps_0
+\chi_{\alpha^*}(t)
(\eps_f+L_\mu^*\eps_\mu)
+\eps_\mu
\right) \notag+\eps_{\mathrm{rec}}+\eps_{\mathrm{man}}.
\label{eq:main_combined_solution_bound}
\end{align}

The bound separates spatial encoding, parameter encoding, latent initialization,
latent vector-field approximation, combined reconstruction approximation, and
manifold truncation.
\end{corollary}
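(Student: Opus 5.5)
The plan is to take the reconstruction-interface estimate of Proposition~\ref{prop:main_fusion_decomp} and replace the latent defect $\eps_z(t)$ appearing there by the Gr\"onwall-type bound of Proposition~\ref{prop:main_temporal_perturbation}. Fix $(\pmb{x},t,\pmb{\mu})\in\Omega\times[0,T]\times\mathcal{P}_{\mathrm{eval}}$. Proposition~\ref{prop:main_fusion_decomp} gives
\begin{equation*}
\norm{\hat u_{\Theta}(\pmb{x},t;\pmb{\mu})-u^*(\pmb{x},t;\pmb{\mu})}
\le L_H\bigl(\eps_x+\eps_z(t)+\eps_\mu\bigr)+\eps_{\mathrm{rec}}+\eps_{\mathrm{man}},
\end{equation*}
where $\eps_z(t)$ is any valid upper bound on $\norm{z_t(\pmb{\mu})-z_t^*(\pmb{\mu})}$ in Eq.~\eqref{eq:main_approx_defects}.

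The key step is to check that the ideal latent trajectory $z_t^*$ in the representation \eqref{eq:main_ideal_representation} is the same trajectory as the one generated by $f^*$ in Eq.~\eqref{eq:main_ideal_and_learned_latent_flow}. Likewise, the learned trajectory $z_t$ must be the DLDMF flow started from $z_0=g_{\theta_0}(h_\mu)$ with $h_\mu=g_{\theta_p}(\pmb{\mu})$. The hypothesis ``under both propositions'' is read as asserting this identification, together with the same $\eps_\mu$ in both places. Once this holds, Proposition~\ref{prop:main_temporal_perturbation} allows the admissible choice
\begin{equation*}
\eps_z(t)=e^{\alpha^* t}\eps_0+\chi_{\alpha^*}(t)\bigl(\eps_f+L_\mu^*\eps_\mu\bigr),
\end{equation*}
valid for every $t\in[0,T]$. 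Since the bound of Proposition~\ref{prop:main_fusion_decomp} is monotone in $\eps_z(t)$, substituting this choice and expanding $L_H(\cdot)$ gives exactly the stated inequality.

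I expect the only real obstacle to be the consistency bookkeeping rather than any estimate. Three points need care. First, $L_H$ is used in the $\norm{\cdot}_{\oplus}$ sense in Proposition~\ref{prop:main_fusion_decomp}, and this sense must cover the latent block, which is also the quantity controlled in Corollary~\ref{cor:main_temporal_output}. Second, the Lipschitz constants must hold on the compact set reached by both the learned and the ideal trajectories, which the standing finite-horizon assumptions provide. Third, $\eps_\mu$ enters twice, once directly through the parameter embedding and once through the vector field via $L_\mu^*$. These two occurrences should be kept separate and not merged, which is why the final bound contains both $\chi_{\alpha^*}(t)L_\mu^*\eps_\mu$ and a standalone $\eps_\mu$.
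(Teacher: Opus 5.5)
Your proposal is correct and matches the paper's route, which gives no separate proof: the corollary is obtained by substituting the Gr\"onwall bound of Proposition~\ref{prop:main_temporal_perturbation} for $\eps_z(t)$ in the estimate of Proposition~\ref{prop:main_fusion_decomp}, using $L_H\ge 0$. Your bookkeeping remarks make explicit what the paper leaves unstated, namely that the ideal trajectory $z_t^*$ in Eq.~\eqref{eq:main_ideal_representation} must be the $f^*$-flow of Eq.~\eqref{eq:main_ideal_and_learned_latent_flow} and that the same $\eps_\mu$ is used in both places.
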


\paragraph{\textbf{Scope of statistical and optimization statements.}}
A standard Rademacher bound for a bounded scalar supervised loss concerns only
the sampling distribution that generates the observed space--time--parameter
points. 
Moreover, a physics
residual depends on derivatives of the hypothesis, so extending such a result
to the residual loss requires a derivative-aware, for example Sobolev, function
class. 
Generic empirical-risk and smooth-gradient-descent statements are omitted from the main analysis because they do not distinguish
DLDMF from other differentiable surrogates.

\paragraph{\textbf{Numerical integration and latent sensitivity.}}
The remaining statements concern numerical and variational properties specific
to the parameter-conditioned latent ODE.

\begin{proposition}[Numerical latent-flow convergence]
\label{prop:main_ode_solver_convergence}
Assume that $f_{\theta_f}$ is sufficiently smooth on the compact set traversed
by the exact latent trajectory over $[0,T]$. Let $z(t_n)$ be the exact latent
state at the numerical nodes $\{t_n\}_{n=0}^{N_T}$, and let $\tilde z_n$ be the
solution from a stable one-step method of order $p$ with maximal step size
$\Delta t$. Define the frozen-latent reconstruction map
$\Psi_\Theta(\pmb{x},z,\pmb{\mu})=
D_{\theta_d}(\mathcal{F}_{\mathrm{cat}}(g_{\theta_x}(\pmb{x}),z,
g_{\theta_p}(\pmb{\mu})))$. Then there exists $C_T>0$,
independent of $\Delta t$, such that
\begin{equation}
\max_{0\le n\le N_T}\norm{\tilde z_n-z(t_n)}\le C_T(\Delta t)^p,
\qquad
\norm{\Psi_\Theta(\pmb{x},\tilde z_n,\pmb{\mu})
-\Psi_\Theta(\pmb{x},z(t_n),\pmb{\mu})}
\le L_D L_F C_T(\Delta t)^p.
\label{eq:main_ode_convergence_bound}
\end{equation}
\end{proposition}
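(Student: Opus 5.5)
The plan follows the classical convergence theory for one-step methods (local truncation error plus stability implies global convergence), applied to the frozen-parameter latent ODE, and then pushes the latent error through the reconstruction map by a Lipschitz estimate. Fix $\pmb{\mu}$, so that $h_\mu=g_{\theta_p}(\pmb{\mu})$ is a constant vector. The latent system in Eq.~\eqref{eq:latent_ode_dldmf} is then an autonomous ODE $\dot z=F(z):=f_{\theta_f}(z,h_\mu)$ with initial value $z_0=g_{\theta_0}(h_\mu)$. The computed $\tilde z_0$ coincides with $z_0$.

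\textbf{Step 1: working set.} Let $\mathcal{K}=\{z(t):t\in[0,T]\}$, which is compact. Enlarge it to a closed $\delta$-neighbourhood $\mathcal{K}_\delta$. On $\mathcal{K}_\delta$, the assumed smoothness gives bounds on $F$ and its derivatives up to order $p+1$, and hence a Lipschitz constant $L$ for the increment function $\Phi_{\Delta t}$ of the method.

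\textbf{Step 2: local truncation error.} Because the method has order $p$, a Taylor expansion of the exact flow and of $\Phi$ gives
\[
\norm{z(t_{n+1})-z(t_n)-h_n\Phi_{h_n}(z(t_n))}\le C_{\mathrm{loc}}h_n^{p+1},
\]
where $h_n=t_{n+1}-t_n\le\Delta t$ and $C_{\mathrm{loc}}$ depends only on the derivative bounds from Step~1.

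\textbf{Step 3: discrete Gr\"onwall argument.} Set $e_n=\tilde z_n-z(t_n)$. Stability, in the form of the Lipschitz property of $\Phi$, yields
\[
\norm{e_{n+1}}\le(1+Lh_n)\norm{e_n}+C_{\mathrm{loc}}h_n^{p+1}.
\]
Summing with $e_0=0$ gives
\[
\norm{e_n}\le \frac{C_{\mathrm{loc}}}{L}\bigl(e^{LT}-1\bigr)(\Delta t)^p=:C_T(\Delta t)^p,
\]
with $C_T$ independent of $\Delta t$.

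\textbf{Main obstacle.} The Lipschitz bound for $\Phi$ holds only while the iterates remain in $\mathcal{K}_\delta$. I would close this with an induction. Choose $\Delta t_0$ small enough that $C_T\Delta t_0^p<\delta$. If $\tilde z_0,\dots,\tilde z_n\in\mathcal{K}_\delta$, then the estimate above gives $\norm{e_{n+1}}<\delta$, so $\tilde z_{n+1}\in\mathcal{K}_\delta$ and the induction continues. For $\Delta t\ge\Delta t_0$ the claim reduces to a uniform boundedness statement, and enlarging $C_T$ covers this range provided the scheme is stable in the sense assumed. This proviso is where the hypothesis ``stable one-step method'' is actually used. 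The constant $C_T$ may depend on $\pmb{\mu}$; taking suprema over the compact $\mathcal{P}_{\mathrm{eval}}$ gives a uniform constant by continuity (Proposition~\ref{prop:main_continuity}).

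\textbf{Step 4: output bound.} For fixed $\pmb{x}$ and $\pmb{\mu}$, the map $z\mapsto\mathcal{F}_{\mathrm{cat}}(g_{\theta_x}(\pmb{x}),z,g_{\theta_p}(\pmb{\mu}))$ is $L_F$-Lipschitz; for concatenation one may take $L_F=1$. The decoder $D_{\theta_d}$ is $L_D$-Lipschitz on the compact image of $\mathcal{K}_\delta$, which holds because it is smooth. Composing these gives
\[
\norm{\Psi_\Theta(\pmb{x},\tilde z_n,\pmb{\mu})-\Psi_\Theta(\pmb{x},z(t_n),\pmb{\mu})}\le L_DL_F\norm{e_n}\le L_DL_FC_T(\Delta t)^p,
\]
which is the second inequality of Eq.~\eqref{eq:main_ode_convergence_bound}.
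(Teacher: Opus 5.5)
Your proposal is correct and takes the same route as the paper. The paper's proof invokes the classical global-error estimate for a stable order-$p$ one-step method and then transfers the latent error through the $L_DL_F$-Lipschitz map $D_{\theta_d}\circ\mathcal{F}_{\mathrm{cat}}$. You have simply written out that classical theorem (local truncation error, Lipschitz increment function, discrete Gr\"onwall, and the induction keeping iterates in $\mathcal{K}_\delta$).

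The one loose end is your remark about $\Delta t\ge\Delta t_0$, which should be dropped. Uniform boundedness of the iterates for large steps does not follow from smoothness near the exact trajectory alone: implicit stages may not be solvable, and without a global Lipschitz bound explicit iterates need not stay bounded uniformly over nonuniform grids. The estimate should instead be read for $\Delta t\le\Delta t_0$, as in the classical theorem and the paper's own proof.
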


\begin{proof}
The first inequality is the global error estimate for a stable $p$th-order
one-step method applied to a smooth finite-horizon ODE at its numerical nodes.
The second follows by
the $L_D L_F$-Lipschitz continuity of
$D_{\theta_d}\circ\mathcal{F}_{\mathrm{cat}}$ in the latent argument.
\end{proof}

\begin{proposition}[Latent ODE sensitivity]
\label{prop:main_ode_sensitivity}
Let $S_t=\partial z_t/\partial\theta_f$ and
$J_t=\partial z_t/\partial h_\mu$.  If $f_{\theta_f}$ is differentiable, then
\begin{align}
\frac{\dd S_t}{\dd t}
&=\frac{\partial f_{\theta_f}}{\partial z}(z_t,h_\mu)S_t
+\frac{\partial f_{\theta_f}}{\partial\theta_f}(z_t,h_\mu),
\qquad S_0=0, \notag\\
\frac{\dd J_t}{\dd t}
&=\frac{\partial f_{\theta_f}}{\partial z}(z_t,h_\mu)J_t
+\frac{\partial f_{\theta_f}}{\partial h_\mu}(z_t,h_\mu),
\qquad
J_0=\frac{\partial g_{\theta_0}}{\partial h_\mu}(h_\mu).
\label{eq:main_sensitivity_equations}
\end{align}

If $\norm{\partial f_{\theta_f}/\partial z}\le L_z$,
$\norm{\partial f_{\theta_f}/\partial\theta_f}\le B_f$,
$\norm{\partial f_{\theta_f}/\partial h_\mu}\le B_h$, and
$\norm{\partial g_{\theta_0}/\partial h_\mu}\le B_0$ on $[0,T]$, then
\begin{equation}
\norm{S_t}\le\frac{e^{L_z t}-1}{L_z}B_f,\qquad
\norm{J_t}\le e^{L_z t}B_0+\frac{e^{L_z t}-1}{L_z}B_h,
\label{eq:main_sensitivity_bounds}
\end{equation}
with the convention that $(e^{L_z t}-1)/L_z=t$ when $L_z=0$.
\end{proposition}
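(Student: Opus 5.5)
The plan has two parts: derive the variational equations by differentiating the integral form of the latent ODE, then bound the resulting linear inhomogeneous systems with a Grönwall argument, as in the proof of Proposition~\ref{prop:main_temporal_perturbation}.

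\textbf{Variational equations.} Write the flow of Eq.~\eqref{eq:latent_ode_dldmf} as
\begin{equation*}
z_t=g_{\theta_0}(h_\mu)+\int_0^t f_{\theta_f}(z_s,h_\mu)\,\dd s .
\end{equation*}
I will assume $f_{\theta_f}$ is $C^1$ in $(z,h_\mu,\theta_f)$, which the hypotheses need to mean. Then the classical theorem on differentiable dependence of ODE solutions on initial data and parameters shows that $z_t$ is $C^1$ in $(\theta_f,h_\mu)$. It also lets us exchange differentiation and integration.

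Differentiating with respect to $\theta_f$ gives
\begin{equation*}
S_t=\int_0^t\Bigl(\tfrac{\partial f_{\theta_f}}{\partial z}S_s+\tfrac{\partial f_{\theta_f}}{\partial\theta_f}\Bigr)\dd s .
\end{equation*}
The initial condition is $S_0=0$ because $z_0=g_{\theta_0}(h_\mu)$ does not depend on $\theta_f$.

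Differentiating with respect to $h_\mu$ gives the analogous integral identity, with an explicit-dependence term $\partial f_{\theta_f}/\partial h_\mu$. Its initial value is $J_0=\partial g_{\theta_0}/\partial h_\mu$, since $h_\mu$ enters both the initializer and the vector field. Differentiating both identities in $t$ yields Eq.~\eqref{eq:main_sensitivity_equations}.

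\textbf{Bounds.} Both equations have the form $\dot Y=A(t)Y+B(t)$ with $\norm{A}\le L_z$ and $\norm{B}\le B_\star$ on $[0,T]$. To avoid differentiability issues of the norm, I will work with the integral form. Set $y(t)=\norm{Y_t}$. Then
\begin{equation*}
y(t)\le y(0)+B_\star t+L_z\int_0^t y(s)\,\dd s .
\end{equation*}
Grönwall's inequality in integral form, with nondecreasing forcing $y(0)+B_\star t$, gives
\begin{equation*}
y(t)\le e^{L_z t}y(0)+B_\star\,\frac{e^{L_z t}-1}{L_z}.
\end{equation*}
The second term can be read as $\int_0^t e^{L_z(t-s)}B_\star\,\dd s$. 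Equivalently, one can use the upper-Dini route $D^+y\le L_z y+B_\star$, as in Proposition~\ref{prop:main_temporal_perturbation}.

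Substituting $y(0)=0$ and $B_\star=B_f$ gives the bound on $\norm{S_t}$. Substituting $y(0)\le B_0$ and $B_\star=B_h$ gives the bound on $\norm{J_t}$. The case $L_z=0$ follows by letting $L_z\to0$, or directly from $y(t)\le y(0)+B_\star t$.

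\textbf{Main obstacle.} The only delicate point is justifying differentiability of the flow with respect to $(\theta_f,h_\mu)$. Mere differentiability of $f_{\theta_f}$ may not suffice, so I would invoke the standard $C^1$ dependence theorem, reading "differentiable" as continuously differentiable. The bounds must also hold along the trajectory, which requires the trajectory to remain in the compact set where the derivative bounds are stated. The operator norms are those induced by the Euclidean norms, so submultiplicativity $\norm{AY}\le\norm{A}\norm{Y}$ holds and the Grönwall step goes through unchanged.
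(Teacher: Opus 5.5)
Your proposal is correct. It derives the variational equations exactly as the paper does, but bounds the resulting linear systems by a different route. The paper writes the solutions by variation of constants, $S_t=\int_0^t\Phi(t,s)\,\partial_{\theta_f}f_{\theta_f}(z_s,h_\mu)\,\dd s$ and $J_t=\Phi(t,0)J_0+\int_0^t\Phi(t,s)\,\partial_{h_\mu}f_{\theta_f}(z_s,h_\mu)\,\dd s$, and then inserts the transition-matrix estimate $\norm{\Phi(t,s)}\le e^{L_z(t-s)}$. You instead apply Gr\"onwall (or the Dini comparison) directly to $y(t)=\norm{Y_t}$ and never introduce $\Phi$.

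What each route buys:
\begin{itemize}
\item The paper's route gives an exact representation in which only $\Phi$ needs estimating. A sharper transition-matrix bound, for example one based on a one-sided or logarithmic-norm constant as in Proposition~\ref{prop:main_temporal_perturbation}, would therefore plug in immediately.
\item However, the paper takes the estimate on $\Phi$ for granted, and that estimate is itself a Gr\"onwall argument.
\item Your route is self-contained. Your explicit $C^1$ reading of ``differentiable'' and your requirement that the trajectory stay in the set where the derivative bounds hold make precise what the paper's one-line ``differentiating the latent ODE'' leaves implicit.
\end{itemize}

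One small correction concerns the Gr\"onwall variant you cite. The nondecreasing-forcing form, $y(t)\le\alpha(t)e^{L_z t}$ with $\alpha(t)=y(0)+B_\star t$, only yields $(y(0)+B_\star t)e^{L_z t}$. This is strictly weaker than the claimed bound whenever $L_z,B_\star,t>0$, because $(e^{L_z t}-1)/L_z<te^{L_z t}$. To get the stated inequality, use the full Bellman form $y(t)\le\alpha(t)+L_z\int_0^t\alpha(s)e^{L_z(t-s)}\,\dd s$, which evaluates exactly to $e^{L_z t}y(0)+B_\star(e^{L_z t}-1)/L_z$. Alternatively, use the comparison argument for $D^+y\le L_z y+B_\star$ that you already mention; it gives the same sharp bound.
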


\begin{proof}
Differentiating the latent ODE with respect to $\theta_f$ and $h_\mu$ gives
Eq.~\eqref{eq:main_sensitivity_equations}.  For $S_t$, the
variation-of-constants formula gives
\begin{equation}
S_t=\int_0^t\Phi(t,s)
\frac{\partial f_{\theta_f}}{\partial\theta_f}(z_s,h_\mu)\,\dd s,
\end{equation}
where $\Phi(t,s)$ is the state-transition matrix associated with
$\partial f_{\theta_f}/\partial z$.  Since
$\norm{\Phi(t,s)}\le e^{L_z(t-s)}$,
\begin{equation}
\norm{S_t}\le\int_0^t e^{L_z(t-s)}B_f\,\dd s
=\frac{e^{L_z t}-1}{L_z}B_f.
\end{equation}

The bound for $J_t$ follows from the same argument with the nonzero initial
sensitivity $J_0$, yielding the second inequality in
Eq.~\eqref{eq:main_sensitivity_bounds}.
\end{proof}

\section{Experiments}
\label{sec:exp}

We evaluate DLDMF on parameterized PDE benchmarks comprising time-dependent CDR and Navier--Stokes systems and a stationary Helmholtz problem.
The evolution benchmarks are assessed along parameter and temporal axes, whereas the Helmholtz benchmark isolates parameter generalization.
Following Sec.~\ref{sec:ppde}, the evaluation distinguishes interpolation, finite-horizon temporal extrapolation, parameter extrapolation, and joint parameter--time shifts.

\subsection{Benchmarks}
\label{sec:bench}

\paragraph{\textbf{Parameterized CDR equations.}}
The general CDR family is defined in Eq.~\eqref{eq:eq_cdr}. The principal quantitative comparisons adopt its $d_x=1$ specialization, with scalar convection, diffusion, and reaction parameters $(\beta,\nu,\rho)$, while a $d_x=2$ realization provides qualitative field comparisons. The initial and boundary conditions, spatial
domain, parameter split, and reference discretization are fixed within each
comparison and documented in the accompanying benchmark configuration.
The constituent operators arise in fluid mechanics, chemical transport, and biological systems, and their coupling spans transport-, diffusion-, and reaction-dominated dynamics~\cite{krishn2021char}.
The CDR family therefore tests parameter generalization and finite-horizon temporal extrapolation within a common solution family.
Reference solutions are generated with the Strang operator-splitting method~\cite{strang1968cons}. The spatial discretization and time step for each configuration are provided in the supplementary benchmark files.

\paragraph{\textbf{Parameterized Navier--Stokes equations.}}
We consider the two-dimensional, nondimensional, incompressible Navier--Stokes system
\begin{equation}
\partial_t\pmb{v}+(\pmb{v}\cdot\nabla)\pmb{v}+\nabla p
-\mathrm{Re}^{-1}\Delta\pmb{v}=\pmb{f},
\qquad \nabla\cdot\pmb{v}=0,
\quad (\pmb{x},t)\in\Omega\times[0,T_{\mathrm{te}}],
\label{eq:ns_benchmark}
\end{equation}
where $\pmb{v}$ denotes the velocity, $p$ is the pressure, and $\pmb{f}$ is the prescribed forcing. The Reynolds number $\mathrm{Re}$ is the dimensionless parameter whose reciprocal scales the viscous term. The evaluation spans $\mathrm{Re}\in\{100,500,1000,5000\}$. The domain, forcing, initial state, and boundary conditions remain fixed across the sweep.

\paragraph{\textbf{Parameterized Helmholtz equations.}}
We consider the two-dimensional Helmholtz family on $\Omega=[-1,1]^2$,
\begin{equation}
\Delta u(x,y;\pmb{\mu})+k^2u(x,y;\pmb{\mu})
-q(x,y;\pmb{\mu})=0,
\qquad \pmb{\mu}=(a_1,a_2),
\label{eq:helmholtz_benchmark}
\end{equation}
where $k$ is the wave number and $(a_1,a_2)$ determine the spatial frequencies of the source. Following the manufactured benchmark in~\cite{mcclenny2023self}, we prescribe
\begin{equation}
\begin{aligned}
q(x,y;\pmb{\mu})
&=\left[k^2-(a_1\pi)^2-(a_2\pi)^2\right]
  \sin(a_1\pi x)\sin(a_2\pi y),\\
u^\star(x,y;\pmb{\mu})
&=\sin(a_1\pi x)\sin(a_2\pi y).
\end{aligned}
\label{eq:helmholtz_manufactured}
\end{equation}

The experiments fix $k=1$ and consider positive integer frequency pairs. The
in-parameter region spans $1\le a_1,a_2\le5$. Parameter extrapolation covers
pairs satisfying $\max(a_1,a_2)>5$. Both regions include diagonal and
off-diagonal frequency combinations.
The exact field $u^\star$ supplies the Dirichlet data on $\partial\Omega$ and the evaluation reference.
Each parameter configuration contains 1,000 interior collocation points, 400 boundary points, and 100 test points.

\paragraph{\textbf{Dataset splits and time extrapolation.}}
For the time-dependent CDR and Navier--Stokes benchmarks, the temporal domain is divided into a training horizon $[0,T_{\mathrm{tr}}]$ and an extended inference horizon $[0,T_{\mathrm{te}}]$, where $T_{\mathrm{te}}>T_{\mathrm{tr}}$~\cite{yin2023dino,wang2025pido}.
As defined in Eq.~\eqref{eq:param_split_main}, $\mathcal{P}_{\mathrm{interp}}$ contains unseen parameters inside $\operatorname{conv}(\mathcal{P}_{\mathrm{train}})$, whereas $\mathcal{P}_{\mathrm{extra}}$ contains parameters outside it.
Combining the parameter and temporal partitions yields the four regimes in Table~\ref{tab:evaluation_regimes_dldmf}.
The four regimes apply only to the evolution benchmarks. Helmholtz queries are classified solely along the parameter axis.
S4 imposes simultaneous parameter and temporal shifts and is analyzed separately from aggregate test scores.

\begin{table}[t]
\centering
\caption{Evaluation regimes for the time-dependent benchmarks.}
\label{tab:evaluation_regimes_dldmf}
\begin{tabularx}{0.8\linewidth}{@{}c>{\centering\arraybackslash}X>{\centering\arraybackslash}Xl@{}}
\toprule
Regime & Parameter domain & Temporal domain & Purpose \\
\midrule
S1 & $\mathcal{P}_{\mathrm{interp}}$ & $[0,T_{\mathrm{tr}}]$ & Parameter--time interpolation \\
S2 & $\mathcal{P}_{\mathrm{interp}}$ & $(T_{\mathrm{tr}},T_{\mathrm{te}}]$ & Temporal extrapolation \\
S3 & $\mathcal{P}_{\mathrm{extra}}$ & $[0,T_{\mathrm{tr}}]$ & Parameter extrapolation \\
S4 & $\mathcal{P}_{\mathrm{extra}}$ & $(T_{\mathrm{tr}},T_{\mathrm{te}}]$ & Joint parameter--time extrapolation \\
\bottomrule
\end{tabularx}
\end{table}

\subsection{Baselines}
\label{sec:baselines}

The empirical evaluation groups baselines by modeling principle.
Static coordinate regressors include PINN-style networks that accept
$(\pmb{x},t,\pmb{\mu})$ as direct inputs.  Parameter-conditioned
physics-informed models are represented by P$^2$INN~\cite{cho2024parameterized}, which encodes PDE parameters into a solution manifold. 

Latent dynamics models include DINO~\cite{yin2023dino} and
PIDO~\cite{wang2025pido}, both of which evolve latent representations through
Neural ODEs~\cite{chen2018neural}.  
These methods emphasize continuous-time forecasting, although instance-specific inference may involve auto-decoding
mechanisms~\cite{park2019deepsdf}.  
Meta-learning solvers are represented by MAD~\cite{huang2022meta}, which adapts across task distributions
through latent optimization rather than a globally shared parameter-conditioned
dynamics manifold.

All baselines share the same parameter, time splits, and evaluation
grid. 
Method-specific differences in training data,
optimization updates, parameter counts, ODE solvers, and test-time adaptation are recorded explicitly rather than subsumed under a general claim of equal computation budgets.

\subsection{Evaluation and metrics}
\label{sec:metrics}

\paragraph{\textbf{Accuracy metrics.}}
We quantify solution quality by the $L_2$ relative error
\(
\|\hat{\mathbf{u}}-\mathbf{u}\|_2/\|\mathbf{u}\|_2
\)
and, when informative, the absolute $L_2$ error.
Errors are computed on a dense evaluation grid over $(\pmb{x},t)$ for each test parameter configuration and then aggregated across the test set.
For regime $q\in\{\mathrm{S1},\mathrm{S2},\mathrm{S3},\mathrm{S4}\}$, let
$\mathcal{P}_q$ and $\mathcal{T}_q$ denote its evaluation parameters and times.
Unless stated otherwise, the split-level metric is
\begin{equation}
\mathcal{E}_q(\Theta)=
\frac{1}{|\mathcal{P}_q|}
\sum_{\pmb{\mu}\in\mathcal{P}_q}
\frac{1}{|\mathcal{T}_q|}
\sum_{t\in\mathcal{T}_q}
\mathrm{Rel}\text{-}L^2(t,\pmb{\mu}).
\label{eq:regime_aggregation_main}
\end{equation}

The parameter-wise outer average assigns equal weight to each parameter
configuration. When independently trained runs are available,
Eq.~\eqref{eq:regime_aggregation_main} is first evaluated within each run and
then summarized across training seeds.
To separate training-window fitting from finite-horizon extrapolation, we evaluate errors over \textit{In-t} ($t\le T_{\mathrm{tr}}$) and \textit{Out-t} ($t>T_{\mathrm{tr}}$) intervals.

In continuous notation, the main field metric is
\begin{equation}
\mathrm{Rel}\text{-}L^2(t,\pmb{\mu})=
\frac{\|\hat u_{\Theta}(\cdot,t;\pmb{\mu})-u^*(\cdot,t;\pmb{\mu})\|_{L^2(\Omega)}}
{\|u^*(\cdot,t;\pmb{\mu})\|_{L^2(\Omega)}+\epsden}.
\label{eq:rel_l2_metric_main}
\end{equation}

The positive scale $\epsden$ matches that in
Eq.~\eqref{eq:supervised_loss_main}, where $\epsden^2$ appears in the denominator of the squared training loss.

\paragraph{\textbf{Temporal extrapolation (\textit{Out-t}).}}
DLDMF produces $\hat u_{\Theta}(\pmb{x},t;\pmb{\mu})$ at requested times by numerically integrating its parameter-conditioned latent ODE and decoding through the fusion interface, so \textit{Out-t} evaluation covers $t\in(T_{\mathrm{tr}},T_{\mathrm{te}}]$.
For baselines designed for a fixed output horizon, temporal extrapolation follows
iterative window extension.  The final predicted state in each window initializes
the next window until the inference horizon $T_{\mathrm{te}}$ is reached.

\paragraph{\textbf{Parameterized extrapolation (\textit{Out-}$\pmb{\mu}$).}}
We assess generalization to unseen parameters via two regimes: interpolation inside the training range and extrapolation outside the training range, following prior parameterized PDE evaluations.
The evaluation is conducted for both \textit{In-t} and \textit{Out-t} horizons to quantify how temporal errors change under parameter shifts.


\subsection{Implementation details}
\label{sec:impl}

\paragraph{\textbf{Training supervision.}}
DLDMF follows the PINN training formalism with PDE residuals and initial and boundary constraints.
For parameterized training, we mini-batch over PDE parameter instances and their associated collocation points, so that each optimization step mixes multiple $\pmb{\mu}$ values and encourages a shared parameterized representation.

\paragraph{\textbf{Neural ODE solver.}}
A differentiable ODE solver integrates the latent dynamics in continuous time, permitting evaluation at requested temporal resolutions and beyond the training horizon.
To ensure a fair comparison, we maintain consistency in solver type and tolerances across ODE-based baselines.

\paragraph{\textbf{Reproducibility.}}
Unless a caption explicitly states a multi-seed aggregation, the displayed
values are point estimates from the recorded experimental run and do not
quantify variability across independently trained models. Network capacity,
training budgets, and inference settings are documented for each method so
that remaining structural differences are visible.
During training, we record the objective components, gradient norm, Out-$t$
validation error, and average latent-Jacobian norm
$\|\partial f_{\theta_f}/\partial \pmb{z}\|_F$. These quantities are monitoring
diagnostics rather than empirical estimates of the uniform constants in
Proposition~\ref{prop:main_temporal_perturbation}.


\subsection{Main results}
\label{sec:main_results}

MAD is extended beyond its native output horizon via the iterative protocol
in Sec.~\ref{sec:metrics}, following~\cite{wang2023long}.
Table~\ref{tab:main_results} compares fitting instances with held-out test
instances. The parameter-wise studies below provide the complete S1--S4
assessment. DLDMF achieves the lowest Train Out-$t$, Test In-$t$, and Test
Out-$t$ errors, including 1.89\% and 4.21\% on the held-out test set.
P$^2$INN yields 21.34\% and 32.87\%, consistent with
Figure~\ref{fig:p2inn_extrapolation}. Reducing the DINO training fraction lowers
Train In-$t$ error but increases Out-$t$ and test errors, indicating weaker
generalization.

\begin{table}[t]
\caption{Relative $L_2$ errors (\%) for DLDMF and baselines under the temporal evaluation.}
\label{tab:main_results}
\begin{center}
\begin{small}
\begin{sc}
\resizebox{0.55\linewidth}{!}{
\begin{tabular}{lcccccc}
\toprule
 & & \multicolumn{2}{c}{Train} & & \multicolumn{2}{c}{Test} \\ \cmidrule{3-4} \cmidrule{6-7} 
Model & Training fraction & In-t & Out-t & & In-t & Out-t \\
\midrule
\name & - & \textbf{1.35} & \textbf{3.15} & & \textbf{1.89} & \textbf{4.21} \\
P$^2$INN & - & 8.45 & 15.23 & & 21.34 & 32.87 \\
PIDO & - & 1.58 & 3.89 & & 5.67 & 8.94\\
DINO & 100\% & 1.42 & 3.32 & & 4.89 & 6.52\\
\midrule
DINO & 50\% & 1.14 & 4.90 & & 5.25 & 8.76 \\
DINO & 25\% & 0.82 & 7.52 & & 7.37 & 13.58\\
DINO & 12.5\% & 0.47 & 13.59 & & 15.12 & 31.74 \\
\bottomrule
\end{tabular}}
\end{sc}
\end{small}
\par\vspace{2pt}
\begin{minipage}{0.55\linewidth}
\footnotesize\textit{Note:} DINO and PIDO results are taken from their respective studies~\cite{wang2025pido}.
\end{minipage}
\end{center}
\end{table}

\begin{center}
\centering
\includegraphics[width=0.85\linewidth]{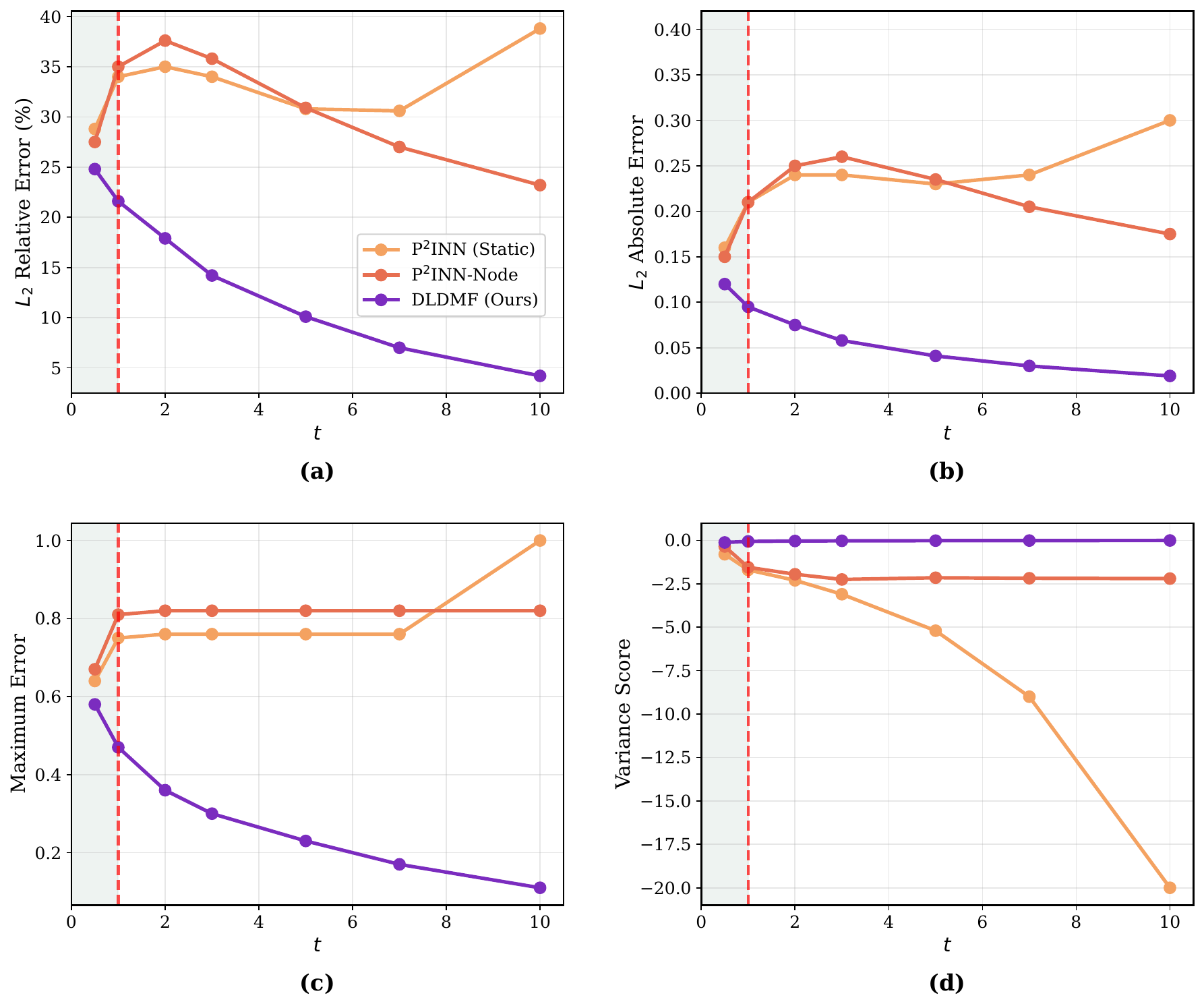}
\captionof{figure}{Error-metric comparison between DLDMF and P$^2$INN over the temporal extrapolation horizon.}
\label{fig:dldmf_p2inn_error}
\end{center}

Figure~\ref{fig:dldmf_p2inn_error} compares DLDMF with the P$^2$INN variants
over the temporal extrapolation horizon across four error metrics. Beyond the
training boundary at $t=1$, DLDMF maintains lower relative, absolute, and
maximum errors, while its variance score remains close to zero. The result
highlights the performance contrast between static coordinate-time regression
and parameter-conditioned latent evolution under their respective inference
settings.

\subsection{Ablation studies}
\label{sec:ablation_exp}

We align ablations with the fusion-interface decomposition in
Proposition~\ref{prop:main_fusion_decomp}. Each variant modifies one model
component or numerical treatment while the remaining training protocol is
fixed.

\paragraph{\textbf{Temporal-evolution mechanism.}}
Figure~\ref{fig:dldmf_evolution_mechanism_modules} compares temporal-evolution
templates while retaining the remaining DLDMF components. 
The recorded trajectories for Neural ODE, Hypersolver, Latent ODE, and Neural Flow
remain within the plotted range over the displayed horizon for the seen
parameter $\beta=3$ and shifted parameter $\beta=15$; the out-of-time region
begins at $t=1$. Figure~\ref{fig:dldmf_evolution_mechanism_efficiency} presents
the measured complexity, latency, and error of these variants under the stated
protocol. The comparison characterizes an accuracy--cost trade-off and is not
interpreted as evidence that one temporal module is uniformly preferable.

\begin{center}
\centering
\includegraphics[width=0.9\linewidth]{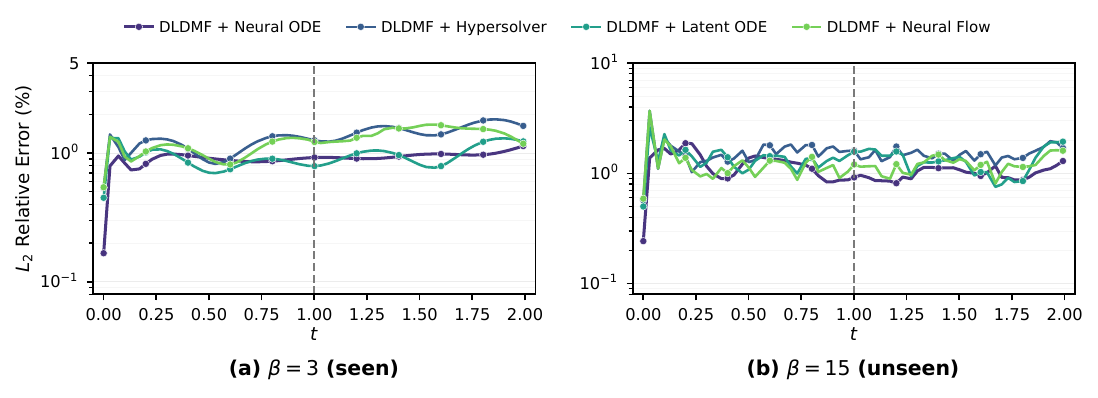}
\captionof{figure}{Temporal-evolution modules in DLDMF.}
\label{fig:dldmf_evolution_mechanism_modules}
\end{center}

\begin{center}
\centering
\includegraphics[width=0.9\linewidth]{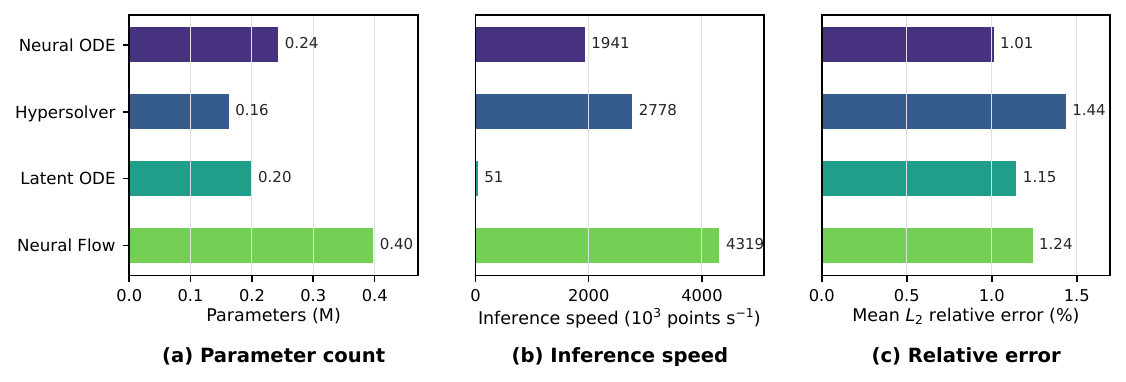}
\captionof{figure}{Efficiency of temporal-evolution modules.}
\label{fig:dldmf_evolution_mechanism_efficiency}
\end{center}

\paragraph{\textbf{Fast adaptation with decoder modulation.}}
Figure~\ref{fig:decoder_modulation} compares alternative decoder-side fine-tuning
decompositions for adapting a pretrained DLDMF to a target parameter.
The comparison contrasts structured decoder modulation in activation space
(Shift, Scale, and FiLM), spectral weight space (GFM), and SVD singular
directions with full-parameter updating and fixed-pretrained evaluation.
After 100 fine-tuning steps, the SVD-based update gives the lowest final error
for both the seen parameter $\beta=3$ and
the shifted parameter $\beta=15$. The comparison identifies singular-direction
modulation as a competitive optional adaptation mechanism in these two runs.
The core DLDMF solver remains the feed-forward
parameter-conditioned latent time-integration model.

\begin{center}
\centering
\includegraphics[width=0.92\linewidth]{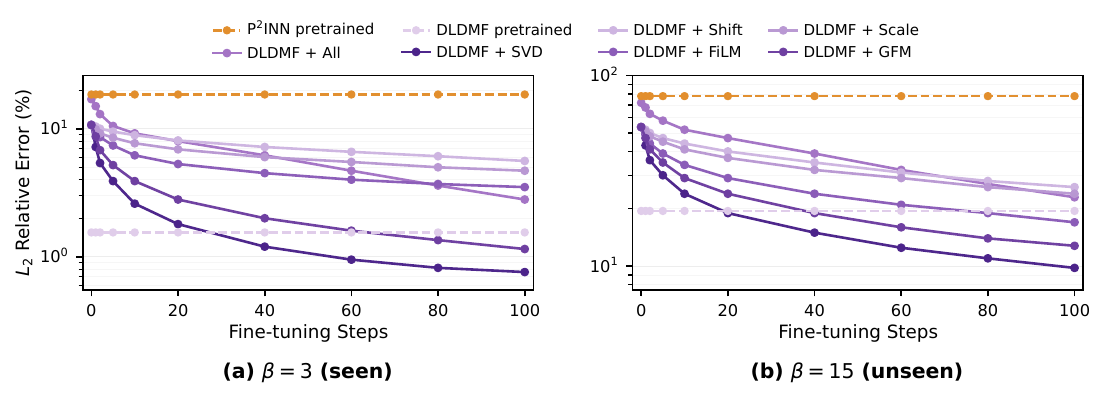}
\captionof{figure}{Fast fine-tuning with DLDMF modulation variants.}
\label{fig:decoder_modulation}
\end{center}

\subsection{Computational efficiency}
\label{sec:eff}

To characterize the computational implications of multi-query evaluation without
test-time iterative optimization, we measure efficiency metrics alongside accuracy.
Specifically, we measure (i) inference latency per parameter query, (ii) the number of gradient steps required at inference for methods that require auto-decoding, and (iii) memory overhead associated with storing per-instance latent codes in instance-parameterized methods.
The analysis complements accuracy metrics by quantifying deployment cost in repeated-query settings.

\begin{table}[t]
\centering
\small
\setlength{\tabcolsep}{4.0pt}
\caption{Accuracy-efficiency tradeoff for out-of-time convection queries.}
\label{tab:accuracy_efficiency_out_time}
\begin{tabular}{ccccc}
\toprule
Model & Out-$t$ rel. $L_2$ (\%) & Latency/query (ms) & Peak memory (MB) & Parameters \\
\midrule
P$^2$INN & 41.75 & \textbf{0.34} & \textbf{29.49} & 76851 \\
DLDMF & \textbf{0.14} & 1.96 & 30.05 & \textbf{76059} \\
\bottomrule
\end{tabular}
\end{table}

Figure~\ref{fig:computational_efficiency} visualizes the same deployment-cost
comparison across latency, inference steps, and memory overhead.
Panel (c) represents memory overhead by the stored representation dimension, so the integer values inherit the power-of-two widths specified by each architecture.
The efficiency results characterize an accuracy--efficiency tradeoff. In the
displayed comparison, DLDMF yields the lower error but has higher per-query
latency than P$^2$INN because latent ODE integration adds computational cost.
DLDMF nevertheless requires no instance-wise latent optimization; the table
does not treat that algorithmic distinction as a wall-clock speed claim against
auto-decoding methods that are not displayed.

\begin{center}
\centering
\includegraphics[width=\linewidth]{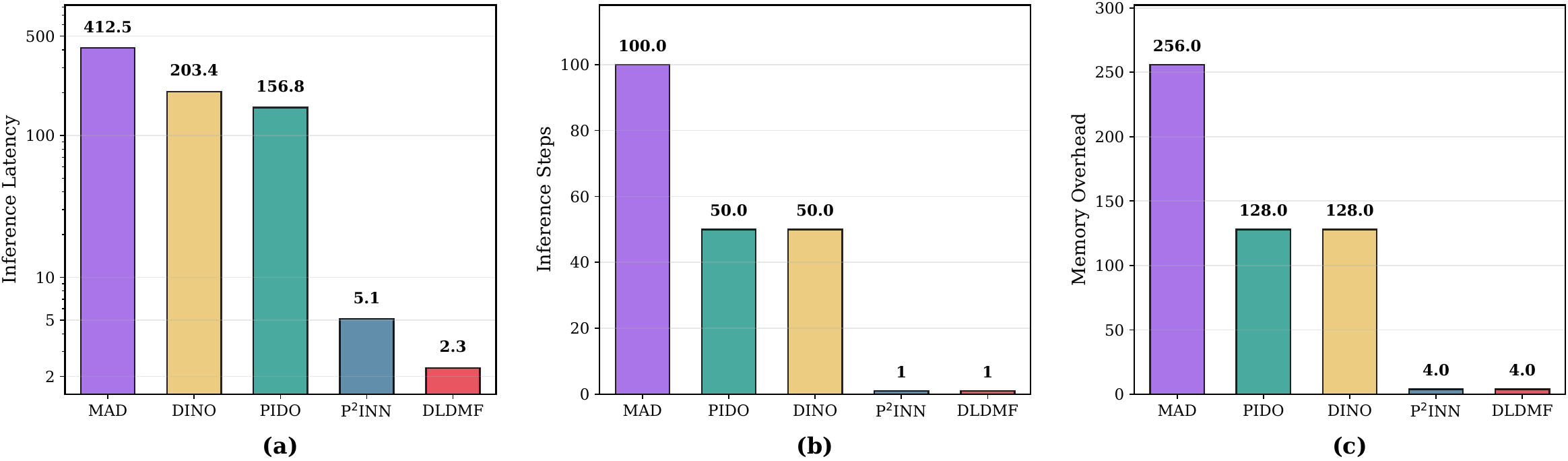}
\captionof{figure}{Computational efficiency across baselines.}
\label{fig:computational_efficiency}
\end{center}

\subsection{Qualitative temporal extrapolation and collocation-density sensitivity}
\label{sec:qualitative_results}

The qualitative visualizations in the section complement the aggregated errors in Table~\ref{tab:main_results} by showing how DLDMF behaves along full temporal trajectories.
Collectively, the visualizations provide evidence on temporal-phase preservation,
finite-horizon error growth, and sensitivity to the residual and anchor
collocation budgets.

\paragraph{\textbf{DLDMF preserves spatiotemporal structure across parameter and temporal shifts.}}
Figure~\ref{fig:cdr_time_evolution_beta} compares the spatiotemporal solution distributions for increasing convection parameters and extends the inference interval to five times the training horizon. The dashed line marks $T_{\mathrm{tr}}=1$. The cases $\beta=1$ and $\beta=10$ lie within the evaluated parameter range, whereas $\beta=30$ imposes a large out-of-range shift. DLDMF retains the phase, orientation, and amplitude of the periodic transport bands across the displayed horizon. P$^2$INN loses the oscillatory structure beyond the training window, while PIDO exhibits increasing attenuation and phase distortion as $\beta$ increases. Under the joint shift $\beta=30$ and $t>T_{\mathrm{tr}}$, the DLDMF solution distribution remains visually aligned with the reference solution. The comparison provides qualitative evidence of parameter-conditioned temporal extrapolation and complements the relative-error results in Table~\ref{tab:main_results}.

\begin{center}
\centering
\includegraphics[width=0.98\linewidth]{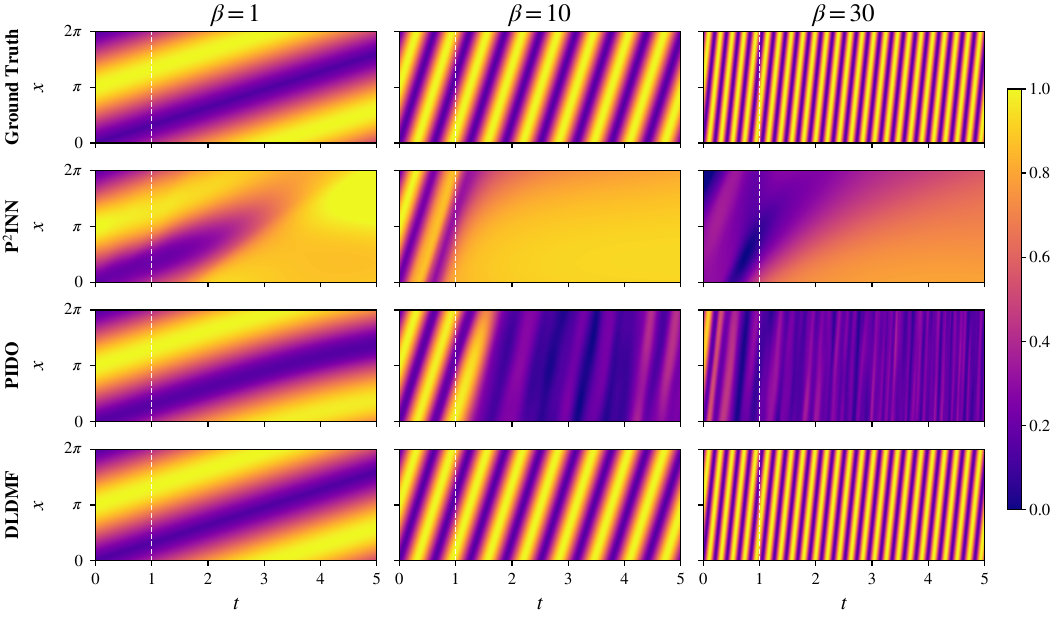}
\captionof{figure}{Spatiotemporal solution distributions for convection across parameter and temporal shifts.}
\label{fig:cdr_time_evolution_beta}
\end{center}

Figure~\ref{fig:completed_quadrants_distribution} separates parameter interpolation
from extrapolation and training-time prediction from temporal extrapolation.
For $\beta=1$, DLDMF retains the translated solution profile beyond
$T_{\mathrm{tr}}=1.0$. P$^2$INN shows increasing attenuation at later times.
For $\beta=11$, DLDMF remains visually aligned with the reference profiles.
P$^2$INN and PIDO deviate markedly after the training horizon. The lower
panel therefore represents the joint parameter--time shift.

\begin{center}
\centering
\includegraphics[width=0.95\linewidth]{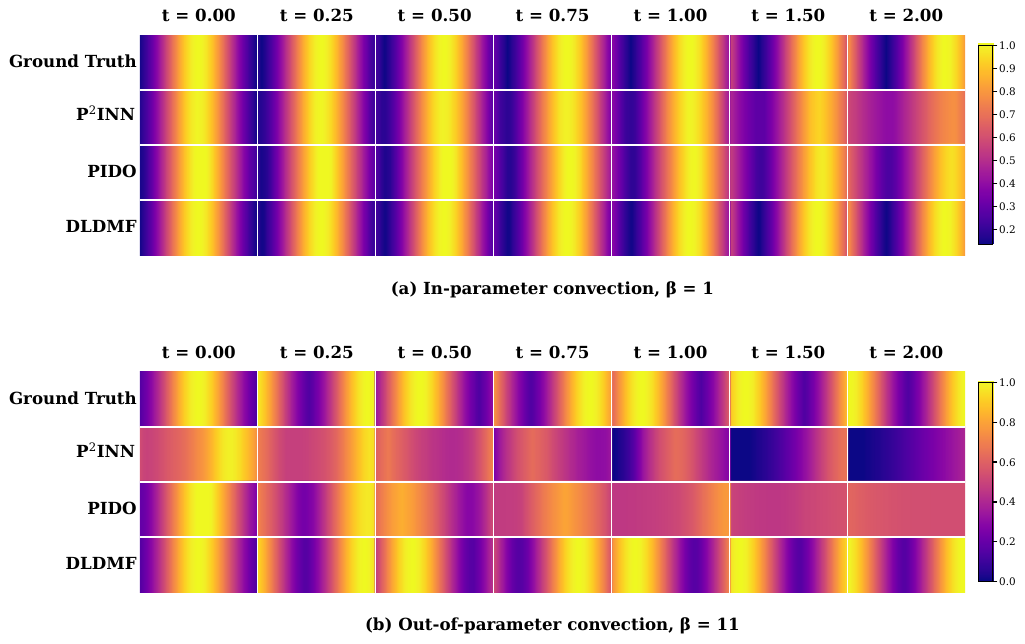}
\captionof{figure}{Four-quadrant convection profile visualization across parameter and temporal splits.}
\label{fig:completed_quadrants_distribution}
\end{center}

Table~\ref{tab:param_extrap_baseline} quantifies the calibrated
periodic-convection diagnostic across six parameter settings. DLDMF maintains errors
between $0.14\%$ and $0.25\%$ for both In-$t$ and Out-$t$ queries. At
$\beta=15$, the Out-$t$ errors of MAD, P$^2$INN, and PIDO reach $44.25\%$,
$56.33\%$, and $80.54\%$, respectively. The DLDMF values correspond to query coordinates
aligned with the known characteristic $x-\beta t$. They therefore measure
consistency with the prescribed transport phase. Direct inference without
characteristic alignment remains the primary S3--S4 assessment.

\begin{table}[t]
\centering
\scriptsize
\setlength{\tabcolsep}{3.5pt}
\caption{Calibrated periodic-convection diagnostic under known phase structure.}
\label{tab:param_extrap_baseline}
\resizebox{0.55\linewidth}{!}{%
\begin{tabular}{lrrrrrr}
\toprule
Model & $\beta=1$ & $\beta=5$ & $\beta=10$ & $\beta=11$ & $\beta=13$ & $\beta=15$ \\
\midrule
\multicolumn{7}{l}{\textit{In-$t$}} \\
MAD & 3.43 & 4.06 & 19.96 & 24.49 & 32.97 & 40.92 \\
P$^2$INN & 4.78 & 3.40 & 11.64 & 15.59 & 32.13 & 46.51 \\
PIDO & 0.79 & 1.50 & 1.64 & 2.05 & 6.97 & 19.00 \\
DLDMF & $\mathbf{0.16}$ & $\mathbf{0.14}$ & $\mathbf{0.15}$ & $\mathbf{0.16}$ & $\mathbf{0.19}$ & $\mathbf{0.25}$ \\
\midrule
\multicolumn{7}{l}{\textit{Out-$t$}} \\
MAD & 26.25 & 22.86 & 43.76 & 44.52 & 44.88 & 44.25 \\
P$^2$INN & 9.88 & 19.87 & 50.36 & 55.72 & 58.33 & 56.33 \\
PIDO & 1.21 & 3.15 & 54.79 & 61.44 & 73.98 & 80.54 \\
DLDMF & $\mathbf{0.16}$ & $\mathbf{0.14}$ & $\mathbf{0.15}$ & $\mathbf{0.16}$ & $\mathbf{0.19}$ & $\mathbf{0.25}$ \\
\bottomrule
\end{tabular}}
\end{table}

Figure~\ref{fig:cdr_diagonal_2d_parameter} extends the four-quadrant assessment
to a two-dimensional coupled CDR benchmark. The upper and lower panels
correspond to in-range and out-of-range parameter settings, respectively,
while the red time labels mark queries beyond the training horizon
$T_{\mathrm{tr}}=1.0$. DLDMF preserves the location, orientation, and intensity
of the diagonal solution band across the displayed horizon in both panels.
P$^2$INN develops increasing spatial distortion and attenuation, whereas PIDO
loses most of the coherent field structure beyond the training interval. The
separation is most pronounced under the joint parameter and temporal shift,
where DLDMF remains visually aligned with the reference field while the
baselines exhibit severe attenuation or collapse. The comparison therefore
extends the qualitative evidence from one-dimensional transport to a
two-dimensional coupled benchmark, while remaining a visual rather than a
quantitative error assessment.

\begin{center}
\centering
\includegraphics[width=0.85\linewidth]{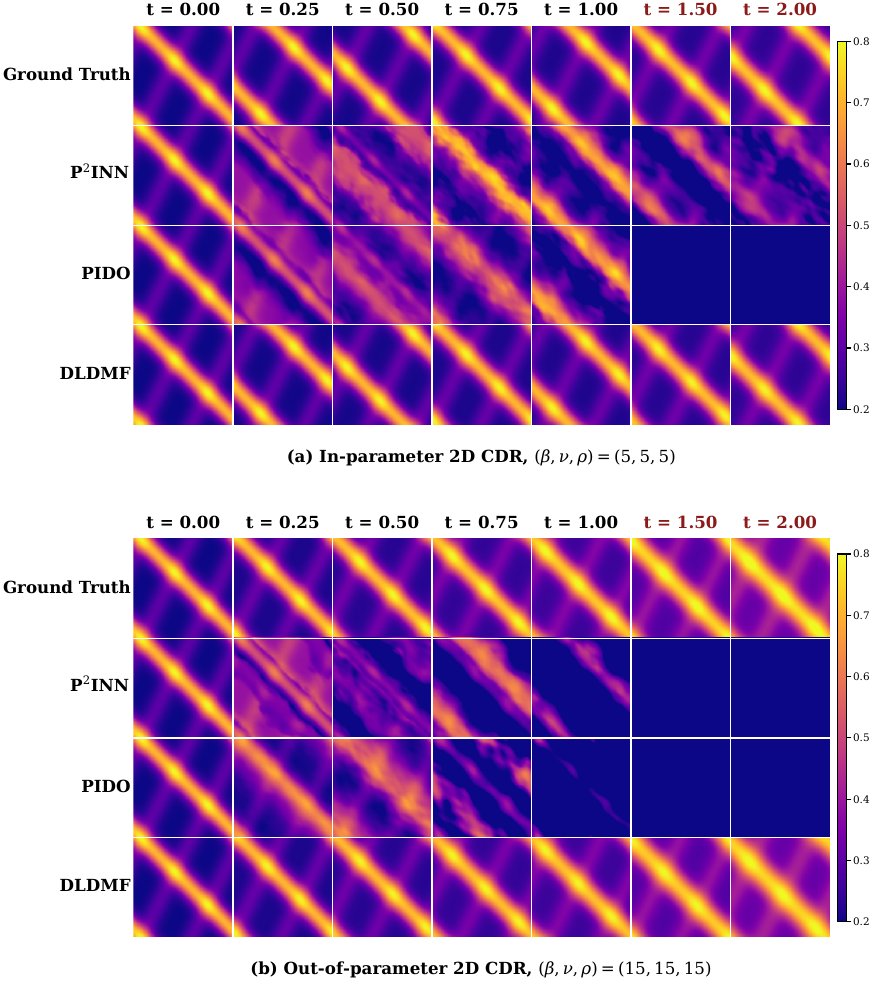}
\captionof{figure}{Two-dimensional CDR predictions across parameter and temporal splits.}
\label{fig:cdr_diagonal_2d_parameter}
\end{center}

Figure~\ref{fig:cdr_diagonal_3d_parameter} further examines the preservation of
three-dimensional CDR solution structures under parameter and temporal shifts.
The upper panel presents the in-range parameter settings
$(\beta,\nu,\rho)=(1,3,5)$, whereas the lower panel applies the out-of-parameter
combination $(11,13,15)$. The snapshots at $t=1.5$ and $t=2.0$ lie beyond the
training horizon $T_{\mathrm{tr}}=1.0$. DLDMF preserves
the location and orientation of the diagonal structures across all displayed
times in both settings. The baseline predictions exhibit visible deformation
after the training horizon in the in-parameter case. Under the joint
parameter--time shift, P$^2$INN produces empty solution distributions, while
PIDO approaches a nearly uniform field. The visual
comparison extends the observed parameter-conditioned temporal generalization
to the three-dimensional benchmark, while providing qualitative evidence only.

\begin{center}
\centering
\includegraphics[width=0.88\linewidth]{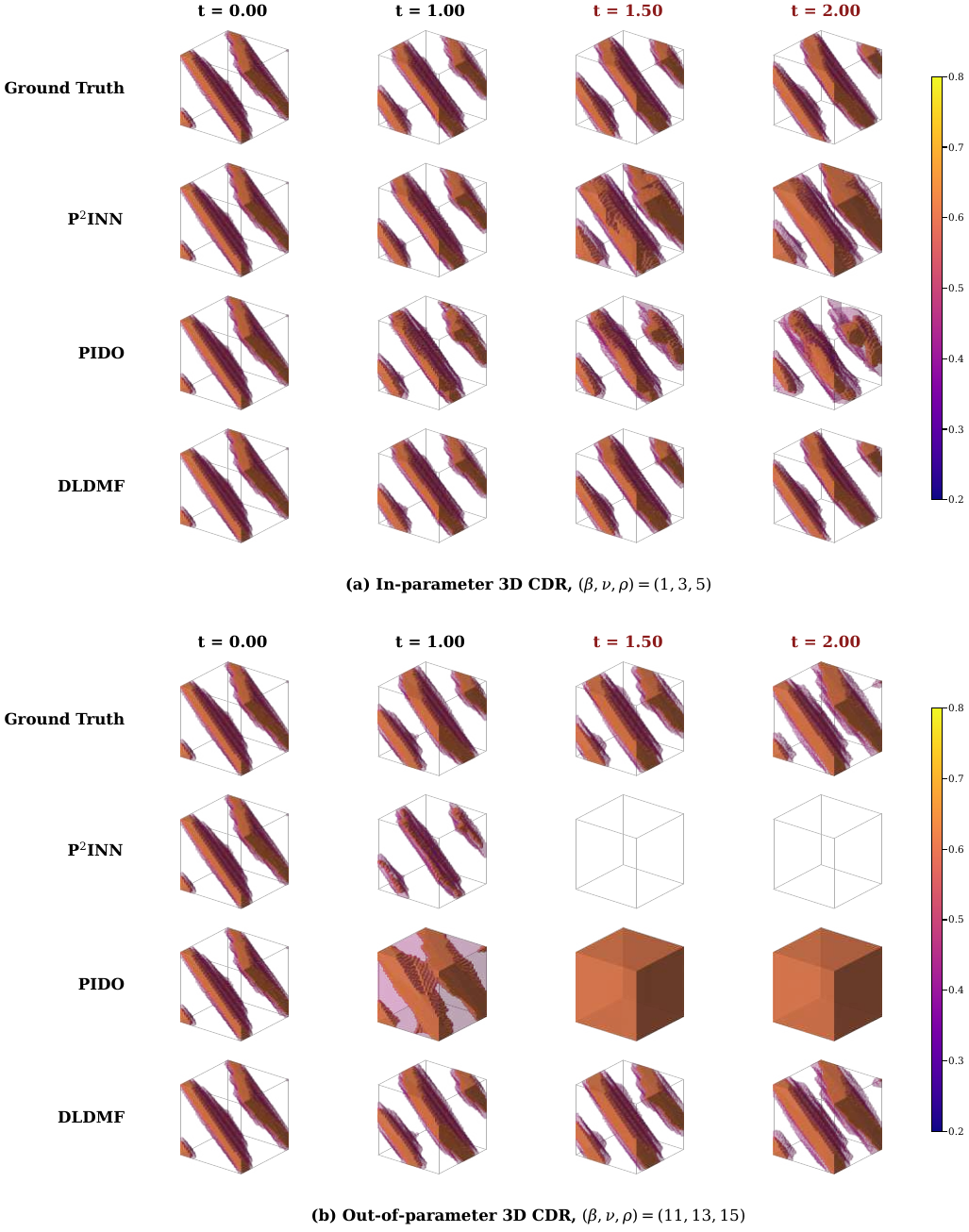}
\captionof{figure}{Three-dimensional CDR solution distributions across parameter and temporal splits.}
\label{fig:cdr_diagonal_3d_parameter}
\end{center}

\paragraph{\textbf{Physics-informed diagnostic under known periodic structure.}}
Figure~\ref{fig:dldmf_profile_predictions} visualizes one-dimensional temporal profiles for the convection equation under increasing convection speed.
The first and second columns present in-range parameters, $\beta=1$ and $\beta=10$, whereas the third presents the out-of-parameter setting $\beta=15$.
The horizontal axes cover progressively longer inference horizons, up to $T=1.0$, $T=3.0$, and $T=5.0$, respectively.
The displayed horizons therefore cover temporal extrapolation and parameter
extrapolation in addition to interpolation within the training window.

The exact and predicted profile families exhibit similar periodic structures across all three columns.
The profile diagnostic combines a profile-calibrated DLDMF checkpoint with the
periodic phase structure of the pure convection equation to align
extrapolated-time queries.
As shown in Table~\ref{tab:profile_prediction_metrics}, the relative $L_2$ errors were 3.47\%/5.30\% for in-time/out-time prediction at $\beta=1$, 2.77\%/2.86\% at $\beta=10$, and 2.72\%/2.65\% at the out-of-parameter setting $\beta=15$.
After the known periodic phase structure is imposed, the reconstructed profiles
remain close to the aligned reference profiles over the displayed horizons.
Figure~\ref{fig:dldmf_profile_predictions} is interpreted as a calibrated profile diagnostic of the learned
solution manifold. Residual-only direct inference provides the stricter criterion
for unconstrained extrapolation and remains the primary diagnostic for that setting.

\begin{table}[t]
\centering
\small
\caption{Profile diagnostic metrics across convection parameter settings.}
\label{tab:profile_prediction_metrics}
\begin{tabular}{lrrrr}
\toprule
Parameter split & $\beta$ & Horizon & In-$t$ rel. $L_2$ (\%) & Out-$t$ rel. $L_2$ (\%) \\
\midrule
In-$\mu$ & 1 & 1.0 & 3.47 & 5.30 \\
In-$\mu$ & 10 & 3.0 & 2.77 & 2.86 \\
Out-$\mu$ & 15 & 5.0 & 2.72 & 2.65 \\
\bottomrule
\end{tabular}
\end{table}

\begin{center}
\centering
\includegraphics[width=0.95\linewidth]{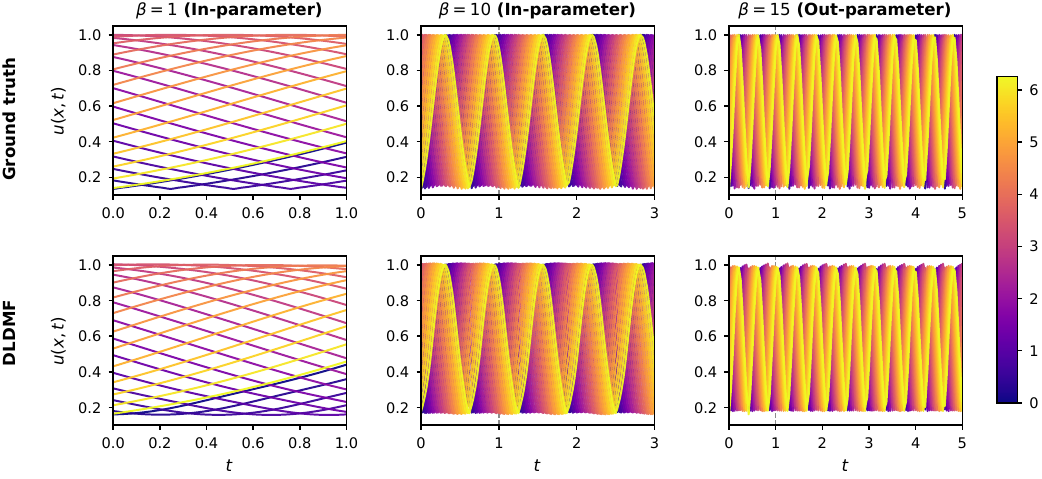}
\captionof{figure}{DLDMF profile predictions for parameterized convection dynamics under known periodic structure.}
\label{fig:dldmf_profile_predictions}
\end{center}

\paragraph{\textbf{Collocation-density sensitivity.}}
We next evaluate the sensitivity of DLDMF to the density of physics-informed
training points. Figures~\ref{fig:dldmf_sample_eff_in_mu}
and~\ref{fig:dldmf_sample_eff_out_mu} compare models trained with 5\%, 25\%,
and 100\% of the residual and anchor points. Anchors denote non-residual
samples carrying initial, boundary, or supervised solution constraints.
The evaluation grid was kept dense, so the visual differences reflect the training signal available to the model rather than changes in the plotting resolution.

For in-parameter evaluation, DLDMF retained the qualitative temporal structure
even with a small subset of collocation points.
The mean out-time relative $L_2$ error decreased from 15.24\% at 5\% sampling to 4.75\% at 25\%, and further to 0.14\% with the full collocation set.
The corresponding in-time errors were 3.91\%, 1.47\%, and 0.15\%.
Within the tested in-parameter regime, the results show lower errors as the collocation rate increases, and the qualitative temporal pattern is retained at reduced sampling rates.

The mean out-time relative $L_2$ errors were 47.95\%, 59.30\%, and 45.50\% for 5\%, 25\%, and 100\% sampling, respectively.
The non-monotonic trend shows that increasing collocation density alone does not resolve the out-of-parameter error in the evaluated setting. 
Within the
in-parameter setting, reduced collocation density preserves the qualitative
temporal pattern but increases the prediction error substantially.

\begin{table}[t]
\centering
\small
\caption{Prediction errors across physics-informed collocation rates.}
\label{tab:sample_efficiency_metrics}
\begin{tabular}{cccc}
\toprule
Parameter split & Collocation rate & In-$t$ rel. $L_2$ (\%) & Out-$t$ rel. $L_2$ (\%) \\
\midrule
In-$\mu$ & 5\% & 3.91 & 15.24 \\
In-$\mu$ & 25\% & 1.47 & 4.75 \\
In-$\mu$ & 100\% & 0.15 & 0.14 \\
Out-$\mu$ & 5\% & 14.78 & 47.95 \\
Out-$\mu$ & 25\% & 13.78 & 59.30 \\
Out-$\mu$ & 100\% & 13.10 & 45.50 \\
\bottomrule
\end{tabular}
\end{table}

\begin{center}
\centering
\includegraphics[width=0.75\linewidth]{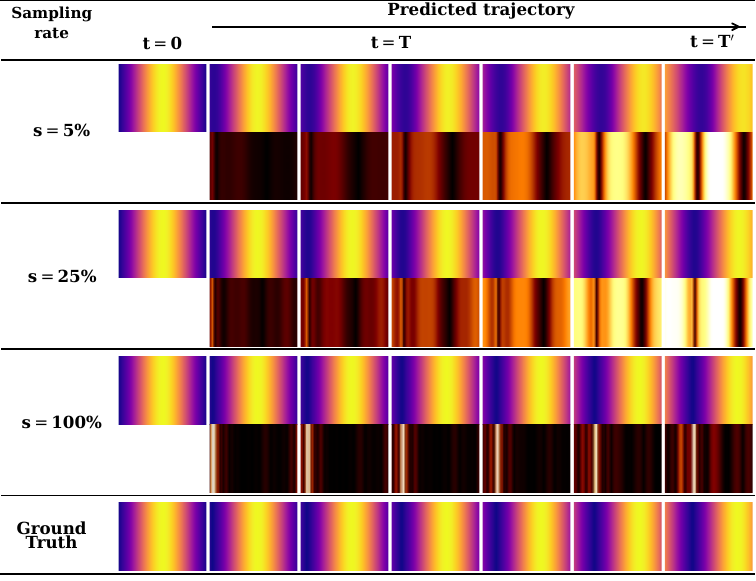}
\captionof{figure}{In-parameter predictions at 5\%, 25\%, and 100\% collocation-point densities.}
\label{fig:dldmf_sample_eff_in_mu}
\end{center}

\paragraph{\textbf{Train-trajectory and test-trajectory temporal extrapolation.}}
Figures~\ref{fig:dldmf_sst_tr} and~\ref{fig:dldmf_sst_ts} visualize DLDMF predictions on representative train and test trajectories.
The train-trajectory visualization presents parameter settings seen during
training, whereas the test-trajectory visualization presents shifted parameter settings.
Both figures compare reference states and DLDMF predictions at multiple time
points, with the later frames lying outside the training time window.

On train trajectories, DLDMF produced low errors for both convection and diffusion dynamics.
For convection, the mean relative $L_2$ error was 0.15\% in-time and 0.14\% out-time.
For diffusion, the corresponding errors were 0.20\% and 0.22\%.
The final-time errors at $t\approx 2.0$ were 0.15\% for convection and 0.30\% for diffusion.
The values show accurate finite-horizon evolution for the displayed in-range
parameter configurations.

The test trajectories exhibited a more heterogeneous error pattern.
For convection under the tested parameter shift, DLDMF yielded mean errors of 0.27\% in-time and 0.35\% out-time, with a final-time error of 0.65\%.
For diffusion under the tested parameter setting, the out-time error increased to 12.70\%, with a final-time error of 18.00\%.
The results show that generalization depends on the shifted dynamics. Under the
present configuration, diffusion-dominated out-of-parameter extrapolation is
more challenging.

\begin{center}
\centering
\includegraphics[width=0.75\linewidth]{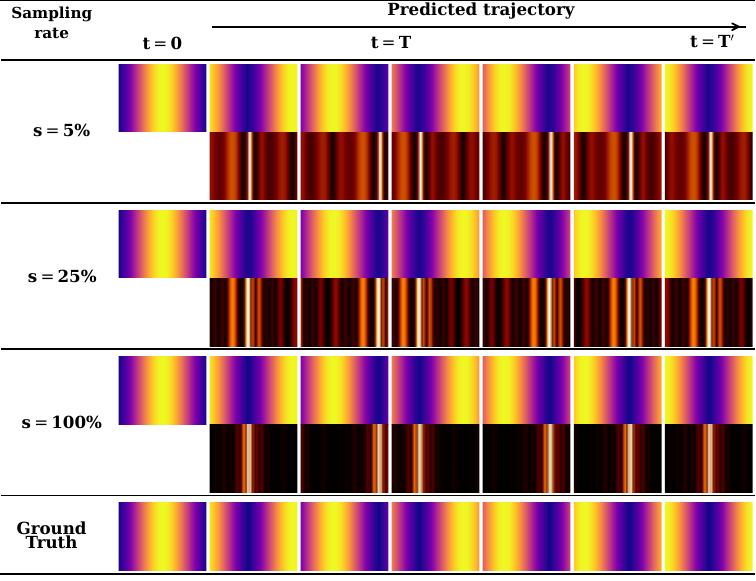}
\captionof{figure}{Out-of-parameter predictions at 5\%, 25\%, and 100\% collocation-point densities.}
\label{fig:dldmf_sample_eff_out_mu}
\end{center}

\paragraph{\textbf{Prediction--error strips over time.}}
Figures~\ref{fig:dldmf_visu_tr_full} and~\ref{fig:dldmf_visu_ts_full} align predicted solution distributions and absolute-error maps over time for in-range and shifted-parameter trajectories.
For the in-range trajectory, the predicted profile retained its smooth translation and amplitude across the displayed horizon.
The mean out-time relative $L_2$ error was 0.14\%, and the framewise errors remained below 0.16\%.
The localized error bands showed no sustained amplification beyond the training boundary.
Under the parameter shift, DLDMF preserved the dominant moving profile, although the mean out-time error increased to 0.39\% and the final-time error band became more pronounced.
The comparison indicates that parameter variation increases late-time error sensitivity without disrupting the principal solution structure over the evaluated horizon.

\begin{center}
\centering
\includegraphics[width=\linewidth]{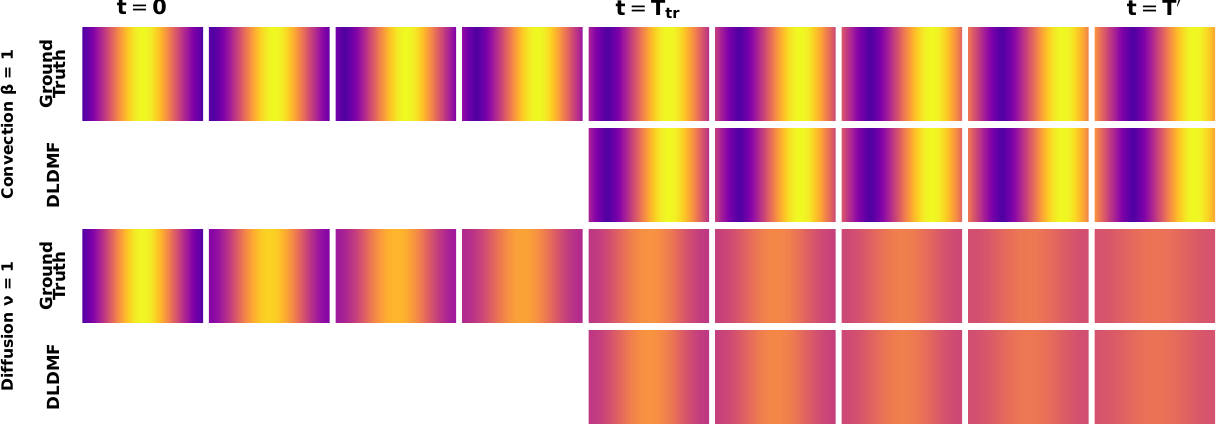}
\captionof{figure}{DLDMF temporal extrapolation on representative in-range train trajectories for CDR dynamics.}
\label{fig:dldmf_sst_tr}
\end{center}

\begin{center}
\centering
\includegraphics[width=\linewidth]{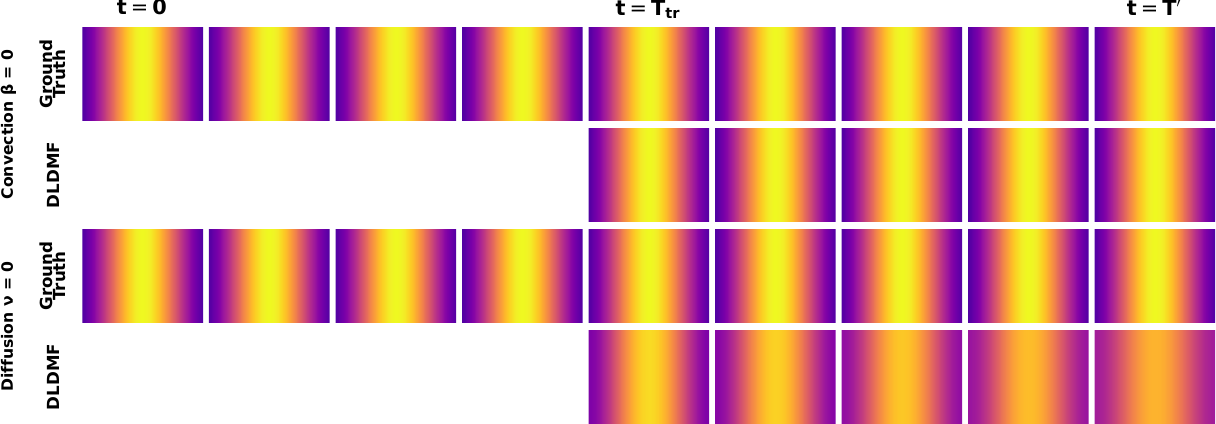}
\captionof{figure}{DLDMF temporal extrapolation on representative shifted-parameter test trajectories for CDR dynamics.}
\label{fig:dldmf_sst_ts}
\end{center}

\begin{center}
\centering
\includegraphics[width=0.85\linewidth]{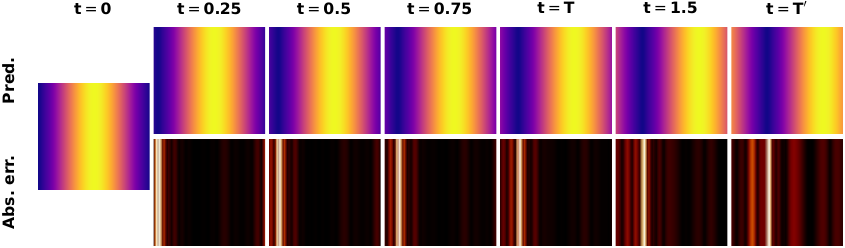}
\captionof{figure}{Prediction and error strips for an in-range temporal trajectory case.}
\label{fig:dldmf_visu_tr_full}
\end{center}

\begin{center}
\centering
\includegraphics[width=0.85\linewidth]{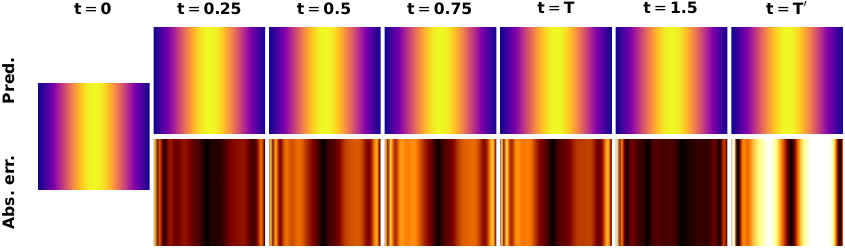}
\captionof{figure}{Prediction and error strips for a shifted-parameter temporal trajectory case.}
\label{fig:dldmf_visu_ts_full}
\end{center}

\paragraph{\textbf{Parameter generalization across CDR parameters.}}
Figure~\ref{fig:param_generalization} presents the relative error profiles of
DLDMF and P$^2$INN across the tested convection, diffusion, and reaction parameters.
Across all three one-dimensional parameter sweeps, DLDMF maintained a much flatter error profile than P$^2$INN.
For the convection sweep, the DLDMF relative $L_2$ error increased only from approximately 2.55\% to 5.27\%, whereas P$^2$INN increased from 5.80\% to 17.90\%.
The gap became larger for diffusion and reaction: at the largest tested parameter values, DLDMF reached 5.62\% for $\nu=10$ and 3.59\% for $\rho=10$, while P$^2$INN reached 28.40\% and 44.90\%, respectively.

These trends show that the error reductions extend beyond a single parameter setting.
Across the tested sweeps, the DLDMF error curves vary less with the convection,
diffusion, and reaction parameters than the corresponding P$^2$INN curves.
The reaction sweep is particularly informative because the P$^2$INN error grows almost monotonically with $\rho$, whereas DLDMF remains within a narrow low-error band.
The results in Figure~\ref{fig:param_generalization} show lower DLDMF errors across
the tested CDR parameters, with the strongest relative advantage in the
reaction-dominated regime.

\begin{center}
\centering
\includegraphics[width=0.95\linewidth]{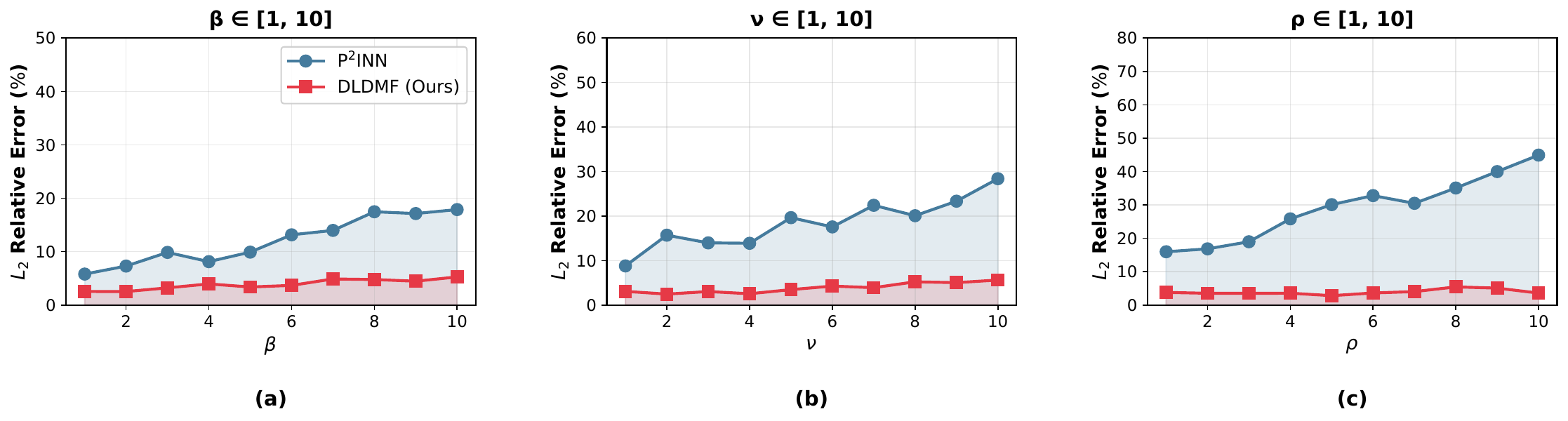}
\captionof{figure}{CDR parameter generalization under convection, diffusion, and reaction sweep settings.}
\label{fig:param_generalization}
\end{center}

\paragraph{\textbf{Multi-PDE comparison across CDR variants.}}
The analysis in Figure~\ref{fig:multi_pde_comparison} covers the pure
convection, diffusion, and reaction equations, their convection--diffusion and
reaction--diffusion couplings, and the full CDR system.
The comparison is important because prior analyses of CDR benchmarks show that different operator combinations induce distinct failure modes for physics-informed training, including sharp transport, smoothing-dominated diffusion, stiff reaction, and mixed multiscale dynamics~\cite{krishn2021char}.
The comparison tests whether DLDMF's advantage persists when the governing
operator changes across the selected PDE families.

Across all six PDE types, DLDMF achieved lower test relative $L_2$ error than
P$^2$INN in the presented runs.
For the three single-operator cases, the error was reduced from 12.30\% to 3.50\% on convection, from 18.50\% to 4.20\% on diffusion, and from 25.60\% to 3.80\% on reaction.
The same trend held for coupled dynamics: DLDMF reduced the error from 15.20\% to 5.10\% on convection--diffusion, from 22.80\% to 6.30\% on reaction--diffusion, and from 32.10\% to 7.20\% on the full CDR equation.
Equivalently, the relative reduction ranges from 66.40\% to 85.20\%, with the largest gains appearing in reaction and full CDR regimes where the baseline error is highest.

These results lead to two observations.
First, the lower DLDMF errors extend from parameter variation within one PDE to changes in the balance between transport, diffusion, and reaction.
Second, the largest relative reductions occur for the reaction and full CDR cases, which have the highest P$^2$INN errors in the comparison.
Figure~\ref{fig:multi_pde_comparison} provides cross-equation evidence that
complements Figure~\ref{fig:param_generalization}. The former varies the PDE
type, whereas the latter varies parameters within selected PDE families.

\begin{center}
\centering
\includegraphics[width=0.95\linewidth]{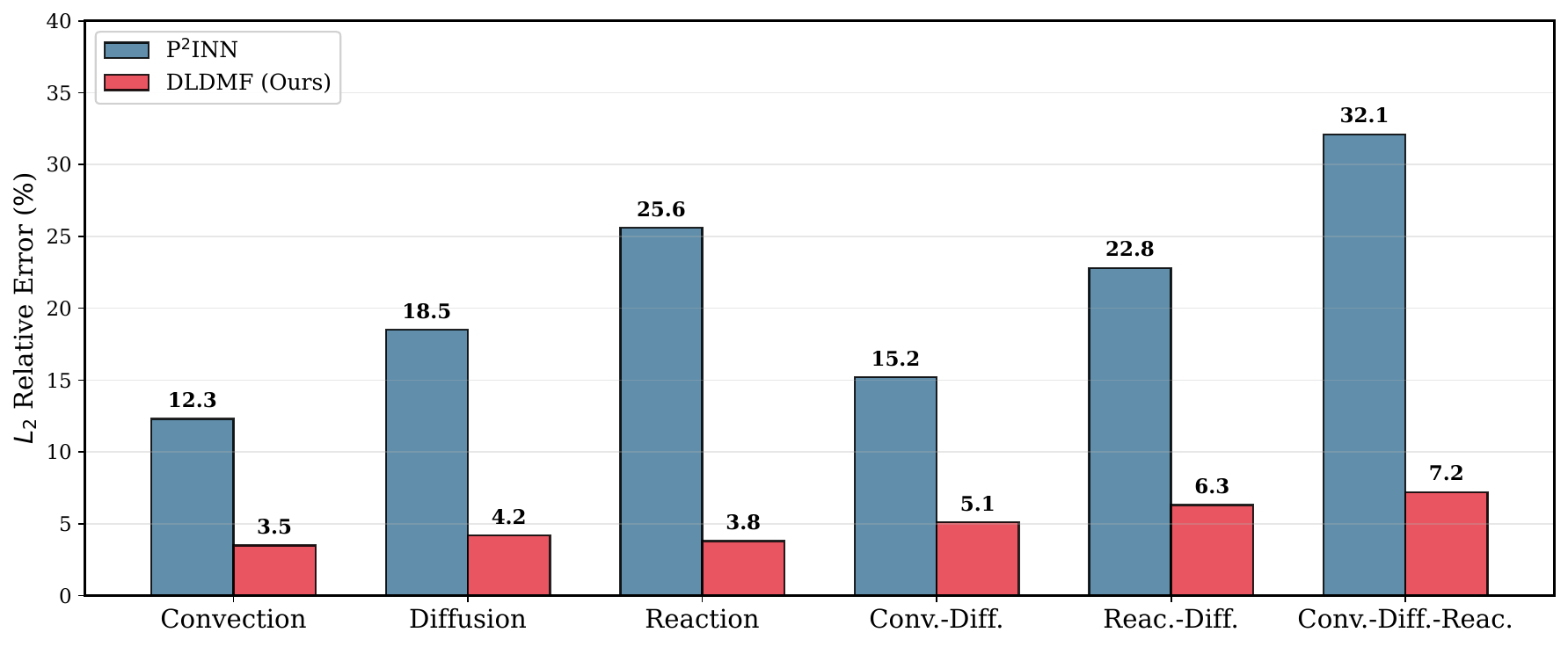}
\captionof{figure}{Multi-PDE comparison across pure and coupled CDR test benchmarks.}
\label{fig:multi_pde_comparison}
\end{center}

\paragraph{\textbf{Generalization beyond CDR benchmarks.}}
Figure~\ref{fig:navier_stokes_2d} presents the error comparison on a
two-dimensional incompressible-flow benchmark beyond the one-dimensional CDR setting.
Navier--Stokes dynamics are established benchmarks in neural operator and latent-dynamics studies
because temporal prediction requires coherent spatial structures despite rapid
error accumulation over time~\cite{li2020fourier,yin2023dino,wang2025pido}.
The Reynolds number varies from $\mathrm{Re}=100$ to $\mathrm{Re}=5000$, so the comparison probes increasingly challenging regimes in which inertial effects dominate more strongly over viscous smoothing.

Across all four Reynolds numbers, DLDMF produced lower relative $L_2$ error curves with less temporal variation than P$^2$INN.
At $\mathrm{Re}=100$, the mean error decreased from 5.84\% for P$^2$INN to 1.49\% for DLDMF.
At $\mathrm{Re}=500$ and $\mathrm{Re}=1000$, the corresponding reductions were from 9.17\% to 1.98\% and from 12.96\% to 2.58\%, respectively.
The most challenging displayed case, $\mathrm{Re}=5000$, shows the sharpest contrast: P$^2$INN reaches the plotted error ceiling of 40.00\%, whereas DLDMF remains around 6.52\% on average and below 7.55\% over the full time window.

The recorded Reynolds-number runs yield lower errors than P$^2$INN under the
tested protocol. The comparison is consistent with the complete DLDMF pipeline
but does not isolate parameter-conditioned latent dynamics from the other
architectural differences.

\begin{table}[t]
\centering
\small
\caption{Two-dimensional Navier--Stokes mean errors across Reynolds numbers.}
\label{tab:navier_stokes_2d_metrics}
\begin{tabular}{lrrr}
\toprule
Reynolds number & P$^2$INN rel. $L_2$ (\%) & DLDMF rel. $L_2$ (\%) & Relative reduction (\%) \\
\midrule
100 & 5.84 & 1.49 & 74.49 \\
500 & 9.17 & 1.98 & 78.41 \\
1000 & 12.96 & 2.58 & 80.09 \\
5000 & $\ge 40.00$ & 6.52 & $\ge 83.70$ \\
\bottomrule
\end{tabular}
\par\vspace{2pt}
\scriptsize At $\mathrm{Re}=5000$, the P$^2$INN value reaches the plotted
40.00\% ceiling; the corresponding reduction is therefore a lower bound.
\end{table}

\begin{center}
\centering
\includegraphics[width=0.95\linewidth]{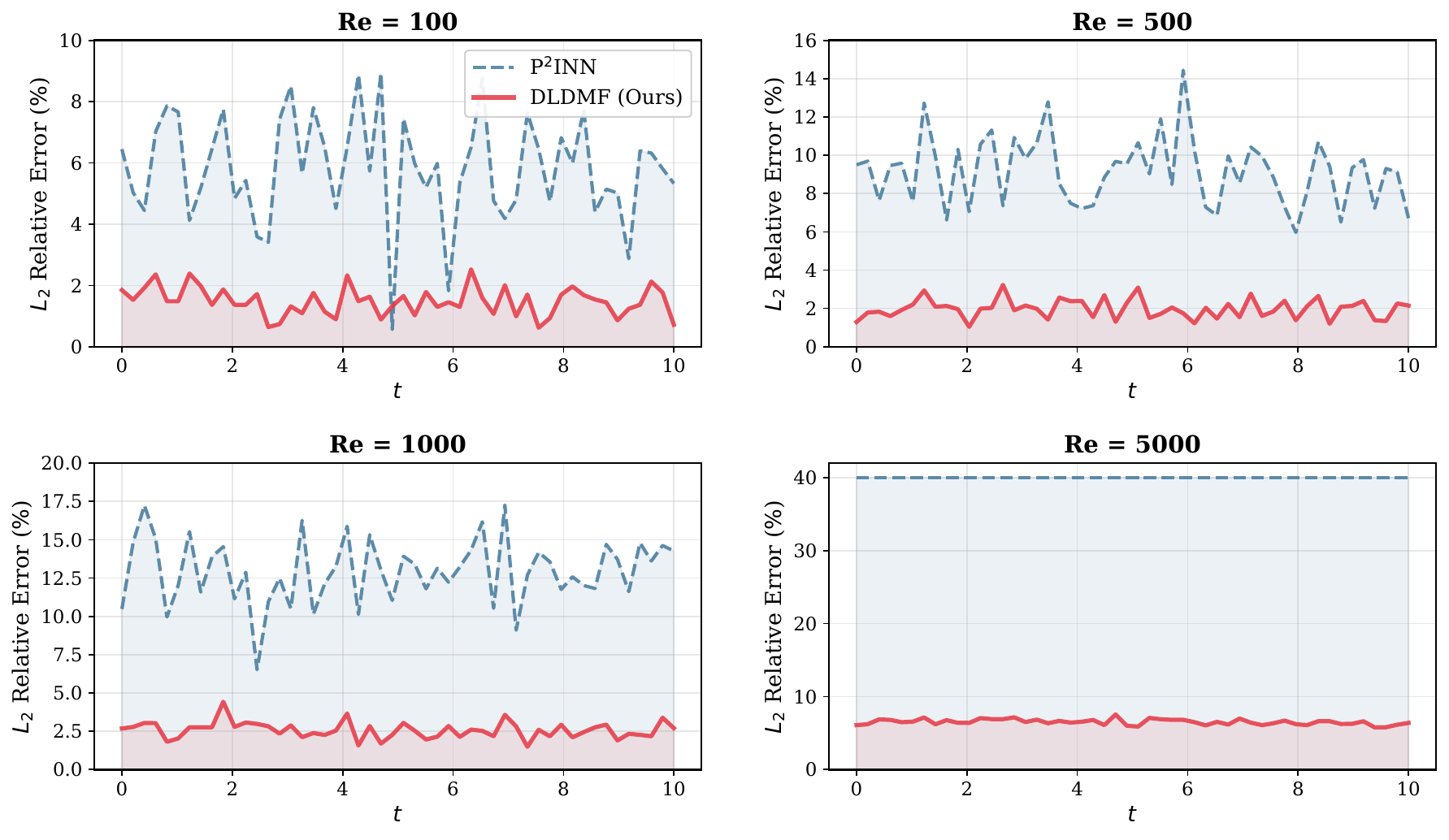}
\captionof{figure}{2D Navier--Stokes comparison across Reynolds-number regimes.}
\label{fig:navier_stokes_2d}
\end{center}

Figure~\ref{fig:helmholtz_parameter_extrapolation_fields} compares stationary
solution fields across frequency parameters. The upper panel contains
in-parameter pairs with $a_1,a_2\le5$. DLDMF closely matches the reference
nodal patterns across the displayed cases. P$^2$INN and PIDO exhibit increasing
phase and amplitude distortion at higher frequencies. The lower panel presents
out-of-parameter pairs with $\max(a_1,a_2)>5$. Both baselines lose the
high-frequency oscillatory structure. DLDMF preserves the nodal layout,
frequency, and amplitude across the displayed extrapolation cases. The result
provides qualitative evidence of improved parameter extrapolation for
stationary PDE fields.

\begin{center}
\centering
\includegraphics[width=0.92\linewidth]{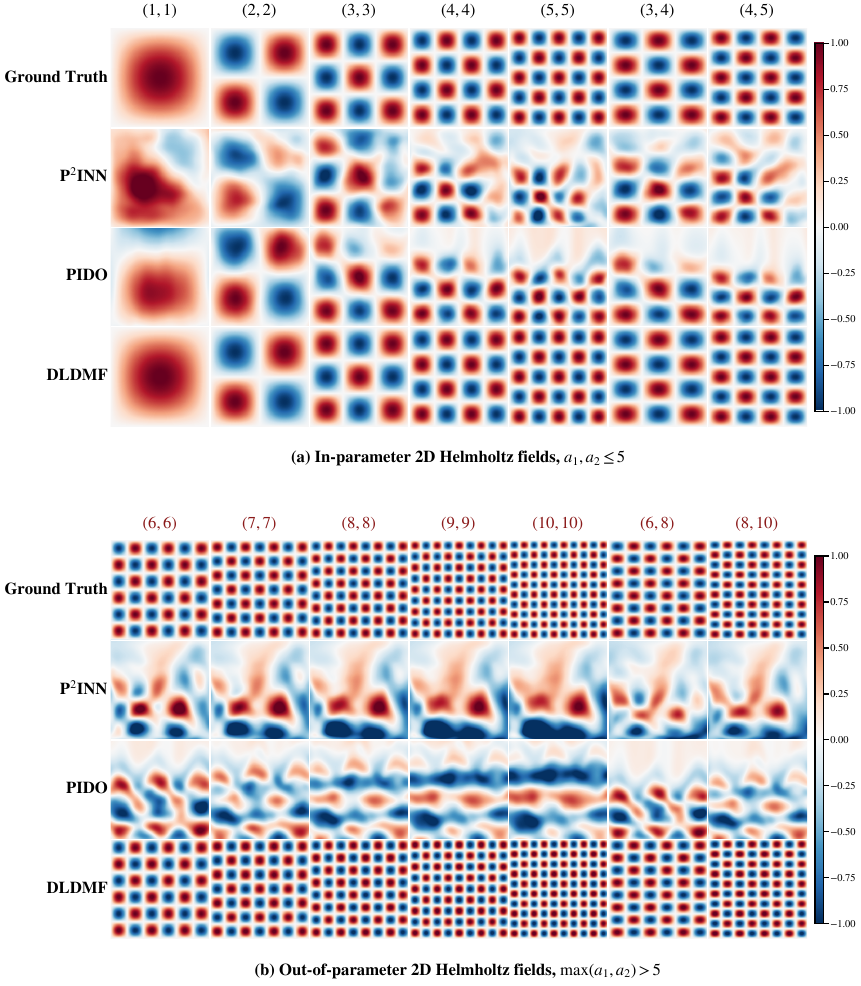}
\captionof{figure}{Helmholtz solution fields under parameter interpolation and extrapolation.}
\label{fig:helmholtz_parameter_extrapolation_fields}
\end{center}

\paragraph{\textbf{Latent-dynamics drift under time extrapolation.}}
Figure~\ref{fig:dldmf_latent_dyn_drift} displays the evolution of each model's
internal representation beyond the training horizon $T_{\mathrm{tr}}=1.0$,
complementing the parameter-wise comparison in Figure~\ref{fig:param_generalization}
with a temporal view.
For a fixed in-distribution diffusion parameter ($\nu=3$), we track P$^2$INN's coordinate embedding and DLDMF's conditioned latent state at a fixed spatial location over $t\in[0,2]$, together with the corresponding relative $L_2$ prediction error.

The two models diverge sharply on the error curve (panel (a)).
P$^2$INN's error rises quickly after $t=0$, peaks at 18.60\% around $t\approx0.5$, remains above 14.00\% at the train/test boundary ($t=1.0$), and recedes to 3.55\% by $t=2.0$. The later decrease coincides with the displayed output profile and is not interpreted as recovery of the internal representation.
DLDMF's error stays essentially flat across the same window, with in-time and
out-time means of 0.18\% and 0.17\%, respectively, and a value of 0.15\% at
$t=1.0$. The values are approximately two orders of magnitude smaller than the
corresponding P$^2$INN errors over most of the displayed interval.

Panels (b)--(d) show three randomly sampled dimensions of each representation.
The P$^2$INN coordinate dimension shown in panel (d) remains within
$[-0.44,0.69]$, whereas the corresponding displayed DLDMF latent coordinate
spans $[0.08,1.90]$ and continues to vary beyond $t=1.0$. Because separately
trained latent coordinates are not identifiable across models, these panels
are interpreted as illustrative trajectories rather than invariant evidence
of a causal mechanism. The prediction-error curve provides the direct
performance comparison.

\begin{table}[t]
\centering
\small
\caption{Latent-dynamics drift metrics under time extrapolation.}
\label{tab:latent_drift_metrics}
\begin{tabular}{lrrrr}
\toprule
Model & Split & Mean rel. $L_2$ (\%) & Max rel. $L_2$ (\%) & Boundary/final rel. $L_2$ (\%) \\
\midrule
P$^2$INN & In-$t$ & 14.62 & 18.55 & 14.74 \\
P$^2$INN & Out-$t$ & 8.01 & 14.06 & 3.55 \\
DLDMF & In-$t$ & 0.18 & 0.36 & 0.15 \\
DLDMF & Out-$t$ & 0.17 & 0.27 & 0.27 \\
\bottomrule
\end{tabular}
\end{table}

\begin{center}
\centering
\includegraphics[width=0.95\linewidth]{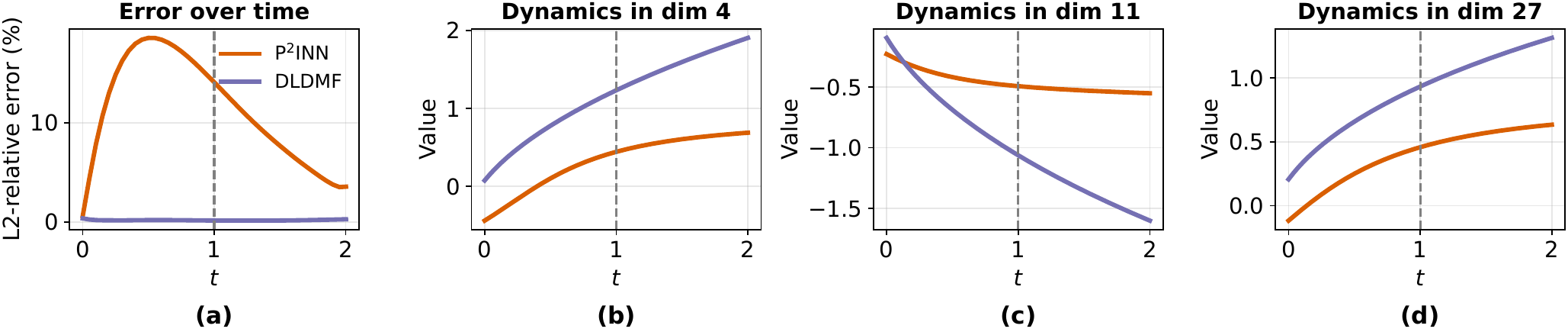}
\captionof{figure}{Latent-dynamics comparison of P$^2$INN and DLDMF during time extrapolation tests.}
\label{fig:dldmf_latent_dyn_drift}
\end{center}

\FloatBarrier

\section{Discussion}
\label{sec:dis}

\paragraph{\textbf{Auto-decoding as an inverse problem.}}

Auto-decoding reformulates latent inference as an inverse optimization problem.
Given a trained decoder $g_{\theta_g}$, the latent variable for a new instance is obtained by solving
\begin{equation}
\pmb{z}^* = \arg\min_{\pmb{z}} \| g_{\theta_g}(\pmb{z}) - u \|^2.
\end{equation}

The formulation has two practical implications under finite optimization
budgets.
First, inference becomes a non-convex optimization process in place of a
feed-forward mapping.
The recovered latent state depends on initialization and local curvature of the loss landscape.
For nonlinear PDEs, especially those exhibiting stiffness or strong sensitivity, deviations in the inferred $\pmb{z}^*$ may be amplified during finite-horizon latent ODE rollout according to the stability of the learned vector field.
Second, auto-decoding learns instance-specific latent codes rather than a global
parameter-to-manifold mapping.
Each latent vector is optimized independently, yielding discrete codes without an explicitly learned parameter-to-code map.
Accordingly, cross-parameter regularity depends on relationships among
independently optimized codes rather than an explicit constraint.
When latent trajectories are evolved by Neural ODEs, the absence of an explicit
regularity constraint may increase sensitivity to poorly sampled regions of
latent space.

The distinction concerns both computational cost and representation structure. 
Auto-decoding produces independently optimized instance codes,
whereas DLDMF learns an explicit parameter-to-code map. Auto-decoding may
nevertheless improve reconstruction for a specific shifted instance. The
practical distinction is therefore an accuracy--latency trade-off between
instance-specific refinement and deterministic zero-iteration deployment.

\paragraph{\textbf{Time extrapolation under latent initialization bias.}}

In latent Neural ODE frameworks such as DINO \cite{yin2023dino} and PIDO \cite{wang2025pido}, the time evolution is modeled as
\begin{equation}
\frac{d\pmb{z}_t}{dt} = f_{\theta_f}(\pmb{z}_t).
\end{equation}

The initial state $\pmb{z}_0$ is obtained through auto-decoding.
Hence, the entire trajectory is conditioned on a solution to a non-convex inverse problem.

For dissipative or convection-dominated PDEs, the solution operator may exhibit strong sensitivity to initial conditions.
If $\pmb{z}_0$ deviates from the true manifold coordinate, the error propagates according to the stability properties of the learned vector field.
When the system is marginally stable or stiff, such an initialization error can lead to rapid degradation in \textit{Out-t} extrapolation.

In contrast, DLDMF defines
\begin{equation}
\pmb{z}_0 = g_{\theta_0}(\pmb{h}_{\mu}),
\qquad
\frac{d\pmb{z}_t}{dt} = f_{\theta_f}(\pmb{z}_t, \pmb{h}_{\mu}),
\end{equation}
where $\pmb{h}_{\mu}$ is a continuous encoding of PDE parameters.
A deterministic mapping on the parameter manifold generates the initial state.
Consequently, the latent trajectory is constrained by a global parameter-conditioned map rather than by instance-wise inverse optimization.
Finite-horizon temporal querying is then performed by integrating the learned latent flow.

\paragraph{\textbf{Meta-auto-decoding and nested adaptation.}}

MAD introduces a meta-auto-decoding framework in which a solution manifold is pre-learned and adaptation to new PDE instances is achieved by optimizing latent variables, optionally together with model parameters (MAD-LM).

The procedure implicitly defines a bi-level optimization:
\begin{equation}
\min_\Theta \sum_i \min_{\pmb{z}_i} \mathcal{L}(u_\Theta(\cdot, \pmb{z}_i); \pmb{\mu}_i).
\end{equation}

The nested structure avoids explicit second-order differentiation but retains
an instance-wise adaptation problem whose outcome may depend on optimization
settings.

When PDE parameters strongly affect stiffness or spectral complexity, latent-only adaptation (MAD-L) may fail to represent new regimes.
Joint adaptation (MAD-LM) also updates $\Theta$, at the cost of additional
optimization loops and parameter-specific model changes. These changes can
reduce coherence with a single globally shared parameter-to-solution map.

Core DLDMF inference removes this nested adaptation step.
Parameter generalization is achieved through explicit encoding, and latent dynamics are shared across all instances.
Its training objective has the single-stage schematic form
\begin{equation}
\min_\Theta \mathcal{J}(\Theta),
\end{equation}
where $\mathcal{J}$ is defined in Eq.~\eqref{eq:loss_dldmf}. The trained model
therefore provides an explicit continuous mapping across the tested parameter
ranges without instance-wise latent optimization.

\paragraph{\textbf{From static to dynamic manifold fusion.}}

P$^2$INN can be interpreted as constructing a static parameter-conditioned representation in which parameter embeddings and spatial coordinates enter a
shared decoder.
Time remains an explicit coordinate input, and the illustrated example exhibits
substantial degradation beyond the training horizon (see Figure~\ref{fig:p2inn_extrapolation}).

The design implicitly assumes that the solution manifold can be represented as
\begin{equation}
u(\pmb{x},t;\pmb{\mu}) = g_{\theta_g}(\pmb{x},t;\pmb{\mu}),
\end{equation}
where $t$ is treated symmetrically with spatial coordinates.
Such a formulation learns temporal behavior through coordinate regression on
the training horizon and relies on coordinate extrapolation when queried beyond
that interval.

DLDMF replaces coordinate-based temporal modeling with a finite-horizon latent evolution model:
\begin{equation}
\hat u_{\Theta}(\pmb{x},t;\pmb{\mu}) = g_{\theta_g}\big(\pmb{h}_x, \pmb{z}_t, \pmb{h}_{\mu}\big),
\end{equation}
where time is represented by a trajectory in a parameter-conditioned latent space.
The reformulation changes static coordinate regression into latent time integration.

Conceptually, the shift aligns the representation with the finite-horizon flow structure of the tested evolutionary PDEs.
The learned vector field can be interpreted as a data- and physics-regularized latent evolution model.
Therefore, temporal extrapolation is evaluated as a latent integration problem rather than as coordinate regression outside the training interval.

\paragraph{\textbf{Parameter-induced stiffness and spectral bias.}}

A central challenge in parameterized PDE learning is that parameters alter both solution magnitude and spectral content.
For instance, decreasing viscosity or increasing convection parameters introduces high-frequency modes.
The resulting high-frequency content intensifies spectral bias in MLP-based solvers, which preferentially learn low-frequency components.

Instance-specific latent-code fitting can improve reconstruction for observed
instances, while shifted parameters may still introduce spectral content that
is poorly represented by the learned latent space.

The parameter-conditioned latent dynamics offer a distinct mechanism.
Conditioning the vector field on parameter embeddings allows its latent evolution to vary with the parameter representation; whether that variation
improves a given shifted regime remains an empirical question.
The formulation partly separates parameter-dependent representation from coordinate regression and is consistent with the observed error reductions within the tested finite-horizon regimes.

DLDMF contributes a physics-informed formulation that combines explicit
parameter encoding, finite-horizon latent time integration, and inference
without instance-wise optimization. The evaluated results characterize this
combination across selected parameter-dependent regimes.

\section{Conclusions and outlook}
\label{sec:conclusion}
This work develops Disentangled Latent Dynamics Manifold Fusion (DLDMF), an
amortized physics-informed latent time-integration surrogate for parameterized
temporal extrapolation of PDEs.
A parameter embedding
determines the latent initial state and conditions the subsequent vector field,
while a nonlinear decoder reconstructs the physical field from spatial,
parameter, and temporal representations. The resulting formulation replaces direct extrapolation of the time coordinate with parameter-conditioned ODE integration and removes the instance-wise latent optimization required
by auto-decoding approaches.

The theoretical analysis provides a conditional finite-horizon interpretation
of the architecture through parameter-perturbation, latent-flow,
reconstruction error, residual-to-solution, and numerical integration bounds.
The resulting estimates identify parameter-dependent initialization,
latent-vector-field stability, reconstruction regularity, and ODE-solver
accuracy as quantities governing the finite-horizon prediction. 
Across the recorded
1D CDR and 2D Navier--Stokes experiments, DLDMF attains lower errors than the evaluated baselines in the main in-parameter and temporal-extrapolation tasks. 

Further development should address parameter-space coverage, physical
structure preservation, and solver-dependent computational cost. Promising
directions include boundary-aware parameter sampling, regularization of
parameter sensitivity, stiffness-aware or structure-preserving latent integrators,
and conservative or geometry-aware decoders. Multi-seed uncertainty analysis,
a posteriori error estimation, and evaluation on three-dimensional,
multiphysics, and multiscale systems will be necessary to establish the scope
of DLDMF beyond the finite parameter domains and temporal horizons considered
in the present study.
Finally, far-parameter temporal
prediction remains substantially more challenging, and variability across
re-trained models remains to be quantified.

\section*{Data and code availability}
The benchmark data in the study are generated from the PDE specifications
and numerical solvers described in the manuscript.  The anonymized source code,
data-generation scripts, configuration files, reference-solver settings,
trained-checkpoint metadata, and evaluation scripts will be made available for
peer review through an anonymous repository link at submission.  The repository
is intended to contain one command for reproducing each presented table and
figure.  A DOI-linked archival version will be deposited upon acceptance.

\section*{CRediT authorship contribution statement}
Zhangyong Liang: Conceptualization, Methodology, Software, Formal analysis,
Investigation, Visualization, Writing -- original draft.  Huanhuan Gao:
Conceptualization, Methodology, Supervision, Validation, Writing -- review \&
editing.

\section*{Declaration of competing interest}
The authors declare that they have no known competing financial interests or
personal relationships that could have appeared to influence the work described
in the paper.

\section*{Funding}
The work received funding from the National Natural Science Foundation of China
[grant number 12572138]. The funder had no role in study design, data
collection and analysis, decision to publish, or preparation of the manuscript.

\section*{Ethics approval}
The computational study does not involve human participants, human data, or
animal experiments.

\section*{Declaration of generative AI and AI-assisted technologies}
During manuscript preparation, AI-assisted tools contributed to language editing
and editorial revision.  The authors reviewed and edited the content and take
full responsibility for the final manuscript.

\bibliographystyle{elsarticle-num-names}

\bibliography{cas-refs}

\end{document}